\def\eqref#1{equation~\ref{#1}}
\def\1{\bm{1}}
\DeclareMathAlphabet{\mathsfit}{\encodingdefault}{\sfdefault}{m}{sl}
\SetMathAlphabet{\mathsfit}{bold}{\encodingdefault}{\sfdefault}{bx}{n}
\lstdefinestyle{python}{
    language=Python,
    basicstyle=\ttfamily\small,
    keywordstyle=\color{blue}\bfseries,
    commentstyle=\color{green},
    stringstyle=\color{red},
    numberstyle=\tiny\color{gray},
    showstringspaces=false,
    frame=single,
    breaklines=true,
    backgroundcolor=\color{lightgray!20}
}
\lstdefinestyle{bash}{
    language=Bash,
    basicstyle=\ttfamily\small,
    backgroundcolor=\color{black!5}, % A light gray background
    frame=single,                    % A single frame around the code
    breaklines=true,                 % Allows lines to break
    postbreak=\mbox{\textcolor{red}{$\hookrightarrow$}\space}, % Shows line continuation
    showstringspaces=false
}
\definecolor{royalblue}{RGB}{65, 105, 225}
\definecolor{softgreen}{RGB}{85, 170, 85} % Softer green
\definecolor{softred}{RGB}{200, 50, 50}   % Softer red
\newtheorem{proposition}{Proposition}
\newtheorem{lemma}{Lemma}
\newtheorem{definition}{Definition}
\definecolor{mygray}{gray}{0.95}
\newcommand{\ours}{Entropy Regularizing Activation}
\newcommand{\short}{ERA}
\title{Entropy Regularizing Activation: Boosting Continuous Control, Large Language Models, and Image Classification with Activation as Entropy Constraints}
\author{\textbf{Zilin Kang}$^{1,2}$\thanks{Equal Contribution}~
    \textbf{Chonghua Liao}$^{3*}$
    \textbf{Tingqiang Xu}$^{3*}$ 
    \textbf{Huazhe Xu}$^{1,3,4}$  \\
    $^{1}$Shanghai Qi Zhi Institute \\
    $^{2}$Department of Computer Science and Technology, Tsinghua University \\
    $^{3}$Institute for Interdisciplinary Information Sciences, Tsinghua University\\
    $^{4}$Shanghai Artificial Intelligence Laboratory\\
    \texttt{\{kzl22,lch22,xtq23\}@mails.tsinghua.edu.cn}
}
\begin{document}

\maketitle
\vskip -0.2in
\begin{abstract}
We propose \short, a new paradigm that constrains the sampling entropy above given thresholds by applying specially designed activations to the outputs of models. Our approach demonstrates broad effectiveness across different domains: 1) for large language models~(LLMs), boosting the AIME 2025 score for Qwen2.5-Math-7B by 37.4\%; 2) for continuous control reinforcement learning agents, improving performance by more than 30\% over strong baselines such as SAC on the challenging HumanoidBench; 3) for image classification, enhancing ImageNet top-1 accuracy by 0.69\% for ResNet-50. These gains are achieved with a computational overhead of less than 7\%. Our work validates output activation as a powerful tool for entropy control, opening a new direction for designing simpler and more robust algorithms. Code available at: \href{https://nothingbutbut.github.io/era}{\textit{https://nothingbutbut.github.io/era}}
\vskip -0.1in
\end{abstract}

\section{Introduction}

%\kz{I think it's better to talk about the importance of constraining the sample entropy at the beginning, plus it may not be widely acceptable that LLM is a decision making problem?}
\label{sec:introduction}
Decision-making problems represent a broad class of challenges, from robotic control to Large Language Models alignment
~\citep{sutton1998reinforcement,ouyang2022training,kober2013reinforcement,yuan2025hermes}. In these settings, encouraging exploration and maintaining policy stochasticity, often quantified by entropy, is critical~\citep{ziebart2008maximum,schulman2017proximal}. In reinforcement learning, the maximum entropy paradigm, exemplified by algorithms like Soft Actor-Critic (SAC)~\citep{haarnoja2018soft}, has become a prevailing approach in control tasks. However, these methods, which add an entropy bonus directly to the training objective, inevitably alter the optimization landscape and can interfere with the optimization of the primary objective.

The challenge becomes even more pronounced in LLM alignment. Policy gradient methods~\citep{NIPS1999_464d828b} such as GRPO~\citep{shao2024deepseekmath} frequently suffer from entropy collapse~\citep{cui2025entropy}, leading to reduced diversity and performance degradation. Directly incorporating entropy bonuses has been shown to be unstable or ineffective in this setting~\citep{cui2025entropy}. Moreover, prior works have explored methods that avoid direct modification of the loss function, including clip-higher~\citep{yu2025dapo} and training exclusively on the high-entropy tokens \citep{wang2025beyond}. While these methods provide useful insights, they remain ad hoc, lack a principled mechanism for entropy regulation, and are narrowly tailored to the LLM domain, limiting their applicability to broader settings such as continuous control and computer vision tasks.

These observations highlight a fundamental gap: existing approaches either distort the primary optimization objective, as in RL algorithms with entropy bonus terms, or provide heuristic, domain-specific fixes with no theoretical guarantees, as in LLM alignment. Therefore, there is a pressing need for a new entropy-constraining paradigm that is universally applicable, non-invasive to the primary objective, and theoretically grounded.

In this work, we introduce \textbf{E}ntropy \textbf{R}egularizing \textbf{A}ctivation~(\short ), a novel paradigm for entropy-constrained training. The key insight of \short\ lies in imposing an entropy constraint through a class of well-designed activation functions applied to the model's final output. This approach completely decouples the optimization of the primary objective from the entropy constraint, allowing the loss function to focus solely on its original goal (e.g., maximizing rewards). We show that \short\ not only provides provable entropy guarantees in theory, but in practice, it functions as a non-invasive module that can be seamlessly integrated with existing algorithms. 

The generality and effectiveness of this paradigm are validated across diverse domains, including continuous control, image classification, and large language models. 
%Although the central idea remains consistent, the specific implementation is adapted to address the unique characteristics of each domain, such as their distinct output structures. 
For example, on the DeepMind Control Suite~\citep{tassa2018deepmind}, \short\ improves the performance of SAC on high-dimensional tasks like Humanoid and Dog by over 25\%. Its versatility is also demonstrated in image classification, a domain where preventing model overconfidence via regularization is critical. Our approach complements established methods, boosting performance on top of strong data augmentation and label smoothing~\citep{szegedy2016rethinking}. In LLM RL, \short\ enables a GRPO-trained Qwen-2.5-Math-7B~\citep{yang2024qwen2} to achieve a remarkable improvement of 9.0\% and 37.4\% on the AIME-24 and AIME-25 benchmarks, respectively.

Our main contributions are summarized as follows: %\kz{polish }
\begin{itemize}[leftmargin=0.5cm]
\item We introduce \textbf{\short, a novel entropy constraint paradigm} based on activation functions, and establish a theoretical framework with provable entropy guarantees.
%\item We design effective instantiations of \short\ for both continuous (control) and discrete (image classification, LLMs) domains, \textbf{demonstrating significant performance improvements} over strong baselines.
%\item For large language models, we propose a \textbf{specialized, adaptive variant of \short{}} that addresses the unique challenges within this domain.
\item We design effective instantiations of \short\ for both continuous (control) and discrete (image classification) domains. For large language models, we propose a \textbf{specialized, adaptive variant of \short\ that addresses the unique challenges} within this domain.
\item Our experiments of these instantiations \textbf{demonstrate significant performance improvements} over strong baselines across domains, and reveal their properties such as parameter sensitivity.
\end{itemize}

\begin{figure*}[t]
    \centering
    \vskip -0.25in
    % --- Subfigure (a) ---
    \begin{subfigure}[b]{0.38\textwidth}
        \centering
        \includegraphics[width=\textwidth]{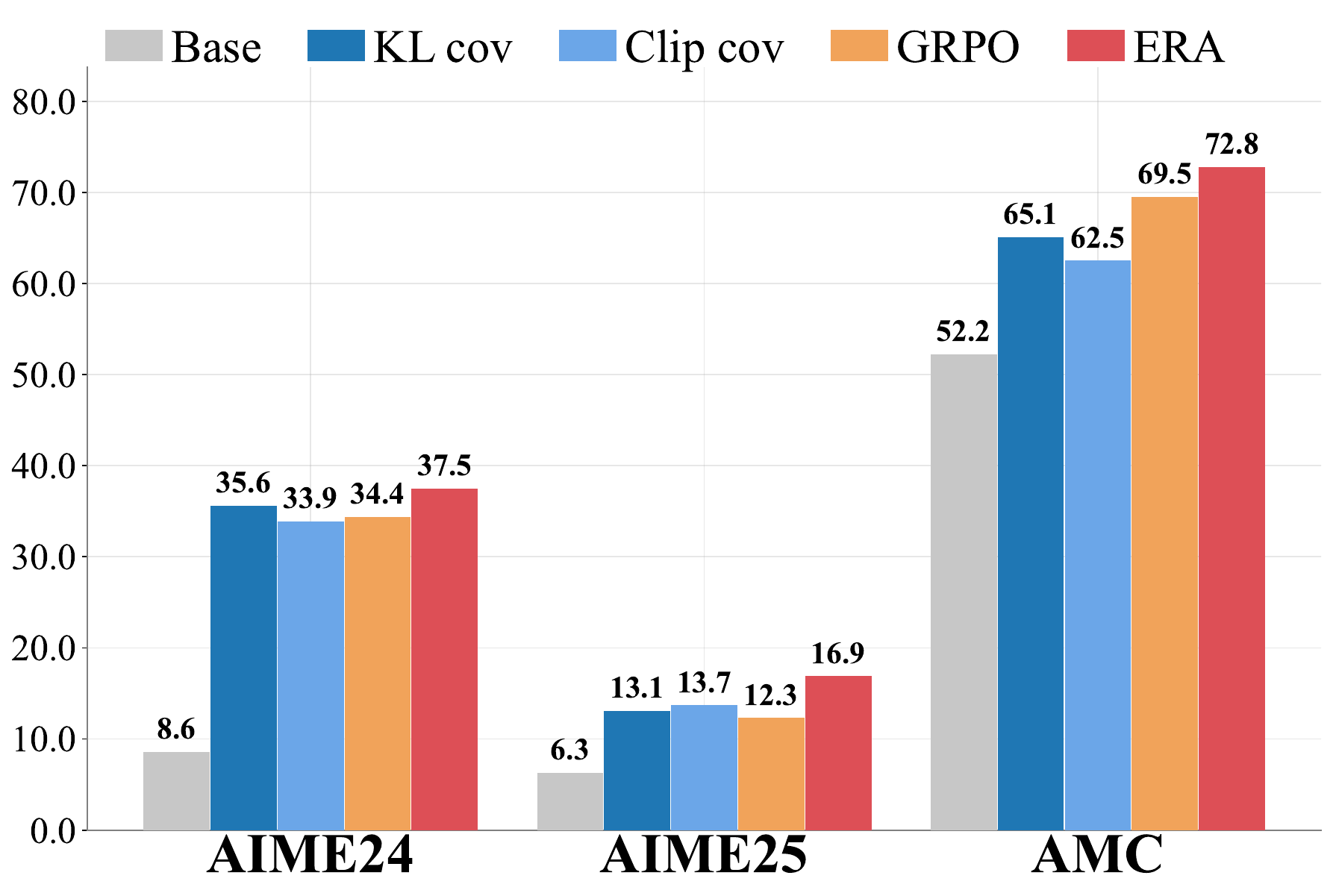}
        \subcaption{}
        \label{fig:teaser_llm}
    \end{subfigure}
    %\hfill
    % --- Subfigure (b) ---
    \begin{subfigure}[b]{0.285\textwidth}
        \centering
        \includegraphics[width=\textwidth]{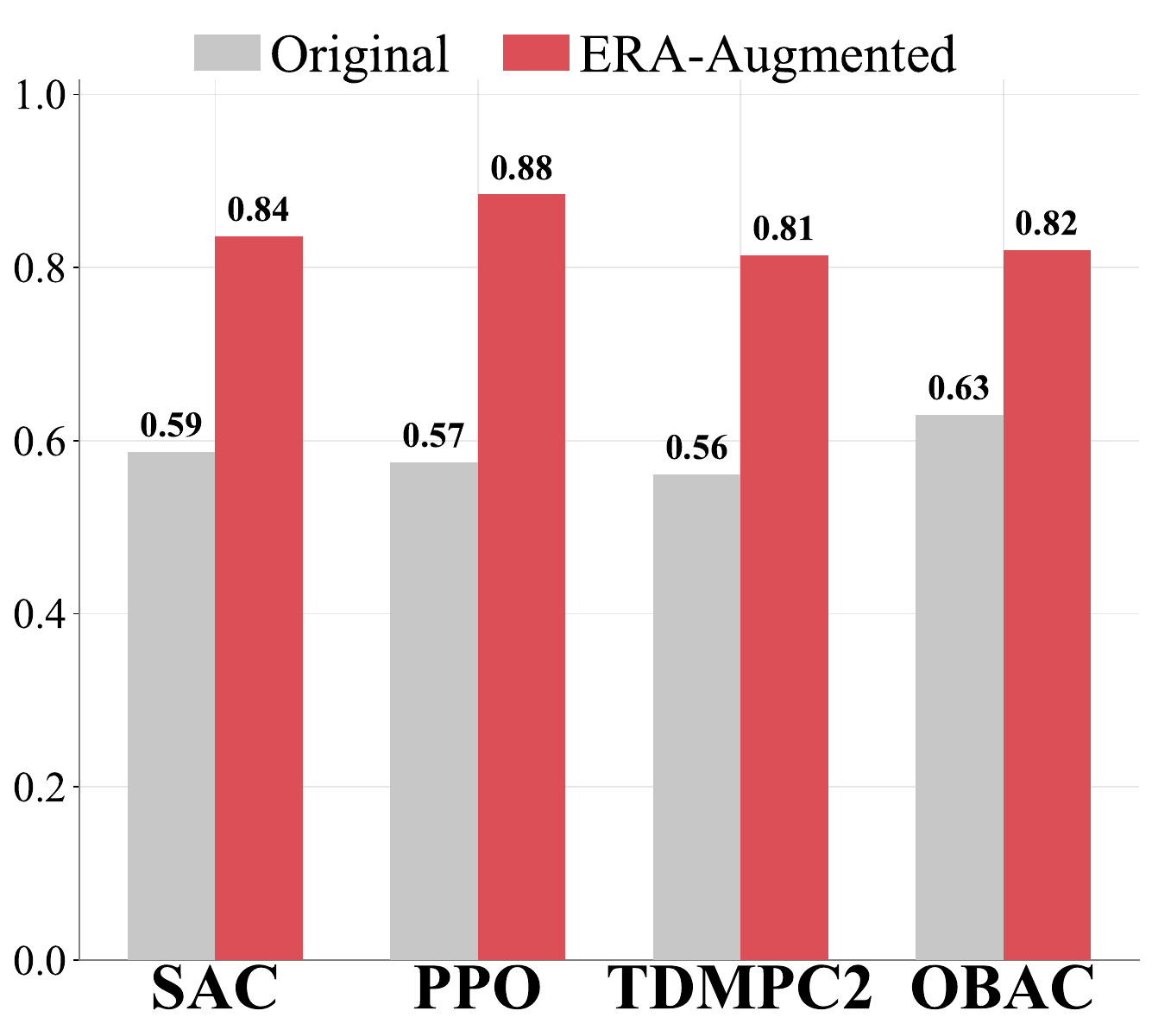}
        \subcaption{}
        \label{fig:teaser_control}
    \end{subfigure}
    %\hfill
    % --- Subfigure (c) ---
    \begin{subfigure}[b]{0.285\textwidth}
        \centering
        \includegraphics[width=\textwidth]{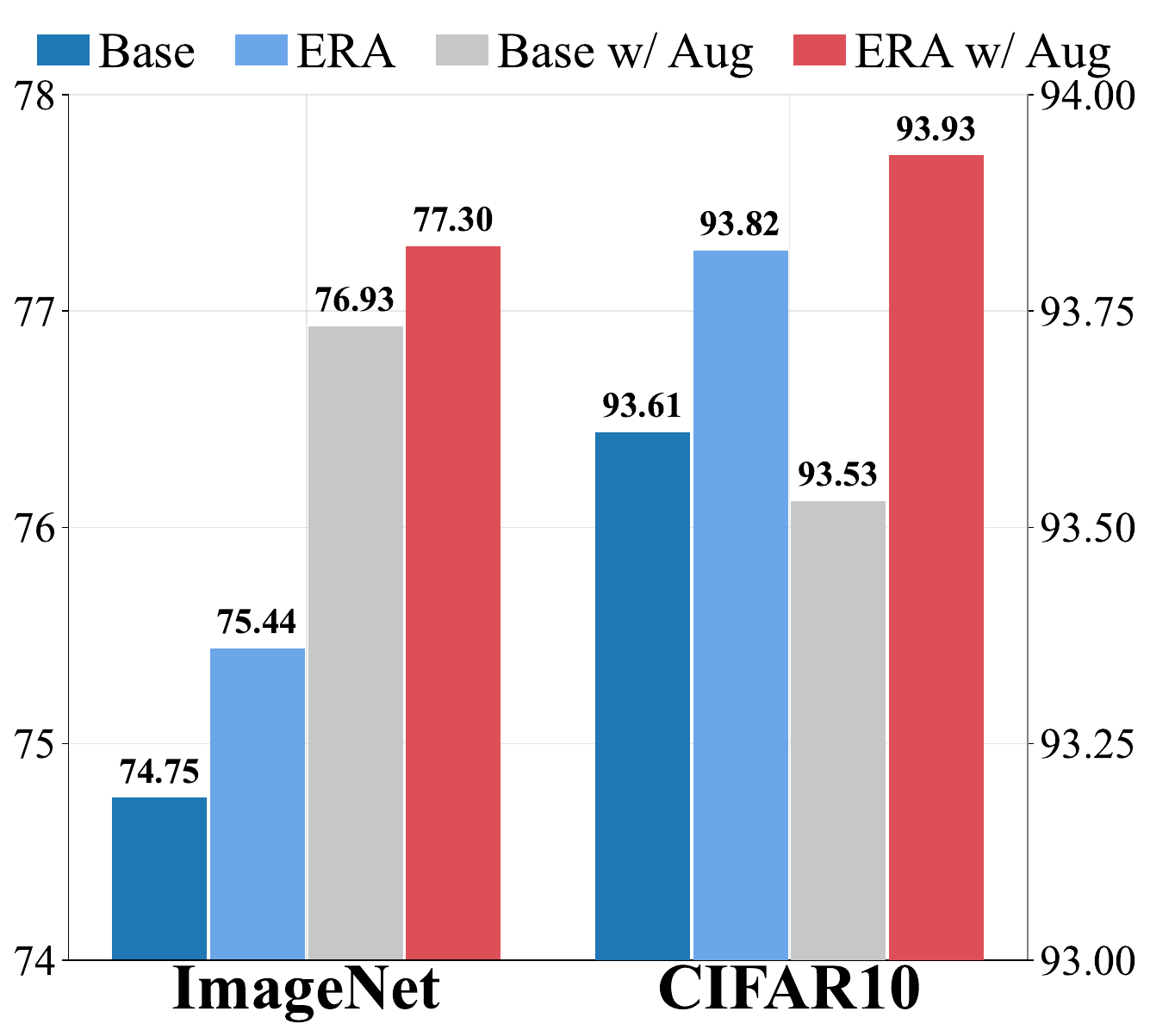}
        \subcaption{}
        \label{fig:teaser_vision}
    \end{subfigure}
    \vskip -0.1in
    \caption{
        \textbf{ERA Boosts Large Language Models, Continuous Control and Image Classification.} 
        (a) \textbf{Large Language Models:} ERA consistently enhances the performance of Qwen-2.5-Math-7B on AIME'24,AIME'25 and AMC datasets.
        (b) \textbf{Continuous Control:} ERA significantly improves multiple popular RL algorithms, including SAC, PPO, TD-MPC2 and OBAC.
        (c) \textbf{Image Classification:} ERA consistently boosts the performance of ResNet-50 on ImageNet and CIFAR-10 datasets.
    }
    \label{fig:teaser}
    \vskip -0.2in
\end{figure*}
\section{Related Work}
\label{sec:related_work}
% \citep{ziebart2008maximum,ppo,haarnoja2018soft}
% \paragraph{Entropy in Reinforcement Learning}
% Entropy maximization is a crucial aspect of RL, significantly enhancing exploration and robustness~\citep{ziebart2010modeling,haarnoja2017reinforcement}. Prior work has explored various methods to incorporate entropy maximization into RL algorithms~\citep{o2016combining,nachum2017bridging,haarnoja2017reinforcement}. Notably, methods such as PPO~\citep{ppo}, SAC~\citep{haarnoja2018soft}, and TD-MPC2~\citep{hansen2023td} have introduced entropy terms into the training objective. These methods typically add an entropy bonus to the reward function or the optimization objective to encourage exploration. In LLM reasoning, methods like reward shaping~\citep{cheng2025reasoning}, loss re-weighting~\citep{wang2025beyond,cui2025entropy} and clip-higher regularization~\citep{yu2025dapo} are proposed to tackle this issue. In summary, while entropy maximization is essential for effective exploration, existing methods often modify the original training objective, leading to instability and suboptimal performance.
\paragraph{Policy learning in control.}
Entropy maximization is a crucial aspect of RL, significantly enhancing exploration and robustness~\citep{ziebart2010modeling,haarnoja2017reinforcement}. Prior work has explored various methods to incorporate entropy maximization into RL algorithms~\citep{o2016combining,nachum2017bridging,haarnoja2017reinforcement}. PPO~\citep{ppo} introduced an entropy bonus in its clipped surrogate objective. SAC~\citep{haarnoja2018soft} later employed a maximum-entropy objective with a dynamically adjusted temperature parameter, but this can lead to instability. 
%More recently, TD-MPC2~\citep{hansen2023td} applied a constant entropy penalty during model-based roll-outs. 
More recent approaches have introduced alternative methodologies for implementing maximum entropy RL~\citep{chao2024maximumentropyreinforcementlearning,choe2024maximumentropyonpolicyactorcritic}, while others have shifted the optimization focus directly to state entropy~\citep{zhong2024maximum}.
All these methods, while effective, modify the original cumulative reward objective by introducing the entropy term, which can lead to suboptimal performance. Our approach addresses this issue by maintaining the original objective, ensuring more reliable performance. 
%\kz{there are just so much entropy regularization works in RL and RLHF.. maybe we should be more focus here? is there any entropy regularization via network design or activation network?}

\paragraph{RL for LLMs.}
Recent breakthroughs in LLM reasoning, such as OpenAI-o1~\citep{jaech2024openai}, DeepSeek-R1~\citep{guo2025deepseek}, and Kimi-k1.5~\citep{team2025kimi}, have redirected attention from chain-of-thought prompting~\citep{wei2022chain} and supervised fine-tuning~\citep{li2024common,yeo2025demystifying} toward RL. Within this paradigm, policy entropy collapse emerges as a fundamental obstacle: the decay of exploratory behavior often leads to performance plateaus. A prevalent approach is reward shaping~\citep{cheng2025reasoning}, which augments the reward or advantage with an entropy bonus to maintain a viable exploration–exploitation trade-off. Complementary strategies, including loss re-weighting~\citep{wang2025beyond,cui2025entropy} and clip-higher regularization~\citep{yu2025dapo}, mitigate the risk of entropy collapse. Unlike these approaches, our method is a general and concise paradigm, universally applicable across domains and endowed with rigorous theoretical guarantees.

\section{Preliminaries}
\label{sec:preliminaries}

%\textbf{Markov Decision Process.}
%We consider a Markov Decision Process (MDP)~\citep{bellman1957markovian} defined by the tuple $(\mathcal{S}, \mathcal{A}, P, R)$, where $\mathcal{S}$, $\mathcal{A}$ are the state and action spaces, $P$ is the transition dynamics, and $R$ is the reward function. A policy $\pi_\theta(a_t | s_t)$ parameterized by $\theta$ aims to maximize the expected discounted return:
%\begin{equation}
%\label{eq:std_rl_objective}
%J(\pi_\theta) = \mathbb{E}_{\tau \sim \pi_\theta} \left[ \sum_{t=0}^{T} \gamma^t R(s_t, a_t) \right],
%\end{equation}
%where $\gamma \in [0, 1)$ is the discount factor.

\textbf{Policy optimization.}
Policy gradient (PG) methods optimize $J(\pi_\theta) = \mathbb{E}_{\tau \sim \pi_\theta} \left[ \sum_{t=0}^{T} \gamma^t R(s_t, a_t) \right]$ via gradient ascent. For large language model~(LLM) alignment, Proximal Policy Optimization (PPO)~\citep{schulman2017proximal} is commonly used. The GRPO variant estimates the advantage $A(y)$ for a generated response $y$ from a set of $K$ samples as:
\begin{equation}
\label{eq:grpo_advantage}
A(y) = \frac{r(y) - \text{mean}(r(y^{1:K}))}{\text{std}(r(y^{1:K}))}.
\end{equation}
The policy is then updated using the clipped surrogate objective:
\begin{equation}
\label{eq:ppo_objective}
\mathcal{L}^{\text{CLIP}}(\theta) = \mathbb{E}_t \left[ \min\left( r_t(\theta) A_t, \text{clip}(r_t(\theta), 1-\epsilon, 1+\epsilon) A_t \right) \right],
\end{equation}
where $r_t(\theta) = \frac{\pi_\theta(a_t|s_t)}{\pi_{\theta_{\text{old}}}(a_t|s_t)}$ is the probability ratio.

\textbf{Policy entropy.}
Policy entropy, $\mathcal{H}(\pi(\cdot|s))$, measures the policy's stochasticity.
%Let $\rho_\pi$ be the state visitation distribution induced by the policy.
For discrete action spaces, the token-level entropy is given by Eq.~\ref{eq:discrete_entropy}.
For continuous policies, there are several common ways to ensure actions remain within a bounded space.
A popular method is to use a squashed Gaussian policy, which outputs a bounded action $a = \tanh(u)$ by sampling $u$ from a Gaussian distribution $\pi_\theta(\cdot|s)=\mathcal{N}(\mu_\theta(s), \Sigma_\theta(s))$ parameterized by the policy network. The entropy of this policy is given by Eq.~\ref{eq:squashed_gaussian_entropy}.
Alternatively, another common approach is to directly sample actions from a Truncated Gaussian distribution $\pi_\theta(\cdot|s)=\text{TN}(\mu_\theta(s), \Sigma_\theta(s), -1, 1)$ over the bounded hypercube $[-1, 1]^D$. Assuming the dimensions are independent, its entropy is given by Eq.~\ref{eq:truncated_gaussian_entropy}.
\begin{align}
\label{eq:discrete_entropy}
\mathcal{H}(\pi_\theta) &= - \mathbb{E}_{x \sim \rho_\pi, y \sim \pi_\theta(x)} \left[ \frac{1}{|y|} \sum_{t=1}^{|y|} \log \pi_\theta(y_t | y_{<t}, x) \right], \\
\label{eq:squashed_gaussian_entropy}
\mathcal{H}(\pi_\theta)  &= \mathbb{E}_{s \sim \rho_\pi, u \sim \mathcal{N}(\mu_\theta(s), \Sigma_\theta(s))}\left[-\log\mathcal{N}(u|\mu_\theta(s), \Sigma_\theta(s)) + \sum_{i=1}^D \log(1 - \tanh(u_i)^2)\right], \\
\label{eq:truncated_gaussian_entropy}
\mathcal{H}(\pi_\theta) &= \mathbb{E}_{s \sim \rho_\pi} \left[ \sum_{i=1}^{D} \left( \log(\sigma_{\theta,i}(s) Z_i(s) \sqrt{2\pi e}) - \frac{\beta_i(s) \phi(\beta_i(s)) - \alpha_i(s) \phi(\alpha_i(s))}{2Z_i(s)} \right) \right]
\end{align}
where for the truncated Gaussian entropy in Eq.~\ref{eq:truncated_gaussian_entropy}, $\phi$ and $\Phi$ are the PDF and CDF of the standard normal distribution, respectively. 
%For each dimension $i$ and state $s$, $\sigma_{\theta,i}(s)$ is the standard deviation, and we 
We define the standardized bounds $\alpha_i(s) = (-1 - \mu_{\theta,i}(s))/\sigma_{\theta,i}(s)$, $\beta_i(s) = (1 - \mu_{\theta,i}(s))/\sigma_{\theta,i}(s)$, and the normalization constant $Z_i(s) = \Phi(\beta_i(s)) - \Phi(\alpha_i(s))$.

\textbf{Maximum entropy reinforcement learning.}
Building upon policy entropy, the maximum entropy RL framework aims to maximize the standard reward objective subject to a minimum entropy constraint $\mathcal{H}_0$:
\begin{equation}
\label{eq:constrained_objective}
\max_{\theta} J(\pi_\theta) \quad \text{s.t.} \quad \mathbb{E}_{s \sim \rho_\pi}[\mathcal{H}(\pi_\theta(\cdot|s))] \ge \mathcal{H}_0.
\end{equation}
Practical algorithms like Soft Actor-Critic (SAC)~\citep{haarnoja2018soft} solve the Lagrangian dual of this problem. SAC is an off-policy actor-critic algorithm that updates a soft Q-function $Q_\phi$ and a policy $\pi_\theta$. The Q-function is updated by minimizing the soft Bellman residual $J_Q(\phi)$:
\begin{equation}
\label{eq:sac_q_loss}
J_Q(\phi) = \mathbb{E}_{(s_t, a_t, s_{t+1}) \sim \mathcal{D}} \left[ \frac{1}{2} \left( Q_\phi(s_t, a_t) - y \right)^2 \right]
\end{equation}
\begin{equation}
\label{eq:sac_q_target}
y = R(s_t, a_t) + \gamma \mathbb{E}_{a_{t+1} \sim \pi_{\theta}(\cdot|s_{t+1})} \left[ Q_{\phi'}(s_{t+1}, a_{t+1}) - \alpha \log \pi_{\theta}(a_{t+1}|s_{t+1}) \right]
\end{equation}
with the target $y$ computed using a target Q-network $Q_{\phi'}$. The target network parameters $\phi'$ are updated via an exponential moving average (EMA): $\phi' \leftarrow \tau \phi + (1-\tau)\phi'$.
\begin{equation}
\label{eq:sac_policy_loss}
J_\pi(\theta) = \mathbb{E}_{s_t \sim \mathcal{D}, a_t \sim \pi_\theta} \left[ Q_\phi(s_t, a_t) - \alpha \log \pi_\theta(a_t|s_t)  \right].
\end{equation}
The policy is then updated by maximizing the objective in Eq.~\ref{eq:sac_policy_loss}.

\section{The Entropy Regularizing Activation}
\label{sec:method}
\subsection{The Core Idea: Entropy Constraint via Output Activation}
\label{subsec:core_idea}
The core of \ours\ is to enforce maximum entropy reinforcement learning on the policy, not through a loss penalty, but via integrating the constraint into the network's architecture via a special activation function.

Let a parameterized policy $f_\theta(s)$ produce distribution parameters $z = f_\theta(s)$, where $z$ belongs to a parameter space $\mathcal{Z}$. The policy corresponding to these parameters is $\pi_z(\cdot|s)$. We introduce an activation function $g: \mathcal{Z} \to \mathcal{Z}$, which transforms the initial parameters $z$ to a new set $z' = g(z)$. The final policy, which we denote as $\pi_\theta$, is thus given by $\pi_\theta(\cdot|s) = \pi_{g(f_\theta(s))}(\cdot|s)$. The function $g(.)$ is designed to ensure that the policy $\pi_\theta$ satisfies a constraint on its expected entropy, for a given target entropy $\mathcal{H}_0$:
$$ \mathbb{E}_{s \sim \rho_\pi}[\mathcal{H}_{\pi_\theta(\cdot|s)}] \geq \mathcal{H}_0 $$

This formulation enables the policy to satisfy the expected entropy condition while leaving the training objective for $\theta$ free of an explicit entropy term, as shown in Eq.~\ref{eq:constrained_objective}. This approach effectively mitigates gradient conflicts between the task objective and the entropy maximization objective, allowing the optimization to focus on the primary objective.
\subsection{Instantiations for Continuous and Discrete Spaces}
\label{subsec:instantiation_foundational}
To ground the general framework presented in section~\ref{subsec:core_idea}, we now instantiate the entropy regularizing activation~$g(.)$ for two canonical policy classes: policies based on a bounded Gaussian distribution, such as the Tanh-squashed Gaussian~\citep{haarnoja2018soft} or the clipped Gaussian~\citep{fujimoto2018addressingfunctionapproximationerror}, commonly used in continuous control; and the softmax policy prevalent in discrete spaces.

\subsubsection{Continuous Control with Bounded Gaussian Policies}
\label{subsubsec:instantiation_continuous}
In continuous control, policies often sample actions from a Gaussian distribution and then apply a bounding function (e.g., a $\tanh$ squash or clipping) to ensure outputs lie within a valid range. This bounding operation complicates direct entropy maximization, as it introduces a state-dependent bias term. Prior methods typically address this by adding an entropy bonus to the learning objective. Our insight is that the entropy of the final bounded policy, $\mathcal{H}_{\pi}$, can be seen as the entropy of the original unbounded Gaussian, $\mathcal{H}_{\text{Gaussian}}$, minus a non-negative bias term introduced by the bounding operation, i.e., $\mathcal{H}_{\pi} = \mathcal{H}_{\text{Gaussian}} - \mathbb{E}[\text{bias}]$. Consequently, a minimum entropy constraint on the final policy can be satisfied by constraining the underlying Gaussian's entropy to a corresponding, higher value. This is achieved by adjusting the Gaussian's standard deviation, $\sigma$. The entropy of a $D$-dimensional Gaussian with a diagonal covariance matrix is:
\begin{equation}
    \mathcal{H}_{\text{Gaussian}}(s) = \frac{1}{2} \sum_{i=1}^{D} \log(2\pi e \sigma_i(s)^2)
    \label{eq:gaussian_entropy}
\end{equation}
To maintain training stability, the standard deviation must also be kept within a predefined range $[\sigma_{\min}, \sigma_{\max}]$. Our activation function $g(.)$ simultaneously satisfies both constraints. Given network outputs for the mean $\mu$ and a pre-activation standard deviation $\hat{\sigma}$, the function $g(\mu, \hat{\sigma})$ produces the final parameters $(\mu', \sigma')$ where:
\begin{equation}
    \mu' = \mu, \quad \sigma' = \exp{\left[\max\left(\log \sigma_{\max} + (\mathcal{H}_0' -D\log \sqrt{2\pi e} - D \log \sigma_{\max})\frac{e^{\hat{\sigma}_i}}{\sum_{j=1}^{D}e^{\hat{\sigma}_j}} , \log\sigma_{\min}\right)\right]}
    \label{eq:continuous_era}
\end{equation}
Here, $\mathcal{H}_0'$ is the target entropy for the final policy~$\mathcal{H}_0$ plus a compensation parameter $\delta \geq 0$ to account for the bounding bias, which can either be a constant or automatically tuned by learning with the loss in Eq.~\ref{eq:residual_loss}.
\begin{equation}
    \mathcal{L}(\hat{\delta}) = \mathbb{E}_{s \sim \mathcal{D}} \left[\hat{\delta}(\mathcal{H}[\pi(\cdot|s)] - \mathcal{H}_0)\right]
    \label{eq:residual_loss}
\end{equation}
We refer the reader to Appendix~\ref{subsec:impl_continuous} for implementation details and Appendix~\ref{subsec:proofs_continuous} for a proof of the entropy bound.

By satisfying the entropy constraint architecturally, our method obviates the need for an explicit entropy term in the objective function. Hence, target of the critic and the actor loss of SAC in Eq.~\ref{eq:sac_q_target} and Eq.~\ref{eq:sac_policy_loss} can be simplified to the form in Eq.~\ref{eq:simplified_sac_q_target} and Eq~\ref{eq:simplified_sac_policy_loss}
\begin{align}
\label{eq:simplified_sac_q_target}
y &= R(s_t, a_t) + \gamma \mathbb{E}_{a_{t+1} \sim \pi_{\theta}(\cdot|s_{t+1})} \left[ Q_{\phi'}(s_{t+1}, a_{t+1}) \cancel{- \alpha \log \pi_{\theta}(a_{t+1}|s_{t+1})} \right] \\
\label{eq:simplified_sac_policy_loss}
J_\pi(\theta) &= \mathbb{E}_{s_t \sim \mathcal{D}, a_t \sim \pi_\theta} \left[ Q_\phi(s_t, a_t) \cancel{- \alpha \log \pi_\theta(a_t|s_t)}  \right]
\end{align}

\subsubsection{Discrete Classification with Softmax Policies}
\label{subsubsec:instantiation_discrete}
In discrete classification, regularizing the predictive entropy is crucial for preventing the overconfidence that leads to overfitting. \short\ provides architectural regularization by enforcing a minimum entropy level, analogous to how techniques like label smoothing improve generalization by smoothing the output distribution. For a softmax policy, we enforce this constraint by transforming the pre-activation logits $z$ into $z'$ such that the resulting policy's entropy is at least $\mathcal{H}_0$:
\begin{equation}
    z' = h^{-1}\left[\max \left(\frac{\log \tau}{\tau} + \left(C_{\mathcal{H}_0} - n \frac{\log \tau}{\tau}\right)\frac{1}{D-1}\left(1 - \frac{e^{z_i}}{\sum_{j=1}^{D}e^{z_j}}\right), 0\right)\right]
    \label{eq:discrete_era}
\end{equation}
Here, $h^{-1}$ denotes the inverse of $-xe^x$ on $[0, \frac{1}{e}]$, approximated by $\hat{h}^{-1}(x) = -\frac{1}{4}-\sqrt{2(-1-\ln (x))}+\frac{3}{4}\ln x$. We also define $C_{\mathcal{H}_0} = \exp(\mathcal{H}_0 - 1)$, where $\tau \ge e$ is a fixed hyperparameter (e.g., $\tau = 4$). A formal proof is provided in Appendix~\ref{subsec:proofs_discrete}.

In contrast to label smoothing, which applies a fixed and uniform regularization, \short~offers greater flexibility. It allows the model to learn a structured, input-dependent uncertainty distribution, tailoring the regularization to each sample and thus offering greater expressive capacity and potential for improved performance.
\subsection{Instantiations for RL in Large Language Models}
\label{subsec:instantiation_llms}

In reinforcement learning for LLMs, each token is treated as a discrete action, with the policy defined by a canonical softmax distribution. Prior approaches to addressing entropy collapse in LLMs—such as the traditional entropy bonus, clip-higher
%~\citep{yu2025dapo}
, KL-Cov, and Clip-Cov
%~\citep{cui2025entropy}
—do not provide a provable entropy lower bound, and are incompatible with the on-policy setting. In contrast, our method introduces ERA, a simple and non-invasive activation function that offers a theoretical guarantee of a minimum entropy level, effectively resolving entropy collapse in on-policy reinforcement learning.

In contrast to standard RL settings, the action space is extremely large. In the previous ERA instantiation, each token has a lower entropy bound. However, due to the intrinsic structure of natural language, most tokens are nearly deterministic; therefore, directly enforcing high entropy across all tokens is impractical: it will lead to unintended tokens and can corrupt the entire response. Furthermore, modifying the internal structure of the model also introduces instability in different training environments, leading to unpredictable behavior.

To address these challenges, we propose a new instantiation of ERA that is applied \emph{after} the sampling process. Specifically, responses are first generated using the original model output $z$, and the advantages are computed following the GRPO rule. Then, during model updates, the probabilities of the sampled tokens are reinterpreted as $z'$, obtained by applying our entropy-regularized activation. This design leaves the sampling policy unchanged while still ensuring effective entropy regularization.

Formally, when updating model parameters, we apply an activation layer to the logits $z$ to obtain a transformed set $z'$, defined as:  
\begin{equation}
    z' = \begin{cases}
        kz & H_{\text{resp}} < \omega_{\text{low}},\; A_{t}>0, \\
        z & \omega_{\text{low}} \leq H_{\text{resp}} \leq \omega_{\text{high}},\; A_{t}<0 \text{ or } A_{t}>0, \\
        \tfrac{1}{k}z & H_{\text{resp}} > \omega_{\text{high}},\; A_{t}>0,
    \end{cases}\label{eq:llm}
\end{equation}
where $k>1$, and $\omega_{\text{low}}, \omega_{\text{high}}$ are algorithm-specific constants. Here, $A_{t}$ denotes the advantage of the token, and $H_{\text{resp}}$ is the average entropy of the top $20\%$ of tokens with the highest entropy in the response. To balance the gradient between modified tokens and unmodified tokens (details are shown in Appendix~\ref{subsec:proofs_llm}), we add another scaling factor on the advantages of modified tokens:

\begin{equation}
    A_t' = \begin{cases}
        \frac 1k A_t & H_{\text{resp}} < \omega_{\text{low}},\; A_{t}>0, \\
        A_t & \omega_{\text{low}} \leq H_{\text{resp}} \leq \omega_{\text{high}},\; A_{t}<0 \text{ or } A_{t}>0, \\
        kA_t & H_{\text{resp}} > \omega_{\text{high}},\; A_{t}>0,
    \end{cases}
\end{equation}

The on-policy GRPO objective becomes:

\begin{equation}
    J(\theta) = \mathbb{E}_t[\mathbb{E}_{a_t\sim \pi_\theta(\cdot |s_t)}\log \pi_\theta'(a_t|s_t) A'_t]
\end{equation}

where $\pi_\theta$ is the original policy from $z$ (representing that the inference still follows the original policy), and $\pi'_\theta$ is the ERA-adjusted policy from $z'$ (representing that the model update relies on the new policy). Intuitively, this activation layer adjusts all positively advantaged responses: when entropy is too low, it sharpens the probability distribution; when entropy is too high, it flattens it. Unlike our instantiation for control tasks, increasing policy entropy here requires \emph{sharpening} the distribution. The rationale is that sampling has already occurred, and by treating the samples as if they were drawn from a sharpened policy, the model perceives itself as overexploiting, thus encouraging additional exploration. The choice of the top $20\%$ tokens is based on the fact that, in natural language, these tokens are considered forking tokens, whose entropy is the target of regularization, and the remaining tokens are allowed to have almost zero entropy~\citep{wang2025beyond}.

We show that, under reasonable assumptions, this ERA instantiation ensures that the policy entropy remains above a fixed constant $\mathcal{H}_0$. We refer the reader to Appendix~\ref{subsec:proofs_llm} for a formal proof.

\begin{figure*}[t]
    \vskip -0.2in
    \begin{center}
    \centerline{\includegraphics[width=0.98\textwidth]{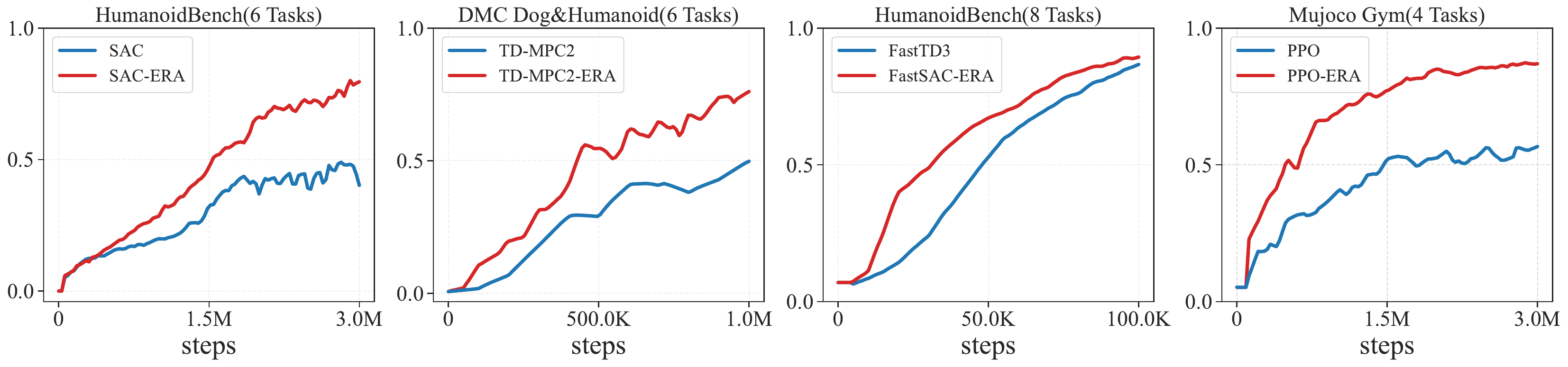}}
    \end{center}
    \vskip -0.25in
    %\caption{\textbf{Performance Preview of \short\ in Continuous Control.} Aggregate normalized performance on HumanoidBench (6 tasks, with SAC), DMC (Humanoid \& Dog) (6 tasks, with TD-MPC2), HumanoidBench (8 tasks, with FastSAC) and Mujoco Gym (4 tasks, with PPO). \short\ consistently accelerates learning and achieves superior asymptotic performance.}
    \caption{\textbf{Main Results of \short\ in Continuous Control.} Aggregate normalized performance on HumanoidBench (6 tasks, with SAC), DMC (Humanoid \& Dog) (6 tasks, with TD-MPC2), HumanoidBench (8 tasks, with FastSAC) and Mujoco Gym (4 tasks, with PPO). \short\ consistently accelerates learning and achieves superior asymptotic performance.}
    \vskip -0.2in
    \label{fig:preview_results}
\end{figure*}
\section{Results and Analysis}
\label{sec:results_foundational}
\subsection{Experiments on Continuous Control}
\label{subsec:exp_control}
We conduct extensive experiments to validate the effectiveness of \short\ in continuous control tasks. 
%Our investigation begins by analyzing its core design choices, such as policy distributions and entropy configurations, to establish a robust configuration. We then demonstrate the broad applicability and performance gains of this configuration by integrating \short\ into five distinct algorithms---SAC, OBAC~\citep{luo2024offline}, TD-MPC2, PPO, and FastSAC~\citep{seo2025fasttd3}. The evaluation is performed on a wide range of challenging benchmarks, including the DeepMind Control Suite (Humanoid \& Dog), HumanoidBench~\citep{sferrazza2024humanoidbench}, and MuJoCo Gym~\citep{todorov2012mujoco}.
We demonstrate the broad applicability and performance gains by integrating \short\ into five distinct algorithms---SAC, OBAC~\citep{luo2024offline}, TD-MPC2, PPO, and FastSAC~\citep{seo2025fasttd3}. The evaluation is performed on a wide range of challenging benchmarks, including the DeepMind Control Suite (Humanoid \& Dog), HumanoidBench~\citep{sferrazza2024humanoidbench}, and MuJoCo Gym~\citep{todorov2012mujoco}.
Implementation details, environment specifics, and hyperparameter settings are available in Appendix~\ref{subsec:impl_continuous}. Comprehensive results for all tasks can be found in the Appendix~\ref{sec:additional_results}.

\textbf{Main results.}
We present our main results in continuous control in Figure~\ref{fig:preview_results}. Integrating \short\ consistently yields significant improvements in both sample efficiency and final performance across diverse algorithms and benchmarks. 
\textbf{\short\ consistently improves performance across various entropy targets.} We evaluate the performance of SAC and SAC-\short\ under varying entropy targets. %We train all agents for 1M environmental steps. 
The results in Figure~\ref{fig:ablation_entropy}, tested on four DMC tasks (\textit{dog-run, dog-trot, humanoid-run, humanoid-walk}) 
%and two MuJoCo Gym tasks (\textit{HalfCheetah-v4, Ant-v4}) with 5 seeds on each environment, show that \short\ consistently outperforms the baselines across the entire tested spectrum of entropy values. By bypassing the entropy constraint within the learning objective, \short\ allows the policy to focus more directly on reward maximization. While simply removing the entropy term from SAC can also avoid this constraint, its performance is inferior to the \short-enhanced version due to insufficient exploration. This consistent outperformance suggests that \short\ can achieve strong results without precise tuning of the entropy hyperparameter, offering a significant practical advantage.
with 5 seeds on each environment, show that SAC-\short\ consistently outperforms original SAC across the entire tested spectrum of entropy values. By bypassing the entropy constraint within the learning objective, \short\ allows the policy to focus more directly on reward maximization. While simply removing the entropy term from SAC can also avoid this constraint, its performance is inferior to the \short-enhanced version due to insufficient exploration. This consistent outperformance suggests that \short\ can achieve strong results without precise tuning of the entropy hyperparameter, offering a significant practical advantage.

\subsection{Experiments on Image Classification}
\label{subsec:exp_vision}
\begin{table}[h!]
\centering
\caption{Top-1 and Top-5 accuracy (\%) on ImageNet and CIFAR-10. We compare \short\ against the original ResNet-50 baseline. $\Delta$ denotes the absolute improvement of \short. All models are trained for 200 epochs.}
\label{tab:main_results_vision}
\resizebox{\linewidth}{!}{%
\begin{tabular}{llcccc|cccc}
\toprule
\multirow{2}{*}{\textbf{Dataset}} & \multirow{2}{*}{\textbf{Method}} & \multicolumn{4}{c|}{\textbf{Without Data Augmentation}} & \multicolumn{4}{c}{\textbf{With Data Augmentation}} \\
\cmidrule(lr){3-6} \cmidrule(lr){7-10}
 & & Top-1 Acc. & $\Delta$ & Top-5 Acc. & $\Delta$ & Top-1 Acc. & $\Delta$ & Top-5 Acc. & $\Delta$ \\
\midrule
\multirow{2}{*}{ImageNet} & Original & 74.75 $\pm$ 0.38 & - & 92.04 $\pm$ 0.23 & - & 76.93 $\pm$ 0.36 & - & 93.37 $\pm$ 0.21 & - \\
& \texttt{\short} & \textbf{75.44 $\pm$ 0.37} & \textcolor[RGB]{34,120,5}{+0.69} & \textbf{92.15 $\pm$ 0.23} & \textcolor[RGB]{34,120,5}{+0.11} & \textbf{77.30 $\pm$ 0.36} & \textcolor[RGB]{34,120,5}{+0.37} & \textbf{93.39 $\pm$ 0.21} & \textcolor[RGB]{34,120,5}{+0.02} \\
\midrule
\multirow{2}{*}{CIFAR-10} & Original & 93.61 $\pm$ 0.14 & - & 99.69 $\pm$ 0.08 & - & 93.53 $\pm$ 0.03 & - & 99.84 $\pm$ 0.02 & - \\
& \texttt{\short} & \textbf{93.82 $\pm$ 0.08} & \textcolor[RGB]{34,120,5}{+0.21} & \textbf{99.82 $\pm$ 0.03} & \textcolor[RGB]{34,120,5}{+0.13} & \textbf{93.93 $\pm$ 0.12} & \textcolor[RGB]{34,120,5}{+0.4} & \textbf{99.86 $\pm$ 0.01} & \textcolor[RGB]{34,120,5}{+0.02} \\
\bottomrule
\end{tabular}%
}
%\vskip -0.1in
\end{table}

We evaluate our method on the ImageNet~\citep{russakovsky2015imagenetlargescalevisual} and CIFAR-10 datasets~\citep{krizhevsky2009learning}. Our implementation utilizes the ResNet-50 architecture from the PyTorch Image Models (\texttt{timm}) library~\citep{rw2019timm}. To ensure a fair comparison, both our method and the baseline were trained for 200 epochs, with all other hyperparameters held constant. Notably, we retain key default settings from \texttt{timm} for all experiments, including a label smoothing factor of 0.1. This demonstrate \short's complementarity with existing regularizations.

\textbf{Main results.}
Table~\ref{tab:main_results_vision} summarizes the primary classification results, comparing \short\ against the standard ResNet-50 baseline. For these results, we use a minimal entropy of 1.2 for ImageNet and 0.6 for CIFAR-10. The comparison is conducted under two settings: with and without the standard data augmentation provided by the timm library. The results show that \short\ consistently outperforms the baseline across both datasets and settings.

\textbf{Ablation study on minimal entropy.}
We study our method's robustness to the minimal entropy hyperparameter on ImageNet and CIFAR-10, using checkpoints from the 100th and 200th epochs, respectively, for efficiency.  As shown in Figure~\ref{fig:vision_entropy}, our method exhibits low sensitivity to this parameter. Rather than fine-tuning for peak performance, our intent is to show that competitive accuracy is maintained across a reasonable range of values. This demonstrates strong performance is achievable without extensive tuning.

\begin{figure}[t!]
\begin{center}
    \vskip -0.2in
    \begin{subfigure}[b]{0.3\textwidth}
        \centering
        \includegraphics[width=\textwidth]{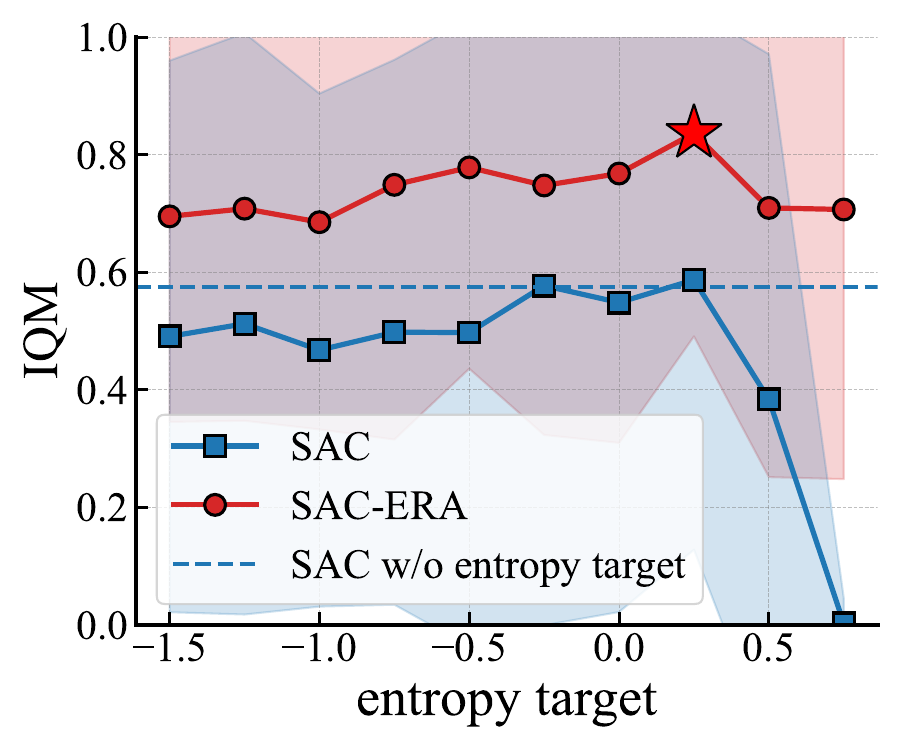}
        \vskip -0.1in
        \subcaption{}
        \vskip -0.1in
        \label{fig:ablation_entropy}
    \end{subfigure}
    \hfill
    \begin{subfigure}[b]{0.66\textwidth}
        \centering
        \includegraphics[width=\textwidth]{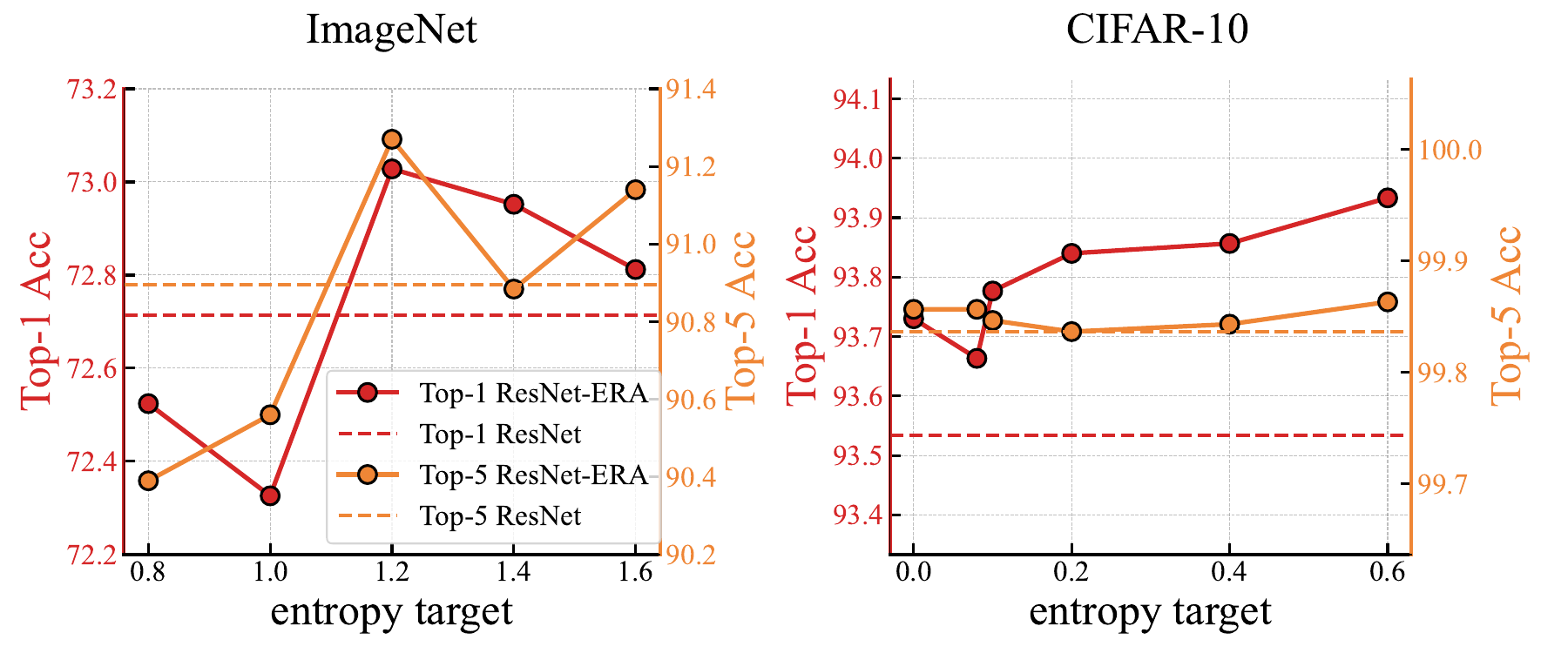}
        \vskip -0.1in
        \subcaption{}
        \vskip -0.1in
        \label{fig:vision_entropy}
    \end{subfigure}
%\caption{\textbf{Entropy Sensitivity of \short.} (a)\textbf{1M Steps Performance on DMC Humanoid\&Dogs Under Different Entropy Targets.} Comparison of SAC-\short\ against baseline SAC agents under a range of minimal entropy values. SAC-\short\ achieves superior performance across all tested settings. (b)\textbf{Top-1 and Top-5 Accuracy on ImageNet and CIFAR-10 Under Different Minimal Entropy.} ResNet-\short\ exhibits relatively stable performance across a range of values, indicating that it is robust to minimal entropy selection.}
%\vskip -0.1in
\caption{\textbf{Sensitivity of \short\ to the Minimum Entropy.} 
(a) \textbf{1M Steps Performance on DMC Tasks.} Comparison between SAC-\short\ and the baseline SAC on Humanoid and Dogs environments under various minimum entropy constraints. Our method achieves superior performance across all settings. 
(b) \textbf{Accuracy on ImageNet and CIFAR-10.} ResNet-\short\ maintains stable Top-1 and Top-5 accuracy across a range of minimum entropy values, indicating its robustness to the choice of this hyperparameter.}
\end{center}
\vskip -0.3in
\end{figure}
\subsection{Results and Analysis on Large Language Models}
\label{sec:results_llms}
We first present the results of ERA in \hyperlink{sec5.3.1}{§5.3.1 Main Results} and \hyperlink{sec5.3.2}{§5.3.2 Extension to More Models and Algorithms}. We then use \hyperlink{sec5.3.3}{§5.3.3 Analysis on Entropy and Reasoning Capacity Boundary} and \hyperlink{sec5.3.4}{§5.3.4 Out-of-Distribution Generalization} to illustrate the role of encouraging exploration. Additional ablation studies on method design are provided in the Appendix~\ref{subsec:additional_llm}.
\hypertarget{sec5.3.1}{\subsubsection{Main Results}}
\label{sec5.3.1}
We evaluate ERA on Qwen2.5-Math-7B, trained with the DAPO-Math-17K~\citep{yu2025dapo} dataset using codebase adopted from verl~\citep{sheng2025hybridflow}. To improve training stability and ensure well-controlled entropy decay, we adopt a two-stage training strategy. In the first stage, we set $\omega_{\text{low}}=0.45$, $\omega_{\text{high}}=3.0$, and $k=2$, and train for $600$ steps. In the second stage, we continue training for $500$ steps with a relaxed entropy bound, setting $\omega_{\text{low}}=0.2$, $\omega_{\text{high}}=+\infty$, and keeping $k=2$.

We then evaluate the resulting model on six standard mathematical reasoning tasks: AIME'24, AIME'25, AMC'23~\citep{li2024numinamath}, MATH500~\citep{hendrycks2021measuring}, Minerva~\citep{lewkowycz2022solving}, and OlympiadBench~\citep{dataset_olympiad}. Table~\ref{tab:main_results}~presents comparisons against base models, classical RL methods, and recent entropy-control approaches. AIME’24, AIME’25, and AMC’23 are conducted with a decoding temperature of $0.7$, and reported as the average accuracy over $16$ sampled responses. MATH500, Minerva, and OlympiadBench are conducted with greedy sampling. The evaluation process is sampled on the original policy $z$ (before ERA). Full implementation details and hyperparameter settings are provided in Appendix~\ref{subsec:impl_llm}. The results show that ERA consistently achieves the best results on most of the benchmarks. Notably, it outperforms strong entropy-based baselines such as KL-Cov and Clip-Cov by significant margins.

\begin{table}[h!]
\centering
\caption{Main results ($\%$) on five competition-level reasoning benchmarks based on Qwen2.5-Math-7B. For AIME and AMC, the results are avg.@16. The best results on each benchmark are highlighted in \textbf{bold}.}
\vskip 0.1in
\label{tab:main_results}
\resizebox{0.95\textwidth}{!}{
\begin{tabular}{lccccccc}
\toprule
\textbf{Model}  & \textbf{AIME24 $\uparrow$} & \textbf{AIME25 $\uparrow$} & \textbf{AMC $\uparrow$} & \textbf{MATH500 $\uparrow$} & \textbf{Minerva $\uparrow$} & \textbf{Olympiad $\uparrow$} &  \textbf{Avg. $\uparrow$} \\
\midrule
\multicolumn{8}{l}{\textit{\textbf{Base Models}}} \\
Qwen2.5-Math~\cite{qwen2.5_math}    & 8.6 & 6.3 &  52.2 & 50.8 & 12.1 & 17.2 & 24.5 \\
Qwen2.5-Math-Instruct~\cite{qwen2.5_math}  & 13.3 & 10.0 & 57.1 & 81.0 & 32.7 & 38.8 & 38.8\\\midrule
\multicolumn{8}{l}{\textit{\textbf{Classical Methods}}} \\
SimpleRL-Zero~\cite{zeng2025simplerl}   & 26.7 & 9.3 & 60.0 & 74.6& 27.6 & 35.8 & 39.0 \\
OpenReasoner-Zero~\cite{orz}  & 15.4 & 13.4 & 56.5 & 81.0 & 32.7 & 43.2 & 40.4\\
PRIME-Zero~\cite{prime}   & 18.9 & 11.7 & 57.7 & 79.0 & 36.4 & 40.6 & 40.7 \\
Oat-Zero~\cite{liu2025understanding}    & 28.8 & 10.8 & 65.2 & 79.6 & 34.2 & 39.9 & 43.1 \\
\midrule
\multicolumn{8}{l}{\textit{\textbf{Entropy Control Methods}}} \\
GRPO w/ 20\% Forking Tokens~\citep{wang2025beyond} & 29.0 & \textbf{17.7} & 63.6 & 81.8 & 39.7 & 44.6 & 46.1 \\
KL-Cov~\citep{cui2025entropy} & 35.6 &13.1&65.1&81.0&40.4&44.1&46.6\\
Clip-Cov~\citep{cui2025entropy} & 33.9 & 13.7 &62.5&78.4&35.6&40.3&44.1\\
\midrule
GRPO~\citep{shao2024deepseekmath} & 34.4 &12.3&69.5&80.6&36.8&40.6&45.7\\
\rowcolor{mygray}
ERA  & \textbf{37.5} & {16.9} & \textbf{72.8} & \textbf{84.6} & \textbf{42.6} & \textbf{46.5} & \textbf{50.2} \\
\rowcolor{mygray}
\textbf{$\bigtriangleup$ ($\uparrow$)} & \textcolor[RGB]{34,120,5}{+9.0$\%$}&\textcolor[RGB]{34,120,5}{+37.4$\%$}&\textcolor[RGB]{34,120,5}{+4.7$\%$}&\textcolor[RGB]{34,120,5}{+5.0$\%$}&\textcolor[RGB]{34,120,5}{+15.8$\%$}&\textcolor[RGB]{34,120,5}{+14.5$\%$}&\textcolor[RGB]{34,120,5}{+9.8$\%$} \\
\bottomrule
\end{tabular}}
\vskip -0.1in
\end{table}

% KL-Cov & 33.4 & 17.1 & 77.1 & 83.8 & 43.0 & 49.9 & \\
% Clip-Cov & 32.4 & 14.3 & 81.6 & 84.8 & 44.5 & 48.0 & \\
% \midrule

% \textcolor[RGB]{34,120,5}{+9.4$\%$} & \textcolor[RGB]{34,120,5}{+99.4$\%$} & \textcolor[RGB]{34,120,5}{+40.9$\%$} & \textcolor[RGB]{34,120,5}{+16.8$\%$} & \textcolor[RGB]{34,120,5}{+21.2$\%$} & \textcolor[RGB]{34,120,5}{+27.4$\%$}

\hypertarget{sec5.3.2}{\subsubsection{Extension to More Models and Algorithms}}
\label{sec5.3.2}
To demonstrate ERA’s effectiveness across different model sizes and algorithms, we extend it to the weaker Qwen2.5-Math-1.5B model and also apply ERA to other algorithms such as GSPO~\citep{zheng2025group} on Qwen2.5-Math-7B, showing that ERA is a generic approach not tied to any specific model or algorithm. As reported in Table~\ref{table:moremodels}, ERA yields significant gains on both the smaller model and GSPO. For instance, on Qwen2.5-Math-1.5B it achieves an average improvement of $14.1\%$. 
\begin{table*}[ht!]
\centering
\caption{Accuracy ($\%$) results of different LLMs and different algorithms across six benchmarks. The best results in each box are highlighted in \textbf{bold}.}
% HiAR-ICL consistently outperforms other methods.
% \vskip 0.1 in
\label{table:moremodels}
\resizebox{0.9\linewidth}{!}{
\begin{tabular}{lccccccc}
\toprule
\textbf{Method} & \textbf{AIME24 $\uparrow$} & \textbf{AIME25 $\uparrow$} & \textbf{AMC $\uparrow$} & \textbf{MATH500 $\uparrow$} & \textbf{Minerva $\uparrow$} & \textbf{Olympiad $\uparrow$} &  \textbf{Avg. $\uparrow$}\\
\midrule
\rowcolor{gray!8}\multicolumn{8}{c}{Qwen2.5-Math-1.5B~\cite{qwen2.5_math}}\\
\midrule
CoT  & 4.3 & 2.3 & 26.4 & 59.0 & 24.3 & 27.6 & 24.0 \\
GRPO   & 11.1 & 6.0 & 40.2  & 66.4 & 25.0 & 30.1 & 29.8 \\
ERA    & \textbf{12.1} & \textbf{6.8} & \textbf{49.5} & \textbf{70.6} & \textbf{30.5}&\textbf{34.7} & \textbf{34.0}\\
\textbf{$\bigtriangleup$ $(\uparrow)$} & \textcolor[RGB]{34,120,5}{+9.0$\%$}&\textcolor[RGB]{34,120,5}{+13.3$\%$}&\textcolor[RGB]{34,120,5}{+23.1$\%$}&\textcolor[RGB]{34,120,5}{+6.3$\%$}&\textcolor[RGB]{34,120,5}{+22.0$\%$}&\textcolor[RGB]{34,120,5}{+15.3$\%$}&\textcolor[RGB]{34,120,5}{+14.1$\%$}\\
\midrule
\rowcolor{gray!8}\multicolumn{8}{c}{Qwen2.5-Math-7B~\cite{qwen2.5_math}}\\
\midrule
CoT  & 8.6 & 6.3 &  52.2 & 50.8 & 12.1 & 17.2 & 24.5 \\
GSPO  & 29.8 &13.7&61.2&\textbf{85.1}&37.1&35.1&43.7\\
GSPO + ERA    & \textbf{33.3} & \textbf{15.2}&\textbf{63.8}&84.3& \textbf{40.8}&\textbf{42.7}&\textbf{46.7}\\
\textbf{$\bigtriangleup$ $(\uparrow)$} & \textcolor[RGB]{34,120,5}{+11.7$\%$}&\textcolor[RGB]{34,120,5}{+10.9$\%$}&\textcolor[RGB]{34,120,5}{+4.2$\%$}&\textcolor[RGB]{120,5,34}{-0.9$\%$}&\textcolor[RGB]{34,120,5}{+10.0$\%$}&\textcolor[RGB]{34,120,5}{+21.7$\%$}&\textcolor[RGB]{34,120,5}{+6.9$\%$} \\
\bottomrule
\end{tabular}}
 \vskip -0.1in
\end{table*}
% \textcolor[RGB]{34,120,5}{+25.6$\%$} & \textcolor[RGB]{34,120,5}{+17.4$\%$} & \textcolor[RGB]{34,120,5}{+1.9$\%$} & \textcolor[RGB]{34,120,5}{+1.6$\%$} & \textcolor[RGB]{34,120,5}{+13.6$\%$} & \textcolor[RGB]{34,120,5}{+12.5$\%$} & \textcolor[RGB]{34,120,5}{+13.7$\%$}

\hypertarget{sec5.3.3}{\subsubsection{Analysis on Entropy and Reasoning Capacity Boundary}}
\label{sec5.3.3}
To better understand the effect of our approach on exploration and reasoning, we examine both the entropy dynamics of the learned policies and their downstream reasoning performance. Figure~\ref{fig:llm_entropy} compares the entropy trajectories of our method (first stage) with the GRPO baseline. While GRPO suffers from entropy collapse, our method maintains a stable entropy level throughout training. This stability indicates the existence of a \emph{non-trivial entropy lower bound}, as we desired by the definition of ERA, which prevents premature policy concentration and preserves the model’s ability to explore diverse reasoning paths.

The presence of this entropy floor aligns with improved reasoning performance. As shown in Figure~\ref{fig:llm_entropy}, ERA achieves consistently higher pass@$k$ scores on AIME'24 and AIME'25 compared to GRPO. This demonstrates that avoiding entropy collapse is not merely a statistical artifact but translates directly into stronger reasoning capacity. In particular, maintaining sufficient entropy ensures the model retains multiple candidate reasoning trajectories, thereby improving the likelihood of successful solutions under pass@$k$ evaluation.

\begin{figure}[h!]
%\vskip -0.2in
\begin{center}
\centerline{\includegraphics[width=\columnwidth]{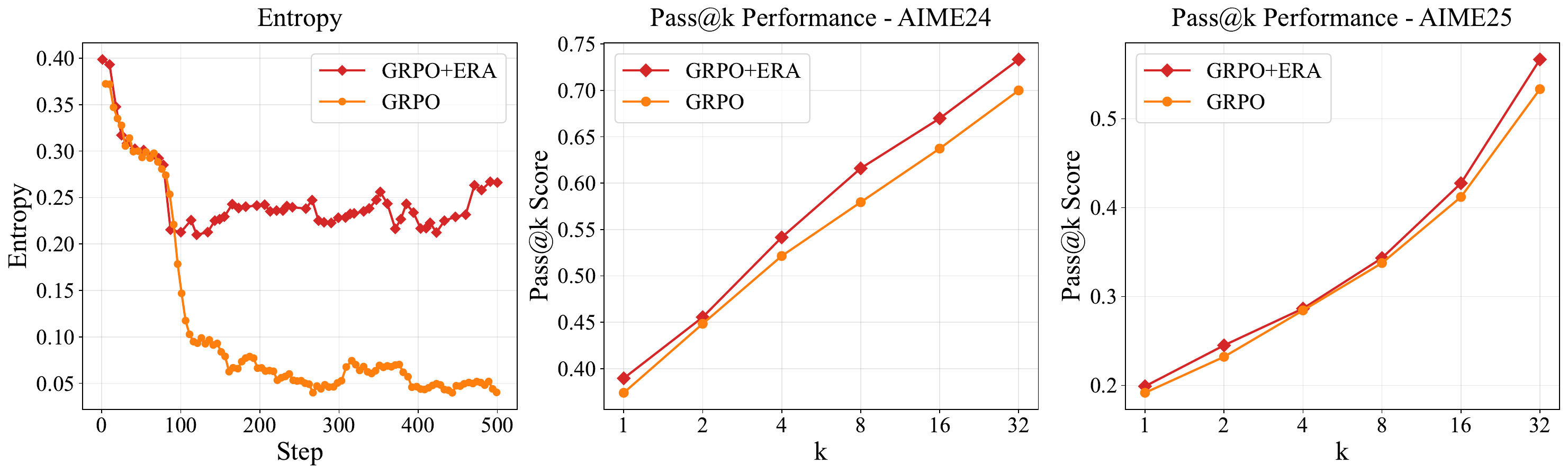}}
\caption{\textbf{Entropy comparison and pass@$k$ results for GRPO with ERA (ours) versus GRPO alone.} 
The entropy curves demonstrate that ERA mitigates entropy collapse and establishes a clear lower bound. 
The pass@$k$ results further indicate that ERA enhances exploration and strengthens the model’s reasoning ability.}
\label{fig:llm_entropy}
\vskip -0.25in
\end{center}
\end{figure}

% \subsubsection{Ablation Study on Entropy Bound}

% Since the purpose of \(\omega_{\text{low}}\) is to set a lower bound on entropy, we explore the role of \(\omega_{\text{high}}\) in the ERA. As can be seen in Figure~\ref{effect of upper}, without the constraint of \(\omega_{\text{high}}\), the model's entropy explodes in a very short time. This indicates that adding an upper bound constraint during training is essential for controlling the entropy of the training process.

% \begin{figure}[h!]
% %\vskip -0.2in
% \begin{center}
% \centerline{\includegraphics[width=0.4\columnwidth]{ERA_paper/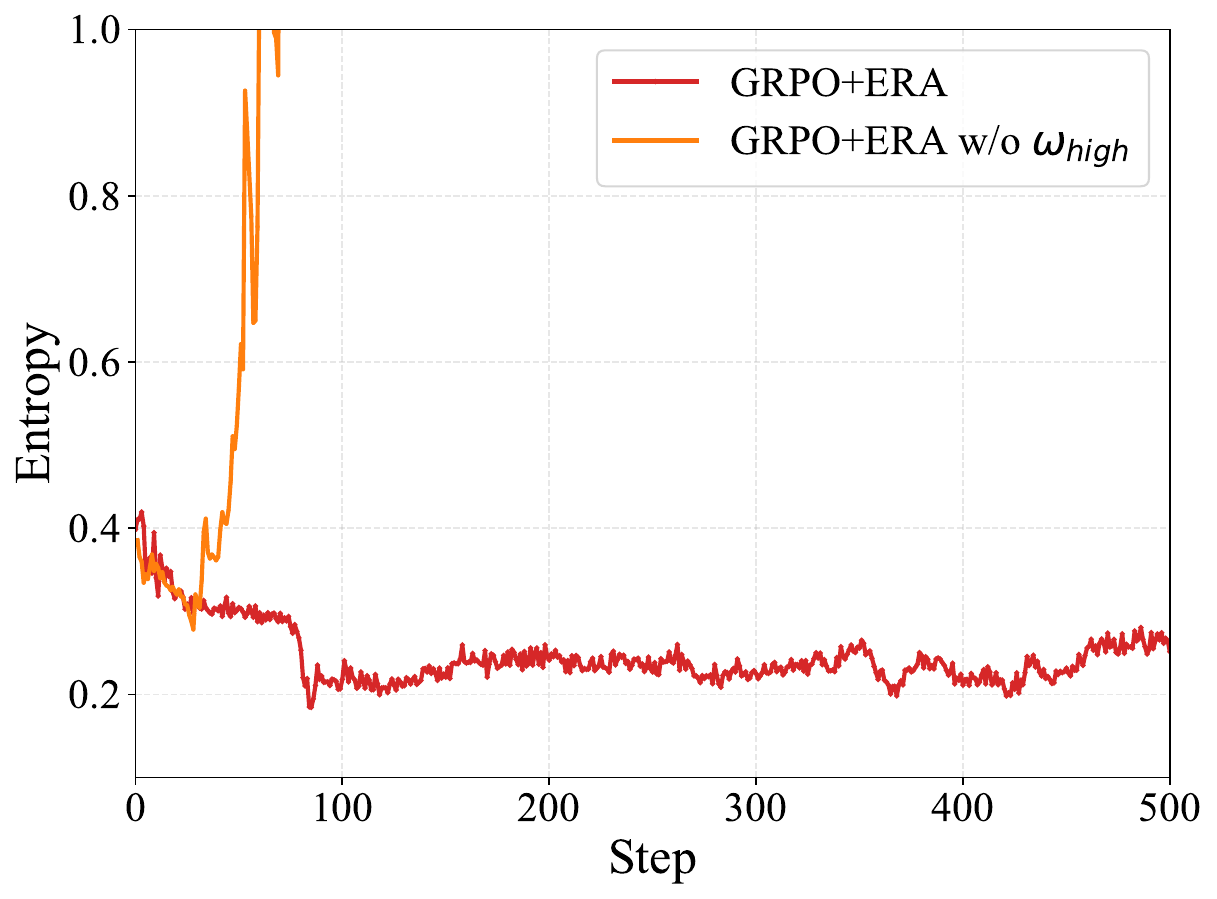}}
% % \vspace{-10pt}
% \caption{Comparison of ERA with and without \(\omega_{\text{high}}\). The entropy of ERA without \(\omega_{\text{high}}\) tends to explode within a very short number of steps, leading to the collapse of model training.}
% \label{effect of upper}
% \end{center}
% \end{figure}
\hypertarget{sec5.3.3}{\subsubsection{Out-of-Distribution Generalization}}
\label{sec5.3.4}

\begin{wrapfigure}{r}{0.45\textwidth}
  \centering
  \vskip -0.45in
  \includegraphics[width=0.95\linewidth]{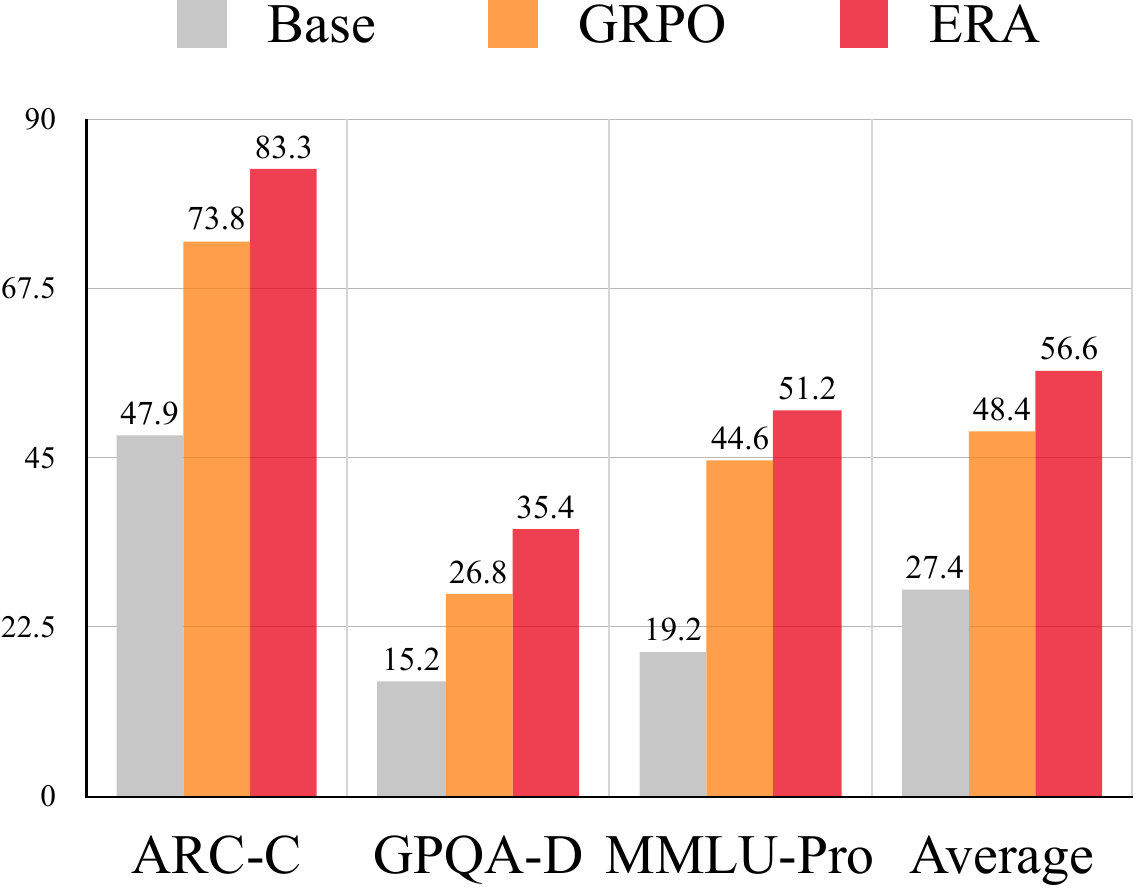}
  % \vspace{-10pt}
  \caption{Results on three OOD benchmarks (Qwen2.5-Math-7B).}
  \label{ood_figure}
  \vskip -0.25in
\end{wrapfigure}
Models trained in a specific domain often struggle when applied to other domains~\citep{yuan2023revisiting,0001HH0ZWY0HGJ024}. Since ERA uses entropy constraints to encourage exploration, we hope it can learn \textit{more general skills}. Therefore we want to see if ERA will also do better on out-of-distribution (OOD) data than standard GRPO. To test this, we evaluate ERA on three hard OOD benchmarks: ARC-C~\citep{clark2018think}, GPQA-Diamond~\citep{gpqa}, and MMLU-Pro~\citep{mmlu_pro}. As shown in Figure~\ref{ood_figure}, ERA outperforms GRPO by 16.9$\%$ on average. This confirms our hypothesis that ERA can also enable models to learn more generalizable abilities.

\section{Conclusions}
\label{sec:conclusions}
In this work, we introduced \short, a novel entropy-constrained paradigm built upon the unique principle of treating output activations as a direct medium for entropy regularization. 
%By imposing constraints directly on the activations, \short\ provides a simple yet potent mechanism that sidesteps modifications to the primary optimization objective, a common feature in existing maximum entropy methods. 
Our theoretical analysis is substantiated by strong empirical results across diverse and challenging domains.
%from high-dimensional continuous control to reinforcement learning with large language models. 
In these settings, \short\ consistently surpasses prominent baselines without incurring significant computational overhead. Ultimately, this work offers a new perspective on entropy regularization for both supervised and unsupervised decision-making, opening a promising research avenue for developing more robust and efficient learning agents.
%\newpage
\section*{Reproducibility statement}
\label{sec:reproducibility}
We are strongly committed to the reproducibility of our work. To this end, we provide detailed derivations and proofs for all theoretical claims in the appendix. The appendix also contains comprehensive experimental details, including hyperparameters, environment setups, and additional results, which are crucial for replicating our findings. Furthermore, the core source code for our proposed method, \short, instantiated across all domains, is included in the appendix. As our implementations are built upon publicly available codebases and frameworks, we believe the provided key source code is sufficient for a straightforward reproduction of our results. 
%We plan to release the full, open-source codebase and a dedicated repository upon publication to further facilitate future research.
Additionally, a full open-source codebase is available at:\href{https://nothingbutbut.github.io/era}{\textit{https://nothingbutbut.github.io/era}}
\section*{Acknowledgment}
\label{sec:Acknowledgment}
This work was supported by the Tsinghua University Initiative Scientific Research Program No. 20257020004. We would also like to express our gratitude to Kaizhe Hu, Ruizhe Shi, and Huanyu Li from the Tsinghua Embodied AI Lab for their invaluable discussions and insightful feedback, which have significantly contributed to this work.

\bibliography{iclr2026_conference}
\bibliographystyle{iclr2026_conference}

\newpage
\appendix
\section{Implementation Details}
\label{sec:impl_details}
\subsection{Implementation Details of Continuous Control Tasks}
\label{subsec:impl_continuous}
\subsubsection{Code Implementation of \short\ in Continuous Control}
\begin{figure}[h!]
\begin{minipage}{0.48\textwidth}
\begin{lstlisting}[style=python, caption={Original Implementation}, label={code:original}]
# Original implementation from the jaxrl codebase, suggested by Ilya
# log_std_min, log_std_max: bounds for log standard deviation
# action_dim: dimension of the action space
# pre_stds: direct output from the actor network
log_stds = log_std_min + (log_std_max - log_std_min) * 0.5 * (1 + nn.tanh(pre_stds))
\end{lstlisting}
\end{minipage}\hfill
\begin{minipage}{0.48\textwidth}
\begin{lstlisting}[style=python, caption={\short\ Implementation}, label={code:modified}]
# h_0: target entropy, can be a fixed value or a learnable parameter
# action_dim: dimension of the action space
k = - self.action_dim * (log_std_max + h_0 + jnp.log(jnp.sqrt(2 * jnp.pi * jnp.e)))
log_stds = k * nn.softmax(pre_stds, axis = -1) + log_std_max
log_stds = jax.clip(log_stds, self.log_std_min, self.log_std_max)
\end{lstlisting}
\end{minipage}
\caption{Comparison of the activation function at the actor's output.}
\label{fig:code_comparison}
\end{figure}
We provide the following JAX implementation snippet of \short\ for the reader's reference, where h\_0 is the target entropy ($\mathcal{H}_0'$ in Eq.~\ref{eq:continuous_era}), which can be a constant (e.g., -action\_dim/2) or a learnable parameter. The terms log\_std\_min and log\_std\_max represent the lower and upper bounds of the log standard deviation, respectively; action\_dim is the dimension of the action space; and pre\_stds refers to the raw output of the actor network.

\subsubsection{Environments}
\label{subsubsec:continuous_envs}
\begin{figure}[h!]
%\vskip 0.2in
\begin{center}
\centerline{\includegraphics[width=0.9\textwidth]{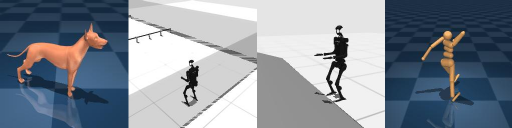}}
\caption{\textbf{Visualization of some continuous control environments used in our experiments.} From left to right: dog-run (DMC), h1-hurdle-v0 (HumanoidBench), h1hand-slide-v0 (HumanoidBench), humanoid-walk (DMC)}
\label{fig:continuous_visualization}
\end{center}
\vskip -0.35in
\end{figure}
Our evaluation of \short\ spans a diverse set of continuous control tasks from three established benchmarks: Mujoco Gym~\citep{todorov2012mujoco}, DeepMind Control Suite (DMC)~\citep{tassa2018deepmind}, and HumanoidBench~\citep{sferrazza2024humanoidbench}. For the Mujoco Gym and DMC environments, we utilized their standard, unmodified configurations. For HumanoidBench, we introduced specific modifications for certain agents.

For experiments involving SAC and OBAC on HumanoidBench, we implemented an action repeat of 2 and disabled episode termination. These adjustments were necessary because the standard tasks proved exceedingly challenging for a baseline SAC agent, as demonstrated in Figure~\ref{fig:humanoidbench_sac}. Conversely, for the FastSAC agent, which is capable of solving the original tasks, we used the standard HumanoidBench environments without these modifications.

For our comparison against TD-MPC2 on DMC environments, we used the performance data reported in the original manuscript. We therefore adhered to their experimental setup, which includes an action repeat of 2.

For main results and training curves, we report results over 10 random seeds for SAC, OBAC, and FastSAC, 5 seeds for PPO, and 3 seeds for TD-MPC2, matching the number provided in its original publication.

\begin{figure}[h!]
%\vskip 0.2in
\begin{center}
\centerline{\includegraphics[width=\columnwidth]{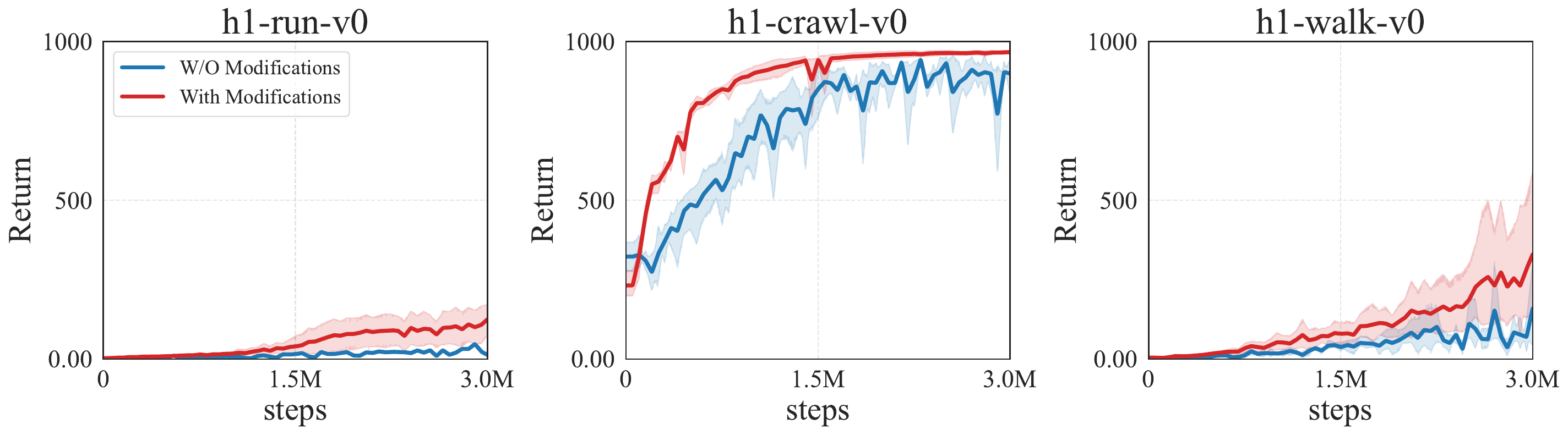}}
\caption{\textbf{Ablation of Environment Modifications for HumanoidBench.} Performance comparison of a standard SAC agent on three challenging HumanoidBench tasks with and without our modified settings (action repeat of 2 and disabled termination). The significant performance gap justifies using these modified settings for our main SAC-based experiments.}
\label{fig:humanoidbench_sac}
\end{center}
\vskip -0.2in
\end{figure}

The action, observation spaces and maximal episode length of the respective environments are shown in Table~\ref{tab:continuous_envs1} and Table~\ref{tab:continuous_envs2}.

\begin{table}[ht!]
\centering
\vspace{-0.2cm}
\caption{List of tasks from DeepMind Control and MetaWorld on which the agents were ablated. The table also contains the dimensions of action, observation space and maximal episode length.}
%\vspace{-1.0cm}
\label{tab:continuous_envs1}
%\vspace{5pt}
 \begin{tabular}{l |c | c | c} 
\toprule
 \textbf{Task} & \textbf{Observation dimension} & \textbf{Action dimension} & \textbf{Max episode length} \\
\midrule
\multicolumn{4}{c}{\textsc{DeepMind Control}} \\
\midrule
Dog-Trot & $223$ & $38$ & $1000$ \\
Dog-Walk & $223$ & $38$ & $1000$ \\
Dog-Run & $223$ & $38$ & $1000$ \\
Humanoid-Run & $67$ & $24$ & $1000$ \\
Humanoid-Walk & $67$ & $24$ & $1000$ \\
Humanoid-Stand & $67$ & $24$ & $1000$ \\
\midrule
\multicolumn{4}{c}{\textsc{Mujoco Gym}} \\
\midrule
HalfCheetah-v4 & $17$ & $6$ & $1000$ \\
Ant-v4 & $27$ & $8$ & $1000$ \\
Hopper-v4 & $11$ & $3$ & $1000$ \\
Walker2d-v4 & $17$ & $6$ & $1000$ \\
\bottomrule
 \end{tabular}
\end{table}  
\begin{table}[ht!]
\centering
\vspace{-0.2cm}
\caption{List of tasks from HumanoidBench on which the agents were ablated. The table also contains the dimensions of action, observation space and maximal episode length.}
%\vspace{-1.0cm}
\label{tab:continuous_envs2}
%\vspace{5pt}
 \begin{tabular}{l |c | c | c} 
\toprule
 \textbf{Task} & \textbf{Observation dimension} & \textbf{Action dimension} & \textbf{Max episode length} \\
\midrule
h1-walk-v0 & $51$ & $19$ & $500$ \\
h1-run-v0 & $51$ & $19$ & $500$ \\
h1-stand-v0 & $51$ & $19$ & $500$ \\
h1-hurdle-v0 & $51$ & $19$ & $500$ \\
h1-stair-v0 & $51$ & $19$ & $500$ \\
h1-crawl-v0 & $51$ & $19$ & $500$ \\
h1hand-balance\_simple-v0 & $164$ & $61$ & $1000$ \\
h1hand-hurdle-v0 & $151$ & $61$ & $1000$ \\
h1hand-pole-v0 & $151$ & $61$ & $1000$ \\
h1hand-push-v0 & $163$ & $61$ & $1000$ \\
h1hand-stair-v0 & $151$ & $61$ & $1000$ \\
h1hand-slide-v0 & $151$ & $61$ & $1000$ \\
h1hand-walk-v0 & $151$ & $61$ & $1000$ \\
h1hand-run-v0 & $151$ & $61$ & $1000$ \\
\bottomrule
 \end{tabular}
\end{table}

\subsubsection{Pseudo Code of SAC-ERA}
\label{subsubsec: pseudo_code_sac}
To better illustrate the role of our method within the algorithmic framework, we present the pseudocode for a representative example, the Soft Actor-Critic (SAC) algorithm, adapted with \short\ in Algorithm \ref{alg:sac}.

\subsubsection{Hyperparameters}
\label{subsubsec: hyperparams_continuous}
We present the hyperparameters used in our experiments with SAC and PPO in Table~\ref{tab:combined_hyperparameters}

\begin{table}[ht]
    \caption{Comparison of hyperparameters for SAC and PPO.}
    \label{tab:combined_hyperparameters}
    \centering
    \begin{tabular}{lll}
        \toprule
        \textbf{Hyperparameter} & \textbf{SAC} & \textbf{PPO} \\
        \midrule
        
        \multicolumn{3}{l}{\textit{Optimizer Settings}} \\
        \quad Actor optimizer & \multicolumn{2}{c}{Adam} \\
        \quad Actor learning rate & \multicolumn{2}{c}{$3 \times 10^{-4}$} \\
        \quad Critic optimizer & AdamW & Adam \\
        \quad Critic learning rate & \multicolumn{2}{c}{$3 \times 10^{-4}$} \\
        \quad Temperature learning rate & $3 \times 10^{-4}$ & --- \\
        \quad Adam epsilon & --- & $1 \times 10^{-5}$ \\
        \quad Gradient clipping & --- & 0.5 \\
        \midrule

        \multicolumn{3}{l}{\textit{Network Architecture}} \\
        \quad Actor/Critic network & \multicolumn{2}{c}{3-layer MLP} \\
        \quad Hidden layer dimensions & (512, 512) & (64, 64) \\
        \quad Activation function & ReLU & Tanh \\
        \quad LayerNorm & True & False \\
        \midrule

        \multicolumn{3}{l}{\textit{Algorithm Hyperparameters}} \\
        \quad Discount factor ($\gamma$) & \multicolumn{2}{c}{0.99} \\
        \quad Replay buffer size & $1 \times 10^6$ & --- \\
        \quad Polyak averaging coefficient ($\tau$) & 0.005 & --- \\
        \quad Initial temperature ($\alpha$) & 1.0 & --- \\
        \quad Target entropy ($\mathcal{H}_0$) & $-\dim(\mathcal{A})/2$ & --- \\
        \quad Gradient steps per env. step & 2 & --- \\
        \quad Random exploration steps & 5,000 & --- \\
        \quad GAE parameter ($\lambda$) & --- & 0.95 \\
        \quad PPO clip ratio & --- & 0.2 \\
        \quad Entropy coefficient & --- & 0.01 \\
        \quad Batch size & 256 & 2048 \\
        \quad Mini-batch size & --- & 64 \\
        
        \bottomrule
    \end{tabular}
\end{table}

Our implementations of SAC and OBAC are heavily inspired by the official \texttt{jaxrl} repository~\citep{jaxrl}. For the network design, we follow the insights from \citet{nauman2024overestimationoverfittingplasticityactorcritic} and incorporate LayerNorm~\citep{ba2016layernormalization} into the neural networks.

Our OBAC implementation is built upon the codebase provided by \citet{kang2025forgetandgrowstrategydeepreinforcement}. It shares the same fundamental hyperparameters as our SAC implementation, with the behavior cloning weight set to $1 \times 10^{-3}$.

For the PPO and PPO-\short\ experiments, our implementation is based on the publicly available codebase of \citet{li2022drlcode}. We use target entropy of $-0.3\mathcal{A}$ for main experiments on PPO-\short.

For the TD-MPC2 baseline, we utilize the official implementation provided by the original authors. The results for comparison are also directly sourced from those reported in the official repository. We use target entropy of $-\mathcal{A}$ for main experiments on TD-MPC2-\short.

Similarly, our implementations of FastTD3 and FastSAC are based on the official codebases provided by their respective authors. We note that our construction of FastSAC-ERA differs from the method described in the original paper; these differences are detailed in Section~\ref{subsubsec:fastsac}.

\subsubsection{FastSAC-ERA}
\label{subsubsec:fastsac}
The FastTD3~\citep{seo2025fasttd3} framework demonstrated the potential of applying off-policy RL methods to massively parallel RL scenarios, achieving excellent performance on HumanoidBench.

Authors of FastTD3 also provided a FastSAC implementation, which replaced the mixed noise mechanism in FastTD3 with the standard entropy maximization objective from Soft Actor-Critic (SAC). However, they noted that this approach yielded unstable results, and hypothesized that maximizing action entropy in high-dimensional action spaces might be inherently challenging.

To address this issue, we investigated a solution based on minimal modification to the original FastTD3. Our approach, named FastSAC-\short, is derived from FastTD3 by retaining its noise mechanism while removing the Delayed Policy Updates and incorporating an entropy constraint via \short\ implementation. This method achieved performance superior to that of FastTD3.

In practice, our implementation was built directly upon the official FastTD3 codebase. The only modifications were the removal of Delayed Policy Updates and the addition of the \short\ implementation at the actor's output. All other hyperparameters and implementation details were kept identical to the original FastTD3 configuration.

\begin{algorithm}[t]
\caption{Soft Actor-Critic (SAC) with \short}
\label{alg:sac}
\begin{algorithmic}[1]
   \STATE \textbf{Initialize:} actor parameters $\theta$, critic parameters $\phi_1, \phi_2$.
   \STATE \textbf{Initialize:} target network parameters $\phi'_1 \leftarrow \phi_1$, $\phi'_2 \leftarrow \phi_2$.
   \STATE \textbf{Initialize:} replay buffer $\mathcal{D}$.
   %\STATE \textbf{Initialize:} temperature $\alpha$.
   \STATE \textbf{Hyperparameters:} learning rates $\lambda_\pi, \lambda_Q$, target entropy $\mathcal{H}_0$, Polyak coefficient $\tau$.
   %\hrulefill
   \FOR{each training step}
       \STATE Sample action from the policy: $a_t \sim \pi_\theta(\cdot | s_t)$.
       \STATE Execute action $a_t$, observe reward $r_t$ and next state $s_{t+1}$.
       \STATE Store transition $(s_t, a_t, r_t, s_{t+1})$ in replay buffer $\mathcal{D}$.
       \STATE Sample a random minibatch of transitions $B = \{(s, a, r, s')\}$ from $\mathcal{D}$.
       
       \STATE \textit{// Update the Q-functions (critics)}
       \STATE Sample next actions: $a' \sim \pi_\theta(\cdot | s')$.
       \STATE Compute the target Q-value by taking the minimum of the two target critics:
       $$ Q'_{\text{target}}(s', a') \leftarrow \min_{i=1,2} Q_{\phi'_i}(s', a') $$
       \STATE Compute the soft Q-target $y$ (matches Eq.~\ref{eq:simplified_sac_q_target}):
       %$$ y \leftarrow r + \gamma \left( Q'_{\text{target}}(s', a') - \alpha \log \pi_\theta(a'|s') \right) $$
       $$ y \leftarrow r + \gamma Q'_{\text{target}}(s', a') $$
       \STATE Update both critics by one step of gradient descent using the loss from Eq.~\ref{eq:sac_q_loss}:
       $$ \nabla_{\phi_i} \frac{1}{|B|} \sum_{(s,a,y) \in B} \frac{1}{2} \left( Q_{\phi_i}(s, a) - y \right)^2 \quad \text{for } i=1,2 $$
       
       \STATE \textit{// Update the policy (actor)}
       \STATE Sample new actions for the policy update (using reparameterization trick): $\tilde{a} \sim \pi_\theta(\cdot|s)$.
       \STATE Compute Q-values for the new actions using the minimum of the two critics:
       $$ Q_{\text{min}}(s, \tilde{a}) \leftarrow \min_{i=1,2} Q_{\phi_i}(s, \tilde{a}) $$
       \STATE Update the policy by one step of gradient ascent to maximize the objective from Eq.~\ref{eq:simplified_sac_policy_loss}:
       %$$ \nabla_{\theta} \frac{1}{|B|} \sum_{s \in B} \left( Q_{\text{min}}(s, \tilde{a}) - \alpha \log \pi_\theta(\tilde{a}|s) \right) $$
       $$ \nabla_{\theta} \frac{1}{|B|} \sum_{s \in B} Q_{\text{min}}(s, \tilde{a}) $$

       %\STATE \textit{// Update the temperature parameter $\alpha$}
       %\STATE Update $\alpha$ by one step of gradient descent:
       %$$ \nabla_{\alpha} \frac{1}{|B|} \sum_{s \in B} \left( -\alpha (\log \pi_\theta(\tilde{a}|s) + \mathcal{H}_0) \right) $$

       \STATE \textit{// Update target networks using Polyak averaging}
       \STATE $\phi'_i \leftarrow \tau \phi_i + (1-\tau)\phi'_i \quad \text{for } i=1,2$
   \ENDFOR
\end{algorithmic}
\end{algorithm}
\subsection{Implementation Details of Image Classification}
\label{subsec:impl_vision}

\subsubsection{Code Implementation of \short\ in Image Classification}
\begin{figure}[h!]
    \centering
\begin{lstlisting}[
    style=python, 
    caption={\short\ Implementation in Image Classification}, 
    label={code:image}
]
class ERA(nn.Module):
    def __init__(self, C_H: float, n_dims: int):
        super().__init__()
        self._tau = 4.
        self.C_H = C_H
        self.n_dims = n_dims

        self.upper_bound = math.log(self._tau) / self._tau
        assert C_H >= self.upper_bound
        self.slope = (self.upper_bound - C_H / n_dims) / (1 - 1 / n_dims)
        self.b = (C_H - self.slope) / n_dims
    
    def forward(self, x: torch.Tensor) -> torch.Tensor:
        """
        x: logits before softmax, shape (..., n_dims)
        return: adjusted logits before softmax, shape (..., n_dims)
        """
        h = self.slope * x.softmax(dim=-1) + self.b
        u = -1 - torch.log(h)
        new_logits = (-1 - torch.sqrt(2 * u) - 3/4 * u).to(x.dtype)

        max_values = torch.max(x, dim=-1, keepdim=True).values.detach()
        x = x - max_values
        min_values = torch.min(new_logits, dim=-1, keepdim=True).values.detach()
        new_logits = new_logits - min_values

        return new_logits
\end{lstlisting}
\end{figure}
We provide the implementation of ERA for image classification tasks in Listing~\ref{code:image}. In the code, C\_H corresponds to $C_{\mathcal{H}_0}$ defined in Eq.~\ref{eq:discrete_era}, and n\_dims denotes the number of classes. We set $\tau=4$ in our implementation without performing any tuning for this parameter.

\subsubsection{Training Setup}
Our training for ImageNet was completed on 4 A100 GPUs, and we report the 95\% confidence interval calculated from the dataset. For CIFAR-10, which requires less computation, we trained three separate runs on 3 machines, each with 4 A40 GPUs, and report the confidence interval computed from these three results to ensure maximum reproducibility. 

\subsubsection{Commands Used for Experiments}
We provide two main commands used for training in image classification.
The two commands delineate the training procedures for our models under two distinct settings: one incorporating data augmentation and the other without it.
The training commands were sourced directly from the reference ImageNet training script within the timm library. We employed this identical set of commands for training on both the ImageNet and CIFAR-10 datasets without any dataset-specific hyperparameter tuning to ensure a consistent experimental setup.

% --- Code Block 1: With Data Augmentation ---
\begin{lstlisting}[
    style=bash,
    caption={Command to launch training with data augmentation.},
    label={lst:train_with_aug}
]
./distributed_train.sh 4 --data-dir ../data --dataset torch/cifar10 --dataset-download -b 64 --model resnet50 --sched cosine --epochs 200 --lr 0.05 --amp --remode pixel --reprob 0.6 --aug-splits 3 --aa rand-m9-mstd0.5-inc1 --resplit --split-bn --jsd --dist-bn reduce
\end{lstlisting}

% --- Code Block 2: Without Data Augmentation ---
\begin{lstlisting}[
    style=bash,
    caption={Command to launch training without data augmentation (baseline).},
    label={lst:train_without_aug}
]
./distributed_train.sh 4 --data-dir ../data --dataset torch/cifar10 --dataset-download -b 64 --model resnet50 --sched cosine --epochs 200 --lr 0.05 --amp --dist-bn reduce
\end{lstlisting}
\subsection{Implementation Details of LLM Training}
\label{subsec:impl_llm}

\subsubsection{Code Implementation of ERA in LLM}

We provide the core implementation of ERA in LLM in Listing \ref{code:llm}. In the code, \texttt{era\_lb}, \texttt{era\_ub} and \texttt{era\_k} corresponds to $\omega_{\text{low}}, \omega_{\text{high}},k$ defined in Eq.~\ref{eq:llm}, respectively. In the first training stage, we further apply a \texttt{top-$k$} filter (retaining the 20 largest logits) within the \texttt{logprobs\_from\_logits} function to enhance training stability. Additionally, in practice, we found that the advantage scaling does not affect model performance, so we did not implement it in our code.

\begin{figure}[h!]
    \centering
\begin{lstlisting}[
    style=python, 
    caption={\short\ Implementation in LLM}, 
    label={code:llm}
]
length = response_mask.sum(dim=-1)
k_per_sample = (0.2 * length).long().clamp(min=1)

mean_top_entropy = []
masked_entropy = entropy.masked_fill(~response_mask.bool(), float("-inf"))
for b in range(entropy.size(0)):
    k = k_per_sample[b].item()
    top_entropy_b, _ = torch.topk(masked_entropy[b], k)
    mean_top_entropy.append(top_entropy_b.mean())

mean_top_entropy = torch.stack(mean_top_entropy).unsqueeze(-1)
cond_A = (mean_top_entropy < era_lb) & (advantages > 0)
cond_B = (mean_top_entropy > era_ub) & (advantages > 0)

logits[cond_A] = logits[cond_A] * era_k
logtis[cond_B] = logits[cond_B] / era_k

log_prob = logprobs_from_logits(logits)

\end{lstlisting}
\end{figure}

\subsubsection{Hyperparameters}

For GRPO, GRPO w/ 20\% Forking Tokens, ERA, we use a training batch size of $256$ and a mini batch size of $256$ in the verl configuration, which results in a on-policy setting. For KL-Cov and Clip-Cov, we use a training batch size of $256$ and a mini batch size of $32$. The learning rate is $10^{-6}$ and no learning rate warm-up or scheduling is applied. We also utilize dynamic sampling to enhance training efficiency. Since our setting is on-policy, the clip ratio is irrelevant. The maximum response length is $8192$ with no overlong reward shaping. For Qwen2.5-Math-1.5B, we use MATH problems of levels 3–5 as the training set in this experiment since DAPO-Math-17K is too difficult.

The hyperparameters of ERA are fixed to $\omega_{\text{low}}=0.45$, $\omega_{\text{high}}=3.0$, and $k=2$ across all settings, without any tuning. These values are chosen with reference to the initial entropy of the model, $H_{\text{resp}} \approx 1.5$, such that $\omega_{\text{low}}$ and $\omega_{\text{high}}$ lie below and above this value, respectively. The only exception is in the second training stage of ERA for the Qwen2.5-Math-7B model, where we set $\omega_{\text{low}}=0.2$, $\omega_{\text{high}}=+\infty$, and $k=2$.
\section{Proofs And Derivations}
\label{sec:proofs}
\subsection{Proof of entropy bound in continuous space}
\label{subsec:proofs_continuous}
Recall the continuous form of \short:
\begin{equation}
    \mu' = \mu, \quad \sigma' = \exp{\left[\max\left(\log \sigma_{\max} + (\mathcal{H}_0' -D\log \sqrt{2\pi e} - D \log \sigma_{\max})\frac{e^{\hat{\sigma}_i}}{\sum_{j=1}^{D}e^{\hat{\sigma}_j}} , \log\sigma_{\min}\right)\right]} \notag
\end{equation}
Here, $\mathcal{H}_0'$ is the target entropy for the final policy~$\mathcal{H}_0$ plus a residual entropy term $\delta \geq 0$ to account for the bounding bias.
The residual entropy term $\delta$ is defined as the minimum additional entropy required to ensure that the final entropy, after transformation of the Gaussian distribution, is at least $\mathcal{H}_0$.
And forms for $\delta$ in both squashed Gaussian and truncated Gaussian cases are given by Eq.~\ref{eq:squashed_gaussian_residual} and Eq.~\ref{eq:truncated_gaussian_residual}, respectively.
\begin{align}
    \delta_{\tanh} &= -\mathbb{E}_{s \sim \rho_\pi, u \sim \mathcal{N}(\mu_\theta(s), \Sigma_\theta(s))}\left[\sum_{i=1}^D \log(1 - \tanh(u_i)^2)\right]
    \label{eq:squashed_gaussian_residual}
    \\
    \delta_{\text{TN}} &= -\mathbb{E}_{s \sim \rho_\pi} \left[ \sum_{i=1}^{D} \left(\log Z_i(s) - \frac{\beta_i(s) \phi(\beta_i(s)) - \alpha_i(s) \phi(\alpha_i(s))}{2Z_i(s)}\right) \right]
    \label{eq:truncated_gaussian_residual}
\end{align}
In practice, we can learn a $\hat{\delta} \geq 0$ as a state-independent parameter by minimizing loss in Eq.~\ref{eq:residual_loss}, which provides a straightforward learning mechanism that is agnostic to the specific form of the distribution.
\begin{equation}
    \mathcal{L}(\hat{\delta}) = \mathbb{E}_{s \sim \mathcal{D}} \left[\hat{\delta}(\mathcal{H}[\pi(\cdot|s)] - \mathcal{H}_0)\right] \notag
\end{equation}
We directly adopt the loss function from the automated entropy adjustment in SAC~\citep{haarnoja2018soft}. We note that the optimization objective in our method is identical to that in SAC, and thus it can be formulated as a nearly identical dual problem, formed in Eq.~\ref{eq:dual_problem}. The convergence of $\hat{\delta}$ is therefore guaranteed by strong duality.
\begin{align}
    \max_{\pi} \mathbb{E}_{(s,a)\sim\rho_{\pi}} [r(s, a)] = \min_{\hat{\delta} \ge 0} \max_{\pi} \left( \mathbb{E}_{(s,a)\sim\rho_{\pi}} [r(s, a) - \hat{\delta} \log \pi(a|s)] - \hat{\delta} \mathcal{H}_0 \right)
    \label{eq:dual_problem}
\end{align}
Now we can prove that the entropy of the final policy is guaranteed to be at least $\mathcal{H}_0$.
\begin{proposition}
    \label{prop:continuous_era}
    Given a target entropy $\mathcal{H}_0$ and a residual entropy $\hat{\delta} \geq \delta$, the policy defined by Eq.~\ref{eq:continuous_era} has entropy $\mathcal{H}(\pi) \geq \mathcal{H}_0$,
    and $\sigma'$ is bounded within $[\sigma_{\min}, \sigma_{\max}]$.
\end{proposition}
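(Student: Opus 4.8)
The plan is to prove the two assertions separately and to reduce the entropy bound to a pointwise statement about the underlying Gaussian, exploiting the bias decomposition $\mathcal{H}_\pi = \mathcal{H}_{\text{Gaussian}} - \mathbb{E}[\text{bias}]$ already established in the text. Throughout I work per-state, since the activation in \eqref{eq:continuous_era} acts independently for each $s$, and only pass to the expectation over $\rho_\pi$ at the very end.

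First I would pin down the range of $\sigma'$. The lower bound $\sigma' \geq \sigma_{\min}$ is immediate from the outer $\max(\,\cdot\,,\log\sigma_{\min})$ in \eqref{eq:continuous_era}. For the upper bound, the key observation is that the coefficient $c = \mathcal{H}_0' - D\log\sqrt{2\pi e} - D\log\sigma_{\max}$ is non-positive: the quantity $D\log\sqrt{2\pi e} + D\log\sigma_{\max} = D\log(\sqrt{2\pi e}\,\sigma_{\max})$ is exactly the entropy of a $D$-dimensional diagonal Gaussian with every $\sigma_i=\sigma_{\max}$, the largest attainable Gaussian entropy under the cap, so the feasibility assumption $\mathcal{H}_0' \leq D\log(\sqrt{2\pi e}\,\sigma_{\max})$ forces $c \leq 0$. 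Since each softmax weight lies in $[0,1]$, the argument of the outer $\max$ is at most $\log\sigma_{\max}$; combined with $\sigma_{\min}\leq\sigma_{\max}$ this yields $\log\sigma_i' \leq \log\sigma_{\max}$, hence $\sigma' \in [\sigma_{\min},\sigma_{\max}]$.

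Next I would lower-bound the Gaussian entropy pointwise. Writing $\mathcal{H}_{\text{Gaussian}}(s) = D\log\sqrt{2\pi e} + \sum_{i=1}^D \log\sigma_i'(s)$ from \eqref{eq:gaussian_entropy}, the crucial fact is that the softmax weights sum to one, so if no component were clipped the sum $\sum_i \log\sigma_i'$ would collapse to $D\log\sigma_{\max} + c = \mathcal{H}_0' - D\log\sqrt{2\pi e}$, giving the exact identity $\mathcal{H}_{\text{Gaussian}}(s) = \mathcal{H}_0'$. Because the outer $\max$ can only raise a log-standard-deviation above its unclipped value, clipping never decreases $\sum_i \log\sigma_i'$, so $\mathcal{H}_{\text{Gaussian}}(s) \geq \mathcal{H}_0'$ for every state, and therefore $\mathbb{E}_{s\sim\rho_\pi}[\mathcal{H}_{\text{Gaussian}}(s)] \geq \mathcal{H}_0'$.

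Finally I would transfer this to the bounded policy. Taking the expectation of the decomposition $\mathcal{H}_\pi = \mathcal{H}_{\text{Gaussian}} - \text{bias}$ over $\rho_\pi$, the averaged bias is precisely $\delta$ as defined in \eqref{eq:squashed_gaussian_residual} and \eqref{eq:truncated_gaussian_residual}, so the pointwise bound gives $\mathbb{E}_s[\mathcal{H}_\pi(s)] \geq \mathcal{H}_0' - \delta = \mathcal{H}_0 + \hat{\delta} - \delta$, and the hypothesis $\hat{\delta} \geq \delta$ closes the argument. I expect the main subtlety to be the $\sigma'$ upper-bound step: it silently relies on feasibility ($\mathcal{H}_0'$ not exceeding the maximal Gaussian entropy at $\sigma_{\max}$) to fix the sign of $c$, and the final step relies on the bounding bias being non-negative so that $\delta \geq 0$ is a genuine upper bound rather than a correction of uncertain sign. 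I would state both of these explicitly as the standing assumptions under which the clean algebraic identity in the unclipped case becomes an inequality in the direction required.
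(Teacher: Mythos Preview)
Your proposal is correct and follows essentially the same approach as the paper's proof: both reduce to showing $\sum_i \log\sigma_i' \geq \mathcal{H}_0' - D\log\sqrt{2\pi e}$ by using that the softmax weights sum to one and that the outer $\max$ can only increase each $\log\sigma_i'$, then invoke $\mathcal{H}_0' = \mathcal{H}_0 + \hat{\delta} \geq \mathcal{H}_0 + \delta$. You are in fact more careful than the paper on the $\sigma' \leq \sigma_{\max}$ bound---the paper simply asserts that ``the use of the $\max$ function'' yields both bounds, whereas you correctly identify that the upper bound requires the feasibility condition $\mathcal{H}_0' \leq D\log(\sqrt{2\pi e}\,\sigma_{\max})$ to make the coefficient $c$ non-positive.
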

\begin{proof}
    We only need to show that the entropy of the Gaussian before transformation is at least $\mathcal{H}_0 + \delta$ to prove the entropy bound.
    The entropy of a Gaussian distribution $\mathcal{N}(\mu, \Sigma)$ is given by $\mathcal{H}(\mathcal{N}(\mu, \Sigma)) = \frac{D}{2} \log(2\pi e) + \frac{1}{2} \log |\Sigma|$.
    Since $\Sigma$ is diagonal, we have $\log |\Sigma| = \sum_{i=1}^{D} \log \sigma_i^2 = 2 \sum_{i=1}^{D} \log \sigma_i$.
    Therefore, the entropy can be expressed as:
    \begin{equation}
        \mathcal{H}(\mathcal{N}(\mu, \Sigma)) = \frac{D}{2} \log(2\pi e) + \sum_{i=1}^{D} \log \sigma_i
        \label{eq:gaussian_entropy}
    \end{equation}
    Hence, $\mathcal{H}(\mathcal{N}(\mu, \Sigma)) \geq \mathcal{H}_0 + \delta$ is equivalent to:
    \begin{equation}
        \sum_{i=1}^{D} \log \sigma_i \geq \mathcal{H}_0 + \delta - \frac{D}{2} \log(2\pi e)
        \label{eq:entropy_condition}
    \end{equation}
    %$\log \sigma_{\max} + (\mathcal{H}_0' -D\log \sqrt{2\pi e} - D \log \sigma_{\max})\frac{e^{\hat{\sigma}_i}}{\sum_{j=1}^{D}e^{\hat{\sigma}_j}}$
    %in Eq.~\ref{eq:continuous_era} automatically satisfies $\sum_{i=1}^{D} \log \sigma_i \geq \mathcal{H}_0' - \frac{D}{2} \log(2\pi e)$ due to the properties of the softmax function.
    For \short-transformed Gaussian distribution, we have:
    \begin{align}
        \sum_{i=1}^{D} \log \sigma_i &\geq \sum_{i=1}^{D}[\log \sigma_{\max} + (\mathcal{H}_0' -D\log \sqrt{2\pi e} - D \log \sigma_{\max})\frac{e^{\hat{\sigma}_i}}{\sum_{j=1}^{D}e^{\hat{\sigma}_j}}] \notag \\
        &=D\log \sigma_{\max} + (\mathcal{H}_0' -D\log \sqrt{2\pi e} - D \log \sigma_{\max}) \notag \\
        &=\mathcal{H}_0' - \frac{D}{2} \log(2\pi e)
    \end{align}
    Since $\mathcal{H}_0' = \mathcal{H}_0 + \hat{\delta} \geq \mathcal{H}_0 + \delta$, the condition in Eq.~\ref{eq:entropy_condition} is satisfied.
    Additionally, the use of the $\max$ function in Eq.~\ref{eq:continuous_era} ensures that $\sigma'$ is bounded within $[\sigma_{\min}, \sigma_{\max}]$.
    This completes the proof.
\end{proof}
\subsection{Proof of entropy bound in discrete space}
\label{subsec:proofs_discrete}
Recall the discrete form of \short:
\begin{equation}
    z' = h^{-1}\left[\max \left(\frac{\log \tau}{\tau} + \left(C_{\mathcal{H}_0} - n \frac{\log \tau}{\tau}\right)\frac{1}{D-1}\left(1 - \frac{e^{z_i}}{\sum_{j=1}^{D}e^{z_j}}\right), 0\right)\right] \notag
\end{equation}

Before we delve into the proof of its entropy bound, we first provide some insights into the design of \short\ in the context of vision tasks. To adapt the entropy constraint function from continuous spaces for discrete domains, our initial idea was to have the network output the entropy of individual components rather than their logits. However, this direct approach is problematic because the function $H(p) = -p\ln p$ is non-monotonic over the interval $[0,1]$. This ambiguity means a given entropy value cannot be uniquely mapped back to its corresponding probability; for instance, an entropy of 0 could correspond to a probability of either 0 or 1.

To resolve this ambiguity, we introduce a scaling factor $\tau > e$ and consider a "$\tau$-divided distribution," where each probability is scaled down by $\tau$. By selecting $\tau > e$, we ensure that the function $-p\ln p$ is strictly monotonically increasing on the interval $[0, 1/\tau]$. This establishes a one-to-one mapping, allowing for the unique recovery of a probability value from its entropy within this restricted range. Therefore, our network is designed to output the entropy of this $\tau$-divided distribution. We then map these entropy values back to the probability space using the inverse function, $h^{-1}$. As $h^{-1}$ lacks a closed-form solution, we utilize a numerical approximation. A final normalization step is required because the resulting probabilities from this inverse mapping do not inherently sum to one.

Crucially, we have proven that the entropy loss during this normalization process is bounded. By leveraging the continuous-space entropy constraint function to ensure the initial output entropy is above a threshold $C_{\mathcal{H}_0}$, we can guarantee that the entropy of the final discrete distribution will also exceed $C_{\mathcal{H}_0}$. This constitutes the core mechanism behind the implementation of \short\ in discrete spaces.

\begin{proposition}
    \label{prop:discrete_era}
    Given a target entropy $\mathcal{H}_0$ and a hyperparameter $\tau \ge e$, the policy defined by Eq.~\ref{eq:discrete_era} has entropy $\mathcal{H}(\pi) \geq \mathcal{H}_0$.
\end{proposition}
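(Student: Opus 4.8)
The plan is to relate the final softmax entropy to the per-coordinate ``entropy values'' that the activation controls explicitly. Write $p_i = e^{z_i}/\sum_j e^{z_j}$ for the original softmax probabilities and let
$$\eta_i \;=\; \max\!\left(\frac{\log\tau}{\tau} + \left(C_{\mathcal{H}_0} - D\tfrac{\log\tau}{\tau}\right)\frac{1}{D-1}\bigl(1-p_i\bigr),\,0\right),$$
so that $z_i' = h^{-1}(\eta_i)$ (here $n=D$ is the number of classes). Since $h(x)=-xe^x$ is inverted on the branch where the preimage is $\le -1$, the transformed quantity $\tilde p_i := e^{z_i'}$ lies in $(0,1/e]$ and satisfies $H(\tilde p_i) := -\tilde p_i\log\tilde p_i = h(z_i') = \eta_i$. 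First I would record the design identity: using $\sum_i(1-p_i)=D-1$, the \emph{unclamped} values sum to $D\tfrac{\log\tau}{\tau} + \bigl(C_{\mathcal{H}_0}-D\tfrac{\log\tau}{\tau}\bigr) = C_{\mathcal{H}_0}$, and since the clamp $\max(\cdot,0)$ only raises each term, $E := \sum_i H(\tilde p_i) = \sum_i \eta_i \ge C_{\mathcal{H}_0}$.

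Next I would reduce the statement to a one-variable inequality. Because $\tilde p=(\tilde p_i)$ is unnormalized, set $S=\sum_i\tilde p_i$ and $\pi_i=\tilde p_i/S$. Expanding $-\sum_i\pi_i\log\pi_i$ gives the exact identity
$$\mathcal{H}(\pi) \;=\; \frac{1}{S}\sum_{i=1}^{D}H(\tilde p_i)+\log S \;=\; \frac{E}{S}+\log S,$$
so the claim reduces to lower-bounding $f(S):=E/S+\log S$ given the controlled value $E\ge C_{\mathcal{H}_0}$.

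The crucial estimate is $S\le E$, and this is exactly where the $\tau$-division matters: keeping $\tau\ge e$ forces every $\tilde p_i\le 1/e$, i.e.\ every transformed coordinate sits on the increasing branch of $-p\log p$, where $-p\log p\ge p$; summing gives $E=\sum_iH(\tilde p_i)\ge\sum_i\tilde p_i=S$. Then $f'(S)=(S-E)/S^2\le 0$ on $(0,E]$, so $f$ is nonincreasing and minimized at $S=E$, yielding $\mathcal{H}(\pi)=f(S)\ge f(E)=1+\log E$. Substituting $C_{\mathcal{H}_0}=e^{\mathcal{H}_0-1}$ and $E\ge C_{\mathcal{H}_0}$ completes the bound: $\mathcal{H}(\pi)\ge 1+\log C_{\mathcal{H}_0}=\mathcal{H}_0$.

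The main obstacle is the normalization step: the activation only pins down the \emph{unnormalized} entropy mass $E$, whereas the target quantity is the entropy of the \emph{normalized} softmax $\pi$, and the two differ by the $\log S$ term. The argument closes only because the $\tau$-division caps each $\tilde p_i$ at $1/e$, which simultaneously delivers $S\le E$ and the monotonicity of $f$ that drives the worst case to $S=E$. A secondary bookkeeping point I would verify is domain validity---that each $\eta_i$ indeed lands in $[0,1/e]$ so that $h^{-1}$ is well defined---which follows from $\tau\ge e$ (hence $\log\tau/\tau\le 1/e$) and the feasibility assumption $\mathcal{H}_0\le\log D$ (hence $C_{\mathcal{H}_0}\le D/e$), with the lower end handled by the explicit clamp.
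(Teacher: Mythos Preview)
Your proof is correct and follows the same decomposition as the paper's: $\mathcal{H}(\pi)=E/S+\log S\ge 1+\log E\ge 1+\log C_{\mathcal{H}_0}=\mathcal{H}_0$. One minor point: the middle inequality is simply $t-\log t\ge 1$ with $t=E/S$, valid for \emph{all} $S>0$, so your ``crucial estimate'' $S\le E$ is actually unneeded---the constraint $\tau\ge e$ matters only for the domain validity of $h^{-1}$ (keeping each $\eta_i\le 1/e$), which you correctly flag at the end.
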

\begin{proof}
    We denote $\kappa = \max (\frac{\log \tau}{\tau} + (C_{\mathcal{H}_0} - n \frac{\log \tau}{\tau})\frac{1}{D-1}(1 - \frac{e^{z_i}}{\sum_{j=1}^{D}e^{z_j}}), 0)$.
    Similar to the continuous case, we have $\kappa$ bounded within $[0, \frac{\log \tau}{\tau}]$ and $\sum_{i=1}^{D} \kappa_i \geq C_{\mathcal{H}_0}$.
    We denote the probability of the final softmax policy as $p = \text{softmax}(z') = \frac{e^{z'}}{\sum_{j=1}^{D} e^{z'_j}}$.
    Then we have:
    \begin{align}
        \mathcal{H}(\pi) &= -\sum_{i=1}^{D} p_i \log p_i \notag \\
        &= - \frac{\sum_{i=1}^{D}e^{h^{-1}(\kappa_i)}h^{-1}(\kappa_i)}{\sum_{j=1}^{D} e^{h^{-1}(\kappa_j)}}  + \log(\sum_{j=1}^{D} e^{h^{-1}(\kappa_j)}) \notag \\
        &\geq 1 + \log (-\sum_{i=1}^{D}e^{h^{-1}(\kappa_i)}h^{-1}(\kappa_i))
        %&= 1 + \log(\sum_{i=1}^{D} \kappa_i)
        %\geq 1 + \log(C_{\mathcal{H}_0})
        %= \mathcal{H}_0
    \end{align}
    Recall that $h = (-x\ln x) \circ e$, so $h^{-1} = \ln \circ (-x\ln x)^{-1}$.
    Hence we have:
    \begin{align}
        \mathcal{H}(\pi) &\geq 1 + \log (-\sum_{i=1}^{D}e^{h^{-1}(\kappa_i)}h^{-1}(\kappa_i)) \notag \\
        &=1 + \log(\sum_{i=1}^{D} \kappa_i)
        \geq 1 + \log(C_{\mathcal{H}_0})
        = \mathcal{H}_0
    \end{align}
\end{proof}
\subsection{Proof of entropy bound in llms}
\label{subsec:proofs_llm}

Recall the definition of the ERA instantiation for LLMs:

\begin{equation}
    z' = \begin{cases}
        kz & H_{\text{resp}} < \omega_{\text{low}}, A_{t}>0, \\
        z & \omega_{\text{low}} \leq H_{\text{resp}} \leq \omega_{\text{high}}, A_{t}<0 \text{ or } A_{t}>0, \\
        \tfrac{1}{k}z & H_{\text{resp}} > \omega_{\text{high}}, A_{t}>0,
    \end{cases}\notag
\end{equation}

and 

\begin{equation}
    A_t' = \begin{cases}
        \frac 1k A_t & H_{\text{resp}} < \omega_{\text{low}}, A_{t}>0, \\
        A_t & \omega_{\text{low}} \leq H_{\text{resp}} \leq \omega_{\text{high}}, A_{t}<0 \text{ or } A_{t}>0, \\
        kA_t & H_{\text{resp}} > \omega_{\text{high}}, A_{t}>0,
    \end{cases}\notag
\end{equation}
where $z$ are the logits, $A_t$ the advantages, and $H_{\text{resp}}$ is the average entropy of the top $20\%$ of tokens with the highest entropy in the response.

These transformations are applied after sampling. The modified policy-gradient objective is therefore

$$J(\theta)=\mathbb{E}_t[\mathbb{E}_{a_t\sim \pi_\theta(\cdot |s_t)}\log \pi'_\theta(a_t|s_t)A'_t]$$

Intuitively, when the entropy is too low, ERA sharpens the policy; when it is too high, ERA flattens it. By rescaling the advantages of modified tokens, we show below that ERA is equivalent to augmenting the vanilla policy-gradient objective with an adaptive KL regularizer. This KL term guarantees that the entropy of responses remains in the interval $[\omega_{\text{low}}, \omega_{\text{high}}]$, preventing entropy collapse. Under mild assumptions, we derive a positive entropy lower bound. 

Fixing the state $s_t$, denote $\pi_a = \pi_\theta(a|s_t)$, $\pi'_a = \pi'_\theta(a|s_t)$, and $A_a$ the advantage of action $a$. The entropy is $H = -\sum_a \pi_a \log \pi_a$. We first derive the gradient of the entropy.

\begin{lemma}

\begin{align}
  \frac{\partial H}{\partial z_a}
  &= \sum_{a'} -\frac{\partial \log \pi_{a'}}{\partial z_a}(\pi_{a'}\log \pi_{a'} + \pi_{a'})\notag\\
  &= \sum_{a'} -([a = a'] - \pi_{a})(\pi_{a'}\log \pi_{a'}+ \pi_{a'})\notag \\
  &=-\pi_a(\log\pi_a + H). \label{eq:dH_dz}
\end{align}
    
\end{lemma}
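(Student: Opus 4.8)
The plan is to obtain $\partial H/\partial z_a$ by differentiating the entropy $H=-\sum_{a'}\pi_{a'}\log\pi_{a'}$ directly, treating each $\pi_{a'}$ as the softmax of the logit vector $z$. Only two facts are needed: the product rule applied to each summand $\pi_{a'}\log\pi_{a'}$, and the standard softmax Jacobian $\partial\log\pi_{a'}/\partial z_a=[a=a']-\pi_a$, which follows from $\log\pi_{a'}=z_{a'}-\log\sum_j e^{z_j}$. Everything else is algebraic bookkeeping.

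First I would handle a single summand. Using $\partial\pi_{a'}/\partial z_a=\pi_{a'}\,\partial\log\pi_{a'}/\partial z_a$, the product rule gives
\begin{equation}
\frac{\partial}{\partial z_a}\bigl(\pi_{a'}\log\pi_{a'}\bigr)=\frac{\partial\log\pi_{a'}}{\partial z_a}\bigl(\pi_{a'}\log\pi_{a'}+\pi_{a'}\bigr).\notag
\end{equation}
Summing over $a'$ with the leading minus sign reproduces the first displayed line of the lemma. Substituting the softmax Jacobian then yields the second line.

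Next I would split the resulting sum over $a'$ into the Kronecker-delta contribution and the $-\pi_a$ contribution. The delta part collapses to the single term $-(\pi_a\log\pi_a+\pi_a)$, while the remaining part factors as $\pi_a\sum_{a'}(\pi_{a'}\log\pi_{a'}+\pi_{a'})$. Applying $\sum_{a'}\pi_{a'}=1$ and $\sum_{a'}\pi_{a'}\log\pi_{a'}=-H$ turns this into $\pi_a(1-H)$. Adding the two contributions cancels the stray $+\pi_a$ against $-\pi_a$ and leaves $-\pi_a(\log\pi_a+H)$, which is the claim.

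There is no genuine obstacle: the computation is elementary, so the only risk is a sign or bookkeeping slip in the final cancellation. To guard against this I would run the sanity check $\sum_a\partial H/\partial z_a=0$, which must hold because adding a constant to every logit leaves the softmax probabilities, and hence $H$, unchanged. Indeed the closed form gives $\sum_a-\pi_a(\log\pi_a+H)=H-H=0$, confirming the result is consistent.
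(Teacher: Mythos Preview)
Your proposal is correct and follows exactly the route the paper takes: the three displayed equalities in the lemma are precisely the product-rule step, the softmax-Jacobian substitution, and the delta/non-delta split you describe, so there is nothing to add. The sanity check $\sum_a \partial H/\partial z_a = 0$ is a nice touch that the paper does not include.
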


We also define the $\pi$-weighted covariance that will be used later:

\begin{definition}
    Define the \(\pi\)-weighted covariance for two vectors \(x=(x_a)\), \(y=(y_a)\) by
      \[
        \mathrm{Cov}_\pi(x,y) = \sum_a \pi_a x_a y_a - \Big(\sum_a \pi_a x_a\Big)\Big(\sum_a \pi_a y_a\Big).
      \]
\end{definition}

Now we show our main result:

\begin{proposition}
    \label{prop:llm_era}
    Let $\pi_\theta$ be the base policy and $\pi'_\theta$ the ERA-adjusted policy from Eq.~\eqref{eq:llm}. Suppose that:
    
    \begin{enumerate}[label=(\roman*)]
      \item (Logit approximation) The change in entropy can be approximated by treating logits \(z\) as the effective policy parameters and using first-order (infinitesimal) sensitivity of entropy w.r.t.\ \(z\).
      
      \item (Positive advantage mass) The aggregated positive advantage restricted to the tokens considered in \(H_{\text{resp}}\),
      \[
        C(s_t) = \sum_{a,A_a>0} \pi_a A_a,
      \]
      satisfies \(C(s_t) \ge \gamma\) for some \(\gamma>0\).

      \item (Bounded response entropy) In some intermediate point of the training process, $H_{\text{resp}}$ has a lower bound $H_{\text{min}}$ and an upper bound $\omega_{\text{high}}$.
      
      \item (Bounded PG-induced entropy decrease) 
      We assume the vanilla policy-gradient term's expected effect on entropy is bounded as
      \[
        \mathbb{E}[\mathrm{Cov}_\pi(\pi_a A_a, \log\pi_a)] \le \alpha H,
      \]
      for some \(\alpha \ge 0\), where \(H\) denotes the entropy of the current policy $\pi$.
      
      \item (Bounded KL-induced entropy decrease) 
      We assume there exists a constant $B_k>0$ (that depends on $k$ and $H_{\text{min}}$) such that
    \[
        \mathrm{Cov}_\pi(\pi'_a - \pi_a, \log \pi_a) \ge B_k H,
    \]
    \end{enumerate}
    
    If \(\gamma B_k - \alpha > \beta\) for $\beta>0$, then there exists a constant \(\mathcal{H}_0>0\) such that the response entropy satisfies
    \[
      \mathbb{E}[H_{\text{resp}}] \ge \mathcal{H}_0
    \]
    under ERA updates.
\end{proposition}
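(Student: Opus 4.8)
The plan is to analyze the training as a continuous-time gradient flow, treating the logits $z$ as the effective parameters as licensed by assumption~(i), and to track the policy entropy $H$ along the ERA ascent direction $\dot z_b = \eta\,\partial J(\theta)/\partial z_b$. The starting point is the chain rule $\dot H = \eta \sum_b (\partial H/\partial z_b)(\partial J/\partial z_b)$, into which I would substitute the sensitivity $\partial H/\partial z_b = -\pi_b(\log\pi_b + H)$ from the preceding Lemma, Eq.~\eqref{eq:dH_dz}. The target is an inequality of the form $\mathbb{E}[\dot H] \ge \eta(\gamma B_k - \alpha)H$ in the low-entropy regime, which then forces $H$ to stay bounded away from zero.

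First I would compute the ERA gradient in the low-entropy branch, where $z' = kz$ and $A' = A/k$. Differentiating $\log\pi'_a = k z_a - \log\sum_c e^{k z_c}$ gives $\partial \log\pi'_a/\partial z_b = k([a=b] - \pi'_b)$, and the factor $k$ cancels against the rescaling $A'_a = A_a/k$, leaving $\partial J/\partial z_b = \pi_b A_b - \pi'_b \sum_a \pi_a A_a$. Comparing with the vanilla policy gradient $\pi_b A_b - \pi_b\sum_a\pi_a A_a$, the ERA objective equals the vanilla one plus a correction whose gradient is $(\pi_b - \pi'_b)\,C(s_t)$, where the weight is the positive-advantage mass $C(s_t)=\sum_{a:A_a>0}\pi_a A_a$ rather than the full mean advantage, precisely because the rescaling $z\mapsto kz$ fires only when $A_t>0$. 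This is the step where the advantage rescaling is essential: without it the sharpening factor $k$ would not cancel and the clean split into a vanilla-plus-correction form would fail.

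Next I would convert both pieces into the $\pi$-weighted covariances of the Definition. Feeding the vanilla gradient through $\partial H/\partial z_b$ and using $\sum_a\pi_a\log\pi_a = -H$ yields the contribution $-\eta\,\mathrm{Cov}_\pi(\pi_a A_a, \log\pi_a)$, up to a term that vanishes when the GRPO advantages are standardized to zero mean; feeding the correction through $\partial H/\partial z_b$ yields $\eta\,C(s_t)\,\mathrm{Cov}_\pi(\pi'_a - \pi_a, \log\pi_a)$, via the identity $\sum_b \pi_b(\log\pi_b+H)(\pi'_b - \pi_b) = \mathrm{Cov}_\pi(\pi'_a-\pi_a,\log\pi_a)$. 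Taking expectations and invoking assumptions~(ii), (iv) and (v) in the low-entropy regime then gives $\mathbb{E}[\dot H] \ge \eta\big(\gamma B_k - \alpha\big)H \ge \eta\beta H > 0$, a strictly positive restoring drift proportional to the current entropy.

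Finally I would turn this drift into the stated bound. The positive, $H$-proportional drift means that whenever $H_{\text{resp}}$ falls toward $\omega_{\text{low}}$ the dynamics push it back up, while the symmetric high-entropy branch ($z'=z/k$, $A'=kA$) supplies a matching downward push above $\omega_{\text{high}}$; together these confine the entropy to a band, and a Gr\"onwall/comparison argument anchored at the intermediate value $H_{\min}$ from assumption~(iii) produces a uniform constant $\mathcal{H}_0>0$ with $\mathbb{E}[H_{\text{resp}}]\ge\mathcal{H}_0$. I expect the main obstacle to lie not in the covariance algebra but in two bridging gaps: relating the full-vocabulary entropy $H$ appearing in the sensitivity identity to the restricted top-$20\%$ response entropy $H_{\text{resp}}$ in which the assumptions and the conclusion are phrased, and passing rigorously from the per-step infinitesimal drift, valid only under the first-order approximation of assumption~(i), to a genuine lower bound on the expected entropy that holds along the discrete, stochastic training trajectory.
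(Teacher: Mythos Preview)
Your plan tracks the paper's proof closely through the main computation: the gradient decomposition into a vanilla-PG piece plus a $C(s_t)$-weighted correction, the identification of the two entropy contributions as $-\mathrm{Cov}_\pi(\pi_aA_a,\log\pi_a)$ and $C(s_t)\,\mathrm{Cov}_\pi(\pi'_a-\pi_a,\log\pi_a)$, and the resulting drift bound $\mathbb{E}[\Delta H]\ge(\gamma B_k-\alpha)H$ in the low-entropy regime. Two remarks.

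First, your intermediate formula $\partial J/\partial z_b=\pi_bA_b-\pi'_b\sum_a\pi_aA_a$ is what one gets if \emph{all} tokens were sharpened; since ERA fires only on $A_t>0$, the paper splits the expectation explicitly into positive- and negative-advantage actions, obtaining $\pi_aA_a-\pi'_aC(s_t)-\pi_a\sum_{a':A_{a'}<0}\pi_{a'}A_{a'}$, and then invokes the zero-mean GRPO advantage to collapse this to $\pi_aA_a-C(s_t)(\pi'_a-\pi_a)$. You arrive at the same endpoint, but the explicit split is what actually justifies $C(s_t)$ appearing rather than the full mean. The paper additionally observes that the correction equals $C(s_t)\,\partial\mathrm{KL}[\pi'\|\pi]/\partial z_a$, so ERA is vanilla PG plus an adaptive KL regularizer --- a structural point you do not state but do not need for the covariance step.

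Second, and more substantively, the final step differs. Your Gr\"onwall/comparison suggestion is not what the paper does and is hard to make work as stated: the restoring drift $\dot H\ge\eta\beta H$ holds only when $H_{\text{resp}}<\omega_{\text{low}}$; in the middle band $\omega_{\text{low}}\le H_{\text{resp}}\le\omega_{\text{high}}$ no ERA correction fires and the vanilla PG can still drive entropy down by up to $\alpha H$, so there is no single differential inequality to Gr\"onwall. The paper instead writes the expected total drift over both regimes as
\[
\beta\,\mathbb{E}_{H_{\text{resp}}<\omega_{\text{low}}}[H_{\text{resp}}]\;-\;\alpha\,\mathbb{E}_{H_{\text{resp}}\ge\omega_{\text{low}}}[H_{\text{resp}}],
\]
applies Markov's inequality $\Pr(H_{\text{resp}}\ge\omega_{\text{low}})\le\mu/\omega_{\text{low}}$ with $\mu=\mathbb{E}[H_{\text{resp}}]$, and uses the bounds $H_{\min}\le H_{\text{resp}}\le\omega_{\text{high}}$ from assumption~(iii) to show that for $\mu$ below a certain threshold the net expected drift is strictly positive; that threshold is the claimed $\mathcal{H}_0$. (In particular the high-entropy branch $z'=z/k$ plays no role in the argument --- assumption~(iii) simply posits $H_{\text{resp}}\le\omega_{\text{high}}$.) The two obstacles you flag --- reconciling the per-token $H$ with the top-$20\%$ statistic $H_{\text{resp}}$, and passing from first-order drift to a genuine trajectory bound --- are not fully resolved in the paper's argument either.
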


\begin{proof}

When $H_{\text{resp}} < \omega_{\text{low}}$, ERA sharpens positively advantaged actions. Following the derivation, the ERA-adjusted gradient satisfies

\begin{align}
    &\phantom{{}={}}\frac{\partial}{\partial z_a}\mathbb{E}_{a'\sim \pi}\log \pi'_{a'}A'_{a'} \notag \\&=\frac{\partial}{\partial z_a}\mathbb{E}_{a'\sim \pi} \left([A_{a'}>0]\log \pi'_{a'}\frac 1k A_{a'} + [A_{a'}<0] \log \pi_{a'}A_{a'}\right) \notag \\
    &= \mathbb{E}_{a'\sim \pi} \left([A_{a'}>0] \frac{\partial \log \pi'_{a'}}{\partial z'_a} \frac{\partial z'_a}{\partial z_a} \frac 1k A_{a'} + [A_{a'}<0] \frac{\partial \log \pi_{a'}}{\partial z_a}A_{a'}\right) \notag \\
    &=\mathbb{E}_{a'\sim \pi} \left([A_{a'}>0]([a'=a] - \pi'_{a'})A_{a'} + [A_{a'}<0]([a'=a] - \pi_{a'})A_{a'}\right) \notag \\
    &= \pi_aA_a - \pi'_a \sum_{a',A_{a'}>0} \pi_{a'}A_{a'} - \pi_a \sum_{a',A_{a'}<0}\pi_{a'}A_{a'},
\end{align}

Since the expectation of advantage is zero, and we have defined $C(s_t) = \sum_{a',A_{a'}>0} \pi_{a'}A_{a'}$, yielding

\begin{equation}
    \frac{\partial}{\partial z_a} \mathbb{E}_{a'\sim \pi} \log \pi'_{a'} A'_{a'} = \pi_aA_a - C(s_t)(\pi'_a - \pi_a).\label{eq:era_grad}
\end{equation}

For the vanilla policy-gradient loss, this reduces to

\begin{equation}
    \frac{\partial}{\partial z_a} \mathbb{E}_{a'\sim \pi} \log \pi_{a'} A_{a'} = \pi_a A_a\label{eq:pg_grad}
\end{equation}

Meanwhile, by a similar derivation, the gradient of the KL divergence is

\begin{equation}
    \frac{\partial}{\partial z_a} \mathrm{KL}[\pi'\|\pi] = -\frac{\partial}{\partial z_a} \mathbb{E}_{a'\sim \pi'} \log \pi_{a'} = \pi_a - \pi'_a.\label{eq:kl_grad}
\end{equation}

Thus, by combining~\eqref{eq:era_grad}, \eqref{eq:pg_grad} and \eqref{eq:kl_grad}, the ERA-adjusted objective can be written as

\begin{equation}
    J'(\theta) = \mathbb{E}_t[\mathbb{E}_{a_t\sim \pi_\theta(\cdot |s_t)}\underbrace{\log \pi_\theta(a_t|s_t)A_t}_{J_{\text{PG}}}  + \text{sg}(C(s_t))\underbrace{\mathrm{KL}[\pi_\theta'(\cdot |s_t), \pi_\theta(\cdot |s_t)]}_{J_{\text{KL}}}],
\end{equation}

where the $\text{sg}(\cdot)$ denotes the stop gradient operator. For the other case $\omega_{\text{low}}\leq$ (we have assumed that $H_{\text{resp}}\leq \omega_{\text{high}}$, the same structure holds; only the definition of $\pi'_\theta$ changes. Hence, ERA is equivalent to a policy gradient objective augmented with an adaptive KL regularizer that sharpens or flattens the distribution depending on $H_{\text{resp}}$ and also the value of $C(s_t)$.

We will evaluate the instantaneous directional derivative of entropy along these gradient directions (this corresponds to the first-order change in entropy under an infinitesimal step in the indicated direction).

Using \eqref{eq:dH_dz}, the first-order change of entropy caused by \(J_{\text{PG}}\) is
\begin{align}
  \Delta H_{\text{PG}}
  &=\sum_a \frac{\partial H}{\partial z_a} \cdot \pi_a A_a \notag \\
  &= \sum_a -\pi_a(\log\pi_a + H)\cdot \pi_a A_a \notag \\
  &= -\sum_a \pi_a^2 A_a (\log\pi_a+H) \notag\\
  &= -\mathrm{Cov}_\pi (\pi_a A_a, \log \pi_a).\label{eq:deltaH_PG}
\end{align}

By assumption (iv) this term is bounded below by \(-\alpha H\):
\[
  \mathbb{E}[\Delta H_{\text{PG}}] \ge -\alpha H.
\]
Thus the vanilla policy-gradient component can decrease entropy, but by no more than \(\alpha H\) in magnitude.

Similarly, the KL-term directional derivative is
\begin{align}
  \Delta H_{\text{KL}}
  &= \sum_a \frac{\partial H}{\partial z_a} \cdot (\pi_a - \pi_a') \notag \\
  &= \sum_a -\pi_a(\log\pi_a + H) \cdot(\pi_a - \pi'_a) \notag \\
  &= \sum_a \pi_a(\pi_a' - \pi_a)(\log\pi_a + H) \notag\\
  &= \mathrm{Cov}_\pi (\pi_a' - \pi_a, \log \pi_a)\label{eq:deltaH_KL_pre}
\end{align}

By assumption (v) we have \(\mathrm{Cov}_\pi(\pi'_a - \pi_a,\log\pi_a)\ge B_k H\). Using assumption (ii) \(C(s_t)\ge\gamma\) therefore yields
\[
  C(s_t)\Delta H_{\text{KL}} \ge \gamma B_k H.
\]

Combining the two contributions,
\[
  \mathbb{E}[\Delta H] = \mathbb{E}[\Delta H_{\text{PG}} + C(s_t)\Delta H_{\text{KL}}]
  \ge -\alpha H + \gamma B_k H
  = (\gamma B_k - \alpha) H.
\]

By the hypothesis \(\gamma B_k - \alpha> \beta\) we have \(\Delta H > \beta H\) whenever \(H>0\) and \(H\) is in the sharpening regime. Thus, if \(H_{\text{resp}}\) drops below \(\omega_{\text{low}}\), the ERA-induced update produces a positive first-order increase in entropy proportional to \(H_{\text{resp}}\). Consequently the dynamics push \(H_{\text{resp}}\) upward until it leaves the sharpening regime (i.e., until \(H_{\text{resp}}\ge\omega_{\text{low}}\) or the KL-term no longer sharpens).

Formally, the expected change of total entropy is at least
\begin{equation}
    \beta \mathbb{E}_{H_{\text{resp}}<\omega_{\text{low}}}[H_{\text{resp}}] - \alpha \mathbb{E}_{H_{\text{resp}}\geq \omega_{\text{low}}}[H_{\text{resp}}]
\end{equation}

Applying Markov’s inequality gives
\(\Pr(H_{\text{resp}}\ge\omega_{\text{low}})\leq \mu/\omega_{\text{low}}\),
where \(\mu=\mathbb{E}[H_{\text{resp}}]\).
Further, by assumption (iii): $H_{\text{min}}\leq H_{\text{resp}}\leq \omega_{\text{high}}$, we obtain the sufficient condition to make the expected entropy change positive:
\[
\beta > \alpha \cdot
\frac{\mu \omega_{\text{high}}}{(\omega_{\text{low}}-\mu)H_{\min}}.
\]
Then there exists a constant $\mu$, such that the expected change of total entropy is positive. Therefore by taking $\mathcal{H}_0=\mu$, \(H_{\text{resp}}\) is prevented from collapsing to zero and satisfies
\[
  H_{\text{resp}} \ge \mathcal{H}_0.
\]

\end{proof}

We now justify the assumptions made in Proposition~\ref{prop:llm_era}.

\begin{enumerate}[label=(\roman*)]
    \item The first assumption, namely approximating entropy differences by treating logits as policy parameters, is standard and also adopted by~\citep{cui2025entropy}.
    
    \item Recall that $C(s_t)=\sum_{a,A_{a}>0} \pi_{a}A_{a}$ measures the aggregated positive advantage, which reflects the ``importance’’ of a token. Intuitively, $C(s_t)$ indicates whether a token should remain explorative and thus be subject to entropy regularization. We assume that for important tokens, $C(s_t)$ is uniformly bounded below by some constant $\gamma>0$.
    
    \item Empirically, our training curves show that responses with $H_{\text{resp}}>\omega_{\text{high}}$ vanish rapidly, and such cases contribute negligibly to the average entropy. This supports the assumption $H_{\text{resp}}\leq \omega_{\text{high}}$. Moreover, in the early stage of training, the highest entropy tokens (top $20\%$) contain a lot of exploratory tokens, exhibiting a large average entropy, motivating the assumption of a positive lower bound $H_{\text{resp}}\geq H_{\min}$.
    
    \item It is provable that
    \[
    \mathrm{Cov}_\pi (\pi_aA_a, \log \pi_a)\leq H,
    \]
    where $H$ denotes the entropy. In practice this upper bound is rarely tight, and we assume instead a looser bound with a small constant $\alpha \in (0,1)$.

    \item In our regime, the entropy is low enough that the token with the largest probability dominates (with probability $\geq 0.6$). In this setting, the covariance is large enough and is proportional to the entropy $H$.
\end{enumerate}

In practice, the observed entropy lower bound is higher than the theoretical bound derived in Proposition~\ref{prop:llm_era}, owing both to the looseness of the Markov inequality used in the derivation and to the fact that the tokens outside $H_{\text{resp}}$ (bottom $80\%$) also get an entropy boost.
\section{Additional Results}
\label{sec:additional_results}
\subsection{Additional Results on Continuous Control Tasks}
\label{subsec:additional_continuous}
In this subsection, we provide additional experimental results on continuous control tasks to further validate the effectiveness of our proposed method, \short, and to find more insights regarding entropy regularization in reinforcement learning.

\subsubsection{Truncated Gaussian is more stable than Tanh Gaussian}
\begin{figure*}[h]
    \centering
    % --- Subfigure (a) ---
    \begin{subfigure}[b]{0.24\textwidth}
        \centering
        \includegraphics[width=\textwidth]{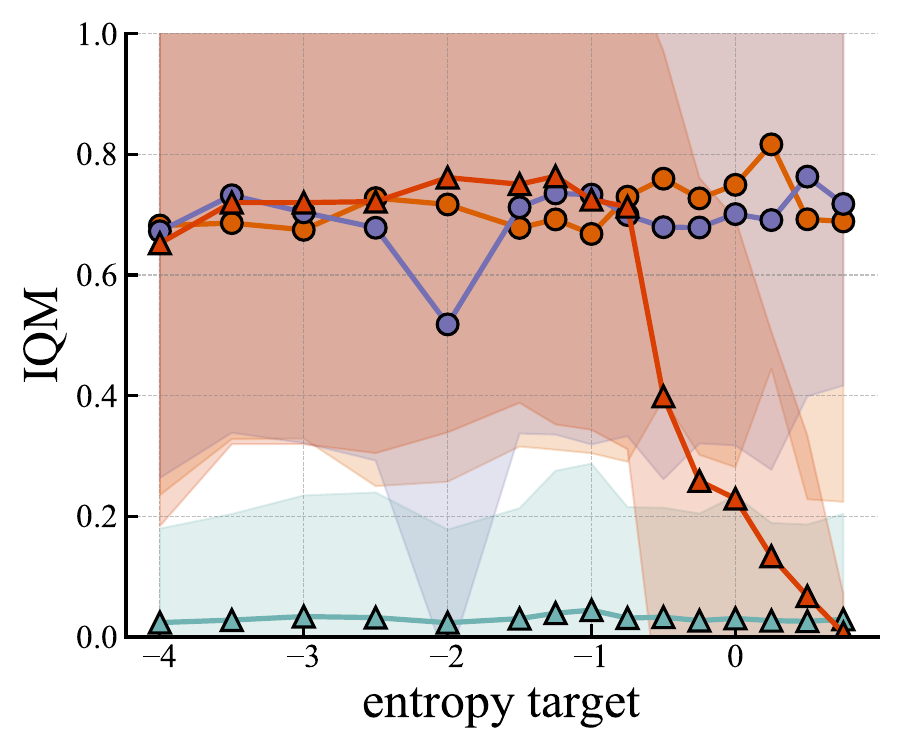}
        \vskip -0.1in
        \subcaption{}
        \label{fig:dist_truncated}
    \end{subfigure}
    %\hfill
    % --- Subfigure (b) ---
    \begin{subfigure}[b]{0.48\textwidth}
        \centering
        \includegraphics[width=\textwidth]{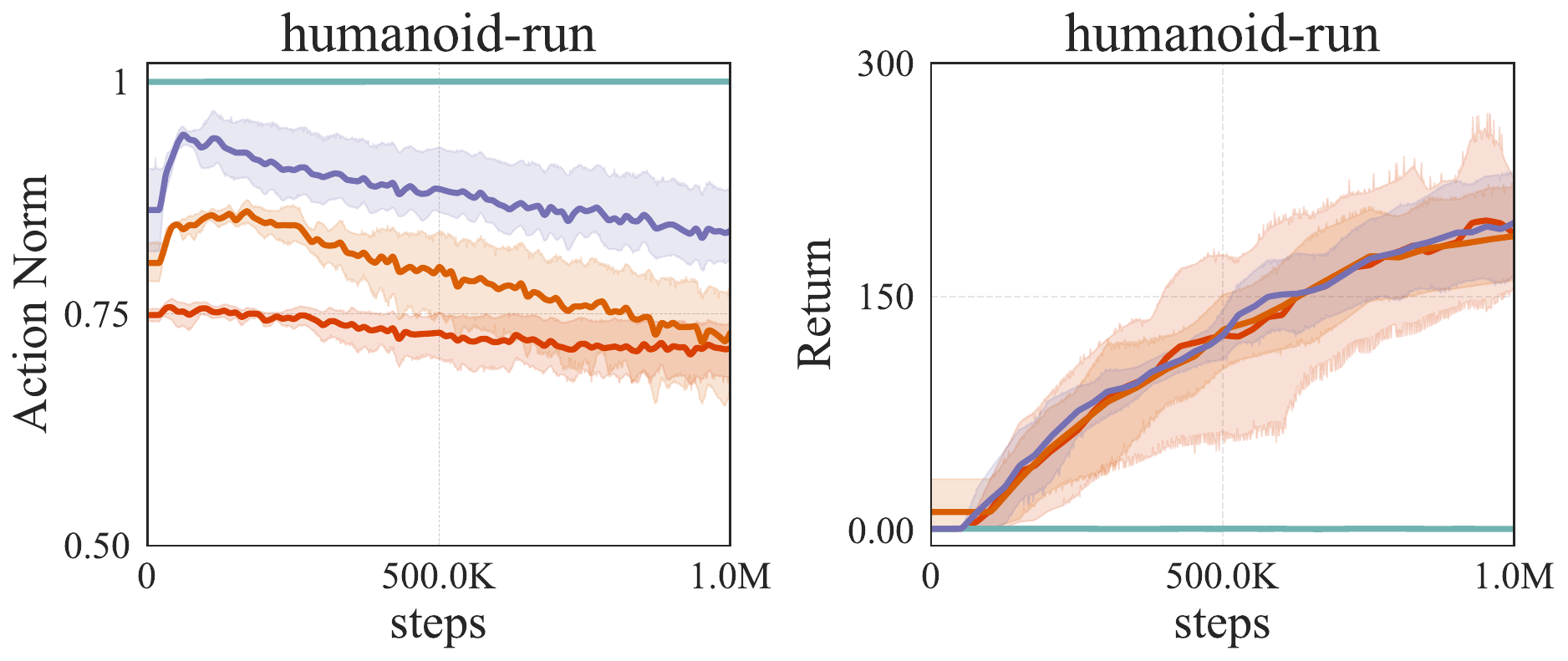}
        \vskip -0.1in
        \subcaption{}
        \label{fig:dist_tanh}
    \end{subfigure}
    %\hfill
    % --- Subfigure (c) ---
    \begin{subfigure}[b]{0.24\textwidth}
        \centering
        \includegraphics[width=\textwidth]{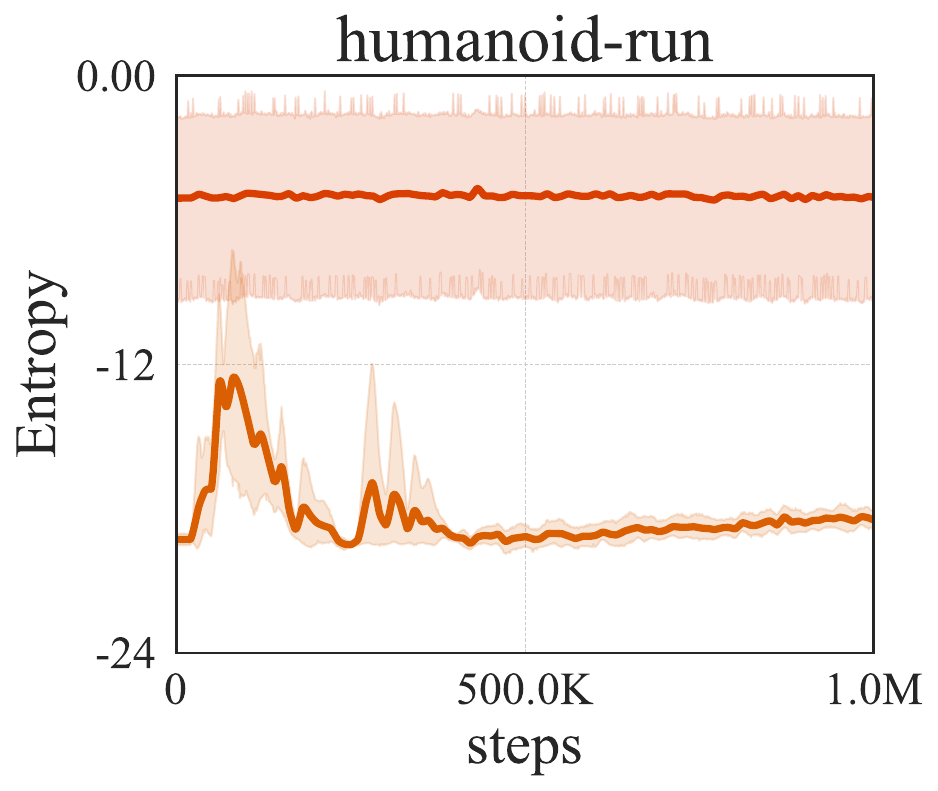}
        \vskip -0.1in
        \subcaption{}
        \label{fig:dist_residual}
    \end{subfigure}
    
    %\vspace{1em}
    \includegraphics[width=0.96\textwidth]{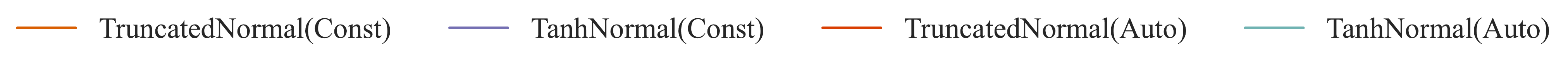}
    \vskip -0.1in
    \caption{
        \textbf{Analysis of Policy Distributions.} 
        Comparison of Truncated and Tanh Gaussian policies with varying $\delta$ on DMC tasks. Target entropy represents the desired average entropy per action dimension.
        (a) The Truncated Gaussian exhibits greater stability across four DMC tasks. 
        (b) For the Tanh Gaussian with a learned $\delta$, instability arises as action norms approach the boundary, causing training to collapse. 
        (c) The Truncated Normal distribution's entropy remains stable and well-controlled in both modes, shown here for a target entropy of -0.75.
    }
    \label{fig:ablation_distribution}
    %\vskip -0.2in
\end{figure*}
We study the choice of policy distribution and the handling of its standard deviation, $\delta$. We compare a Truncated Gaussian against a Tanh-squashed Gaussian, each with a constant $\delta$ (set to 0 in our experiments) and a learned $\delta$, using SAC on four hardest tasks from the DMC Dog \& Humanoid suites(\textit{dog-run, dog-trot, humanoid-run, humanoid-walk}) with 5 seeds and 1M environmental steps. As shown in Figure~\ref{fig:ablation_distribution}, the Truncated Gaussian is significantly more stable. The Tanh Gaussian experiences catastrophic training failures when $\delta$ is learned. Our analysis reveals that with the Tanh Gaussian, the action norm often approaches the distribution's boundaries. This causes the learned $\delta$ to grow explosively, creating a vicious cycle of instability as the policy attempts to output actions near the boundary while satisfying the entropy objective. This issue is absent in the Truncated Gaussian, which yields stable $\delta$ values. Given that the performance difference between a learned and a constant $\delta$ is minimal under the Truncated Gaussian, we adopt the constant $\delta$ of 0 setting for its simplicity in main results.

\subsubsection{Batch-level Entropy Regularization v.s. State-level Entropy Regularization}
\begin{figure*}[h]
    %\vskip 0.2in
    \begin{center}
    \centerline{\includegraphics[width=0.6\textwidth]{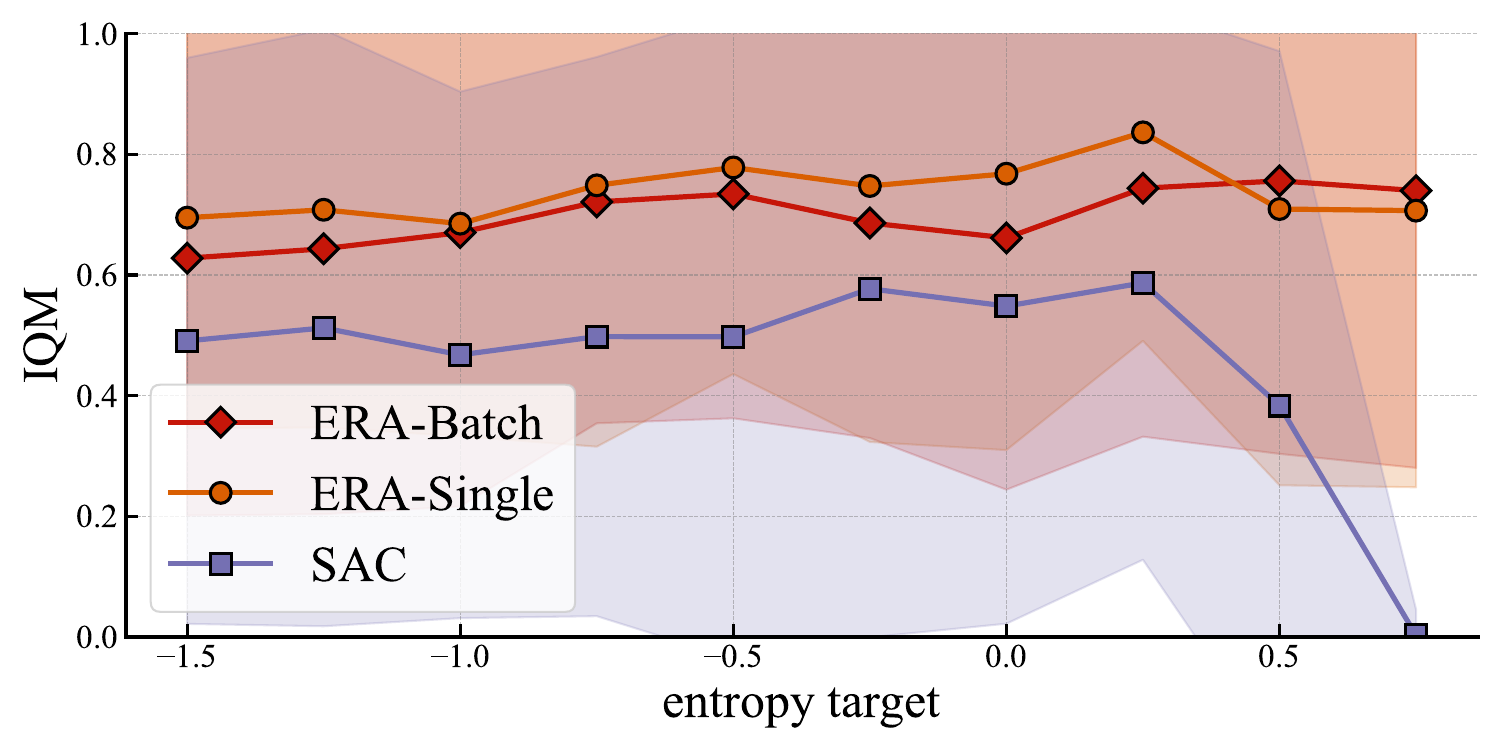}}
    \end{center}
    \vskip -0.25in
    \caption{\textbf{Comparison between state-level and batch-level entropy regularization methods on DMC Dog \& Humanoid suites.} Both methods outperform the SAC baseline.}
    %\vskip -0.2in
    \label{fig:batch_era}
\end{figure*}
In addition to the state-level entropy regularization method presented in the main paper, we also investigate a batch-level entropy regularization method, which directly constrains the expected entropy of the action distribution over $\rho_{\pi}$.
Specifically, we modify the activation form of \short\ in Eq.~\ref{eq:continuous_era} to the form in Eq.~\ref{eq:batch_era}.
\begin{equation}
    \mu' = \mu, \quad \sigma' = \exp{\left[\max\left(\log \sigma_{\max} + \left(\frac{\mathcal{H}_0'}{D} -\log \sqrt{2\pi e} - \log \sigma_{\max}\right)\frac{e^{\hat{\sigma}_i}}{\bar{e}^{\hat{\sigma}}} , \log\sigma_{\min}\right)\right]}
    \label{eq:batch_era}
\end{equation}
Where $\bar{e}^{\hat{\sigma}} = \frac{1}{N}\sum_{i=1}^{N} e^{\hat{\sigma}_i}$ is the average of $e^{\hat{\sigma}}$ over the batch. During training, we can calculate $\bar{e}^{\hat{\sigma}}$ over the sampled batch, and during evaluation, we can use a running average of $\bar{e}^{\hat{\sigma}}$ over the training process, which is similar to the running statistics in BatchNorm~\citep{ioffe2015batchnormalizationacceleratingdeep}.
We conduct an ablation study to compare the performance of state-level and batch-level entropy regularization methods on DMC Dog \& Humanoid suites(\textit{dog-run, dog-trot, humanoid-run, humanoid-walk}).
As shown in Figure~\ref{fig:batch_era}, both methods achieve similar performance, outperforming the SAC baseline.
This indicates that in locomotion-dominated control tasks, which require high exploration due to the need for randomness but do not demand high precision, the difference between state-level and batch-level entropy regularization is minimal.

\subsubsection{SAC-\short\ on Mujoco Gym Environments}
We also evaluate the performance of SAC-\short\ on the classic Mujoco Gym environments, including \textit{HalfCheetah-v4, Hopper-v4, Walker2d-v4, Ant-v4, Humanoid-v4, Swimmer-v4}, and compare it with the SAC baseline. Figure~\ref{fig:mujoco} shows the learning curves of SAC-\short\ and SAC on these environments.
Despite their massive performance gap on HumanoidBench, SAC-\short\ demonstrates only slight advantages over SAC on Mujoco Gym environments. This may be due to the relatively low action space dimensionality in Mujoco environments, which reduces the impact of different constraint schemes. This finding suggests that modern algorithm design should shift focus from considering Mujoco to higher-dimensional action spaces, which can better evaluate algorithm performance in complex environments.

\begin{figure*}[h]
    %\vskip 0.2in
    \begin{center}
    \centerline{\includegraphics[width=0.96\textwidth]{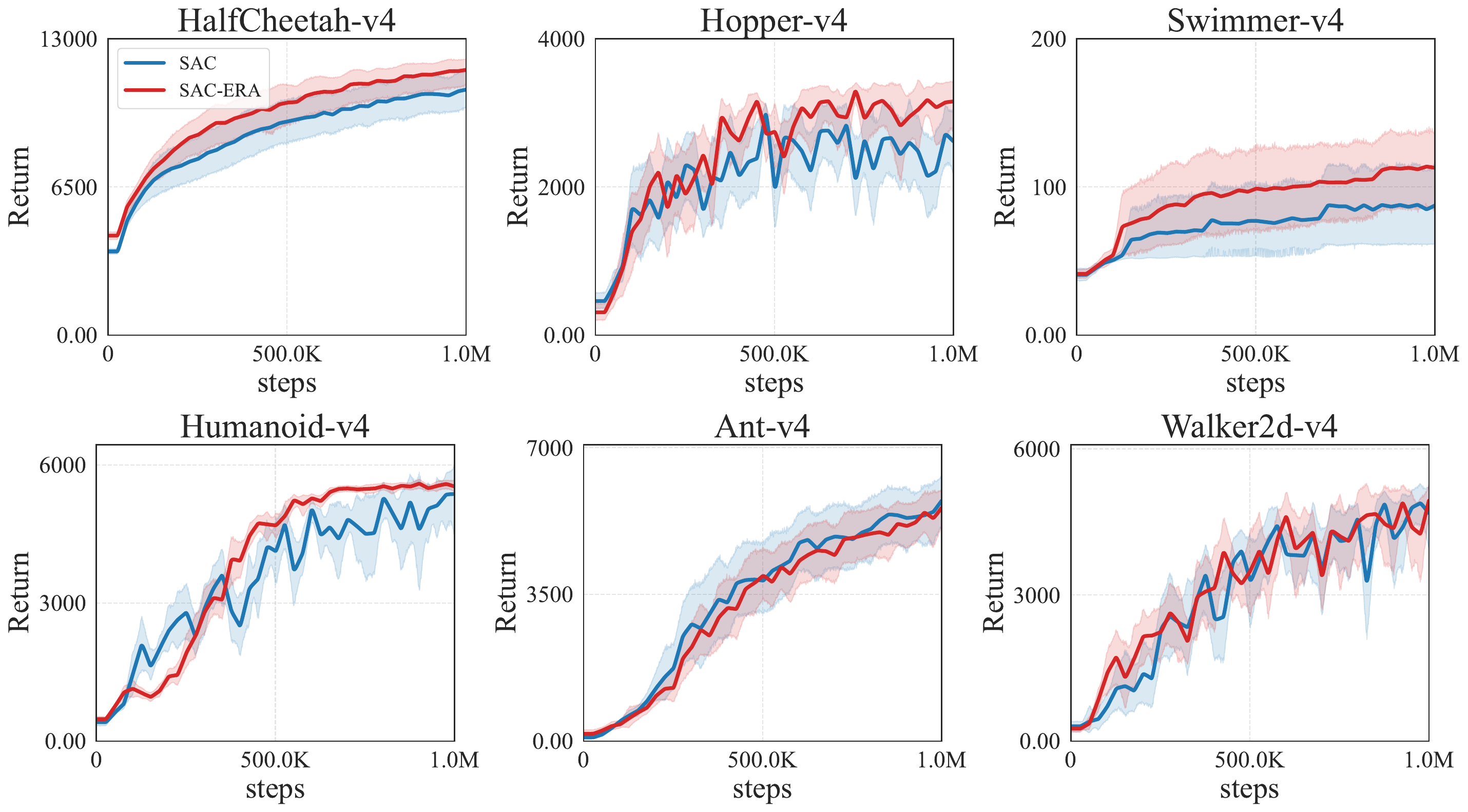}}
    \end{center}
    \vskip -0.25in
    \caption{\textbf{Learning curves of SAC-\short\ and SAC on Mujoco Gym environments.} SAC-\short\ demonstrates very slight advantages over SAC.}
    \vskip -0.2in
    \label{fig:mujoco}
\end{figure*}

\subsubsection{Applicability of LLM RL Techniques to Continuous Control}
\label{subsec:llm_rl2control}
We investigated the applicability of two recent techniques from Reinforcement Learning for Large Language Models (LLM RL), designed to prevent entropy collapse, to the domain of continuous control.
Specifically, we trained a PPO agent on the HalfCheetah-v4 benchmark for 10 random seeds, incorporating two distinct methods: Selective High-Entropy Training, which trains the agent only on a certain proportion of high-entropy samples, and Clip-Higher, which applies a larger clip ratio for advantages greater than one.
Recognizing the significant disparities between LLM RL and continuous control tasks, we evaluated a range of parameters for each technique to ensure that any ineffectiveness was not due to improper parameter selection.

The results, presented in Figure~\ref{fig:ablation_llm_tricks}, show that these techniques struggle to provide higher policy entropy compared to the standard PPO algorithm in the control task.
Furthermore, they yield no significant or only marginal performance improvements; we suspect such minor gains may not even stem from better entropy regularization.
Consequently, the performance of these methods is not comparable to our proposed approach, \short.
These findings lead to two main conclusions.
First, they highlight the substantial differences between LLM RL and continuous control, demonstrating that techniques effective in one domain do not necessarily transfer to the other, even when using the same algorithmic framework.
Second, they underscore the superior performance of our proposed \short~method.
\begin{figure*}[h]
    \centering

    % --- Subfigure (a) ---
    \begin{subfigure}[b]{0.48\textwidth}
        \centering
        \includegraphics[width=\textwidth]{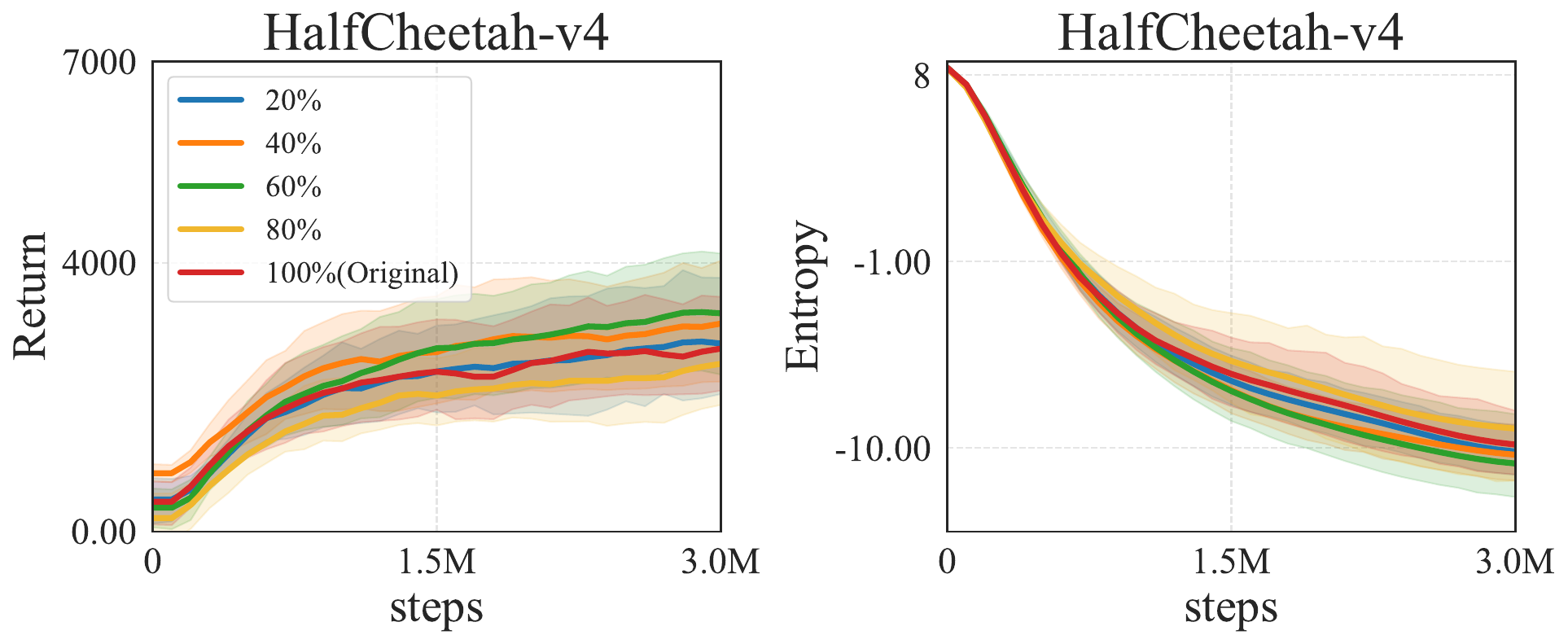}
        \subcaption{}
        \label{fig:ppo_llm_scope}
    \end{subfigure}
    % --- Subfigure (b) ---
    \begin{subfigure}[b]{0.48\textwidth}
        \centering
        \includegraphics[width=\textwidth]{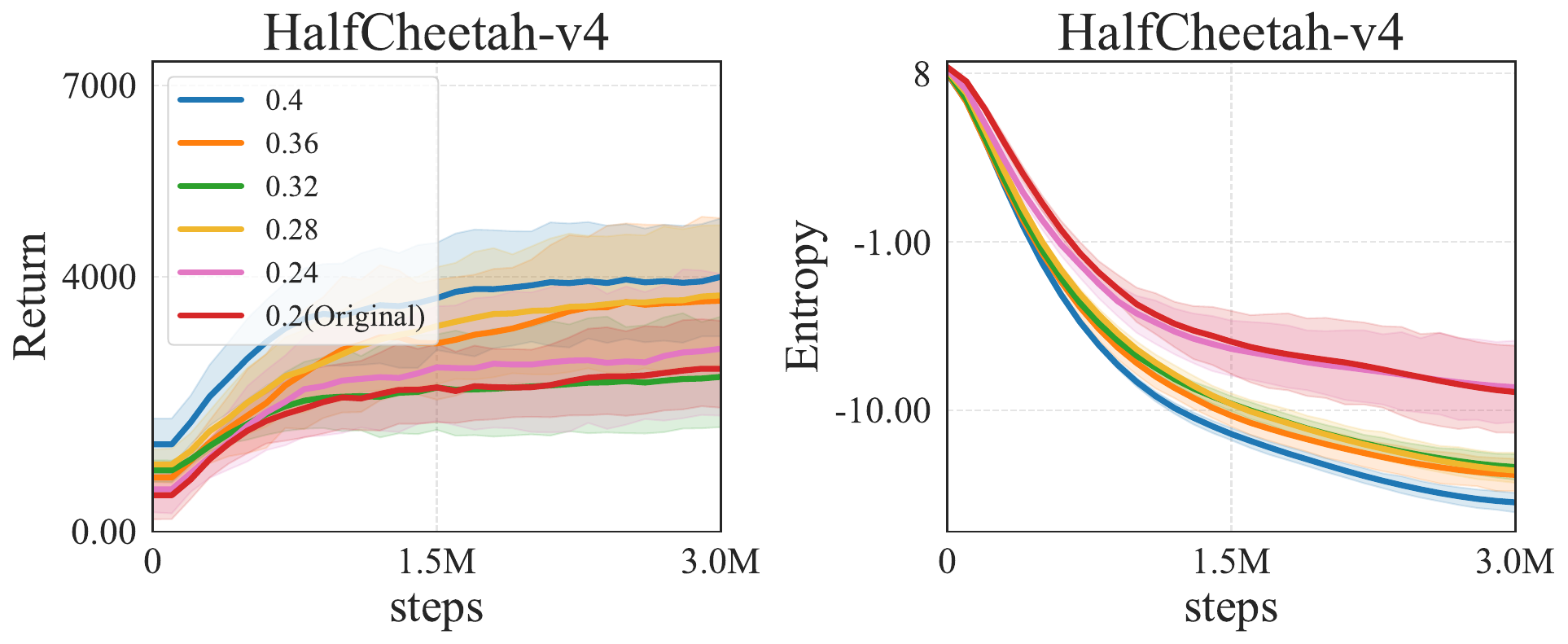}
        \subcaption{}
        \label{fig:ppo_llm_clip}
    \end{subfigure}

    \caption{
        \textbf{Results of Selective High-Entropy Training and a Clip-Higher Strategy in Continuous Control.}
        (a) Performance when training the agent exclusively on a top percentage of high-entropy samples.
        (b) Performance of the clip-higher strategy with varying clipping ratios.
    }
    \label{fig:ablation_llm_tricks}
    \vskip -0.2in
\end{figure*}

\subsubsection{Comparing \short\ with Other Maximum Entropy RL Approaches}
In addition to the methods previously discussed, several other approaches have been explored to implement maximum entropy reinforcement learning, including recent diffusion-based and flow-based methods~\citep{celik2025dimediffusionbasedmaximumentropyreinforcement,chao2024maximumentropyreinforcementlearning,ma2025efficientonlinereinforcementlearning}. However, these methods often require significantly more computational resources due to their complex training procedures. For instance, the MEow algorithm~\citep{chao2024maximumentropyreinforcementlearning} requires at least 2.3 times the training time of SAC. In this part, we compare our proposed method, \short, with two recent baseline methods that also adopt gaussian policies for maximum entropy reinforcement learning:
\begin{itemize}[leftmargin=0.5cm]
    \item \textbf{EAPO}~\citep{choe2024maximumentropyonpolicyactorcritic}: The core innovation of Entropy Advantage Policy Optimisation (EAPO) is the decomposition of the maximum entropy reinforcement learning objective into two components: the conventional cumulative reward and the trajectory entropy. It then independently estimates the advantage function for each of these components. EAPO introduces a dedicated "entropy critic" to separately quantify and learn the value of future uncertainty, which is then combined with the traditional value of future rewards to provide a more comprehensive guidance signal for policy updates.
    \item \textbf{MNSE}~\citep{zhong2024maximum}: The Maximum Next-State Entropy (MNSE) paper argues for the direct maximization of next-state entropy. This is because next-state entropy more directly measures the diversity of states induced by the policy, which can lead to more efficient exploration.
\end{itemize}
Since there's no public code repositories of these methods, we directly use the curves reported in their original papers for comparison. The experimental setups are as follows:
\begin{itemize}[leftmargin=0.5cm]
    \item EAPO utilizes the PPO algorithm as its base and was trained for 4 million timesteps~(Which is more than the 3 million timesteps used in PPO-\short).
    \item MNSE is built upon the SAC algorithm and was trained for 1 million timesteps~(Which is the same as SAC-\short).
\end{itemize}
We compare PPO-\short\ with EAPO, and SAC-\short\ with MNSE on Mujoco Gym benchmark. The results are presented in Figure~\ref{fig:compare_eapo} and Figure~\ref{fig:compare_mnse}. As shown, \short\ demonstrates superior performance over EAPO when both are built on PPO, and it also outperforms MNSE when SAC is used as the base algorithm. Although Mujoco Gym is a relatively low-difficulty benchmark, we are limited to it as neither of the other papers presented results in more complex environments like DMC Suite or HumanoidBench. These findings suggest that \short\ is more effective than other implementations of maximum entropy reinforcement learning.

\begin{figure*}[h]
    %\vskip 0.2in
    \begin{center}
    \centerline{\includegraphics[width=0.96\textwidth]{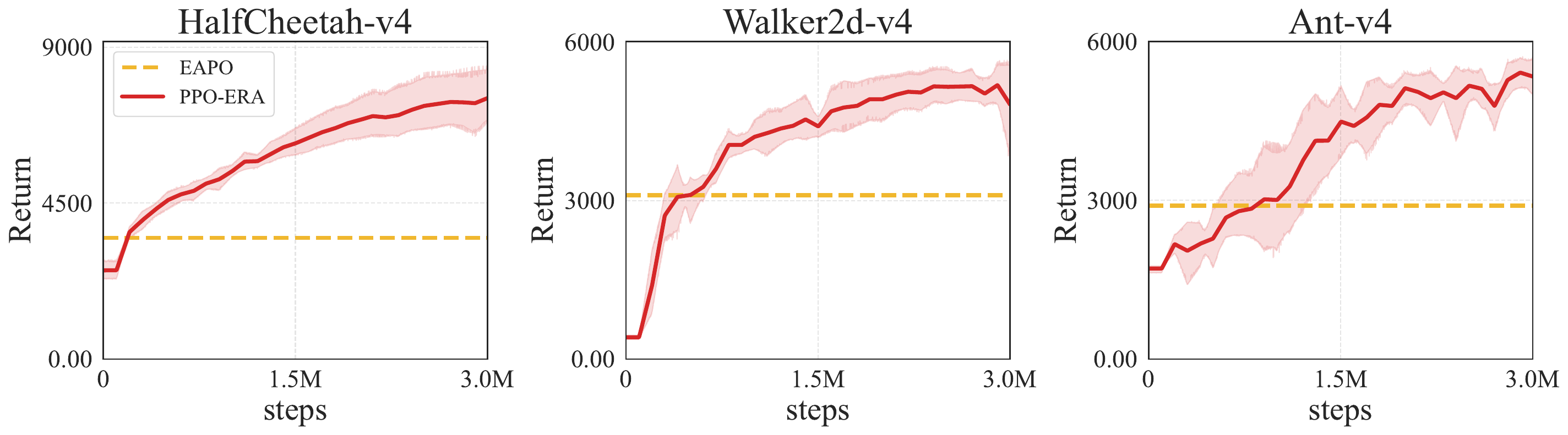}}
    \end{center}
    \vskip -0.25in
    \caption{Performance comparison of PPO-\short\ against EAPO on MuJoCo benchmark tasks.}
    \vskip -0.1in
    \label{fig:compare_eapo}
\end{figure*}
\begin{figure*}[h]
    %\vskip 0.2in
    \begin{center}
    \centerline{\includegraphics[width=0.96\textwidth]{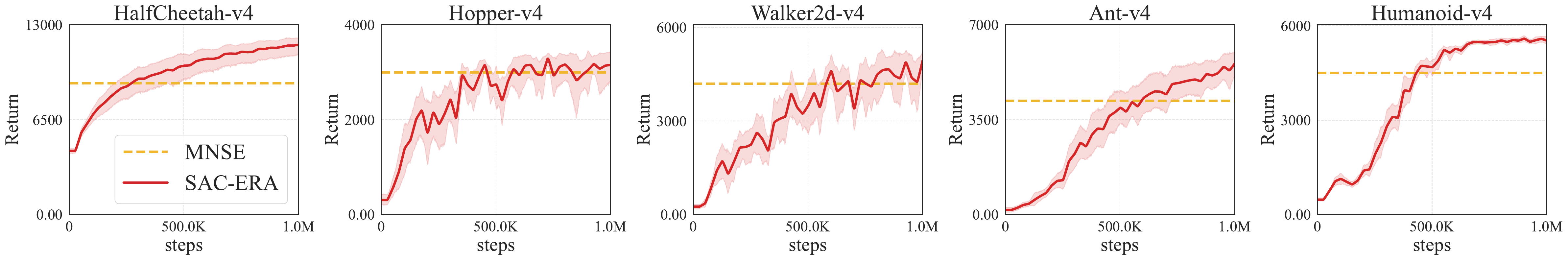}}
    \end{center}
    \vskip -0.25in
    \caption{Performance comparison of SAC-\short\ against MNSE on MuJoCo benchmark tasks.}
    %\vskip -0.2in
    \label{fig:compare_mnse}
\end{figure*}

Furthermore, both EAPO and MNSE require additional network architectures and computational resources. EAPO necessitates an extra entropy critic network, while MNSE requires an additional inverse dynamics model network. In contrast, \short\ does not require any additional networks, leading to a negligible increase in computational overhead. This makes \short\ a more advantageous choice for practical applications.

\subsubsection{Time Cost of \short\ in Continuous Control}
\label{subsubsec:time_continuous}
A potential concern might be the additional time overhead introduced by using \short. To evaluate this, we recorded the training times of FastTD3 and FastSAC-\short\ on HumanoidBench, as shown in Figure~\ref{fig:time_continuous}. It can be observed that using \short\ does introduce some time overhead due to the more complex activation function applied to the output. However, this overhead accounts for only about 6\% of the total training time on average. Considering the improved exploration performance and higher sample efficiency brought by \short, we believe this is a worthwhile trade-off.

\begin{figure}[h!]
%\vskip 0.2in
\begin{center}
\centerline{\includegraphics[width=0.4\textwidth]{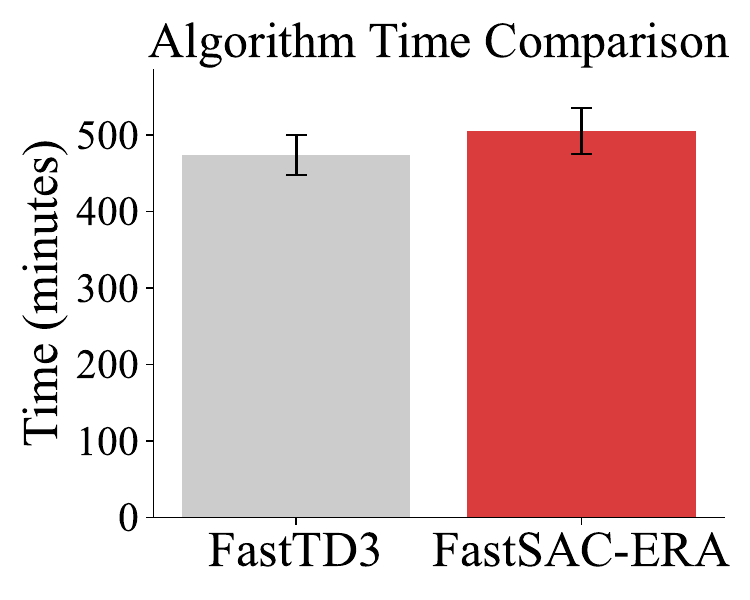}}
\caption{\textbf{Time comparison on HumanoidBench.} We compare the training time of FastTD3 and FastSAC-\short on HumanoidBench. The results show that using \short introduces a modest time overhead, averaging around 6\% of the total training time, which is a reasonable trade-off for the improved exploration performance and sample efficiency it provides.}
\label{fig:time_continuous}
\end{center}
\vskip -0.2in
\end{figure}
\subsection{Additional Results on Image Classification}
\label{subsec:additional_vision}
\subsubsection{Comparing \short\ With Common Regularization Techniques}
A plethora of regularization methods have been proposed and utilized in the field of image classification. To further investigate the comparative effectiveness of \short\ against commonly used regularization methods like dropout and label smoothing in the vision domain, we conducted a series of straightforward comparative experiments on the CIFAR-10 dataset. In our main experiment, we adopted the default settings from the \texttt{timm} library, which include a label smoothing factor of 0.1 and no dropout. For the sake of comparison, we respectively adjusted the label smoothing factor to 0.2 and 0.3, and the dropout rate to 0.1, 0.2, and 0.3. The results were then compared against the baseline algorithm from our main experiment and \short.

The experimental results are presented in Figure~\ref{fig:vision_regularization}. The findings indicate that increasing the intensity of label smoothing adversely affects model performance, while the improvement from employing dropout is marginal (the top-1 accuracy may decrease, whereas the top-5 accuracy shows a improvement). In contrast, \short\ effectively and consistently enhances model performance, with a margin of improvement significantly superior to that of both dropout and label smoothing. This outcome further validates the advantage of \short\ over conventional regularization methods. While constraining the model's entropy, \short\ permits the model to freely allocate uncertainty among dimensions, thereby better adapting to the intrinsic structure of the data. This enables \short\ to more effectively boost the model's generalization capability.
\begin{figure*}[h]
    %\vskip 0.2in
    \begin{center}
    \centerline{\includegraphics[width=0.96\textwidth]{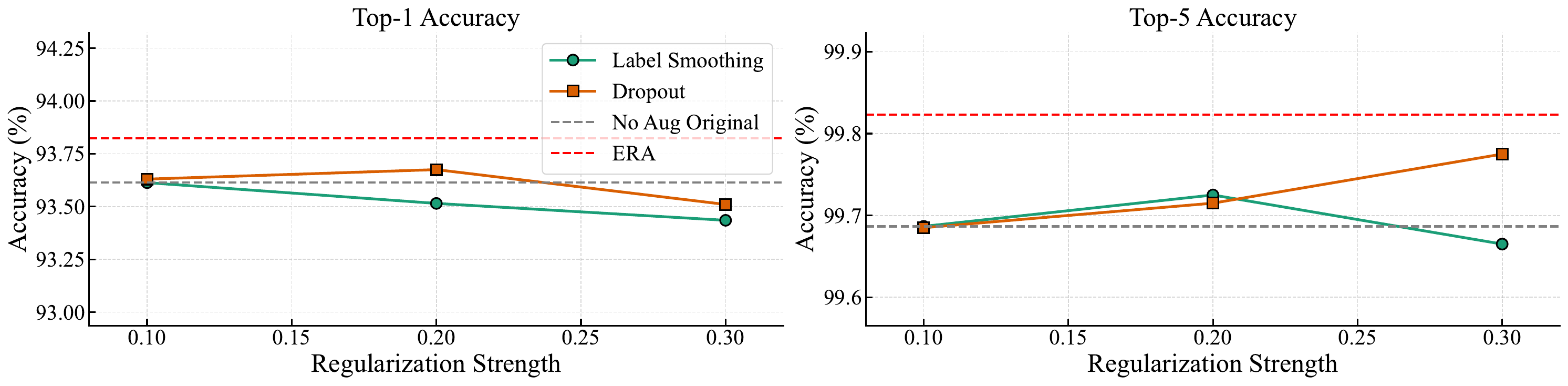}}
    \end{center}
    \vskip -0.25in
    \caption{\textbf{Comparison of different regularization methods on the CIFAR-10 dataset.} The left subplot shows the Top-1 accuracy, and the right subplot shows the Top-5 accuracy. Our method, \short\, is compared against varying intensities of Label Smoothing and Dropout.}
    \vskip -0.2in
    \label{fig:vision_regularization}
\end{figure*}

\subsubsection{Time Cost of \short\ in Image Classification}
We compared the training time of the ResNet-50 model on the CIFAR-10 dataset, with and without using \short, under the data augmentation supported by the \texttt{timm} library.
Consistent with our main results, the experiments were conducted on three machines, each equipped with four NVIDIA A40 GPUs, and we report the average training time.
The results are presented in Figure~\ref{fig:vision_time}.
As shown in the figure, since the data is already well-parallelized, there is almost no difference in training time between the algorithm using \short\ and the original version.

\begin{figure}[h!]
%\vskip 0.2in
\begin{center}
\centerline{\includegraphics[width=0.4\textwidth]{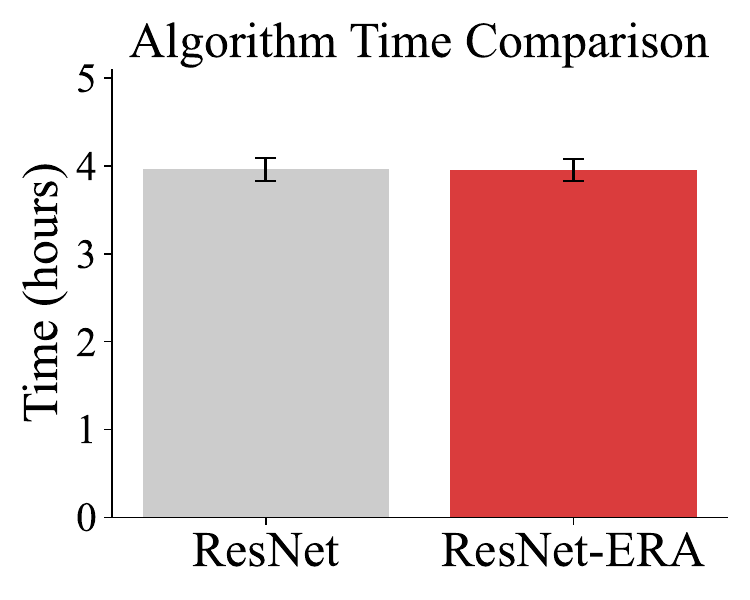}}
\caption{\textbf{Time comparison on CIFAR-10.} We compare the training time of ResNet and ResNet-\short on CIFAR-10. The results show that using ERA introduces almost no time overhead.}
\label{fig:vision_time}
\end{center}
\vskip -0.2in
\end{figure}
\subsection{Additional Results on LLMs}
\label{subsec:additional_llm}

\subsubsection{Detailed Entropy Analysis}

We present the complete entropy curve of our two-stage training in Figure~\ref{fig:llm_entropy_combined}. After decreasing $\omega_{\text{low}}$, the entropy rapidly drops and stabilizes at the second-level entropy lower bound. This confirms that our ERA method successfully enforces a non-trivial entropy floor for the model.

\begin{figure}[h!]
%\vskip -0.2in
\begin{center}
\centerline{\includegraphics[width=0.6\columnwidth]{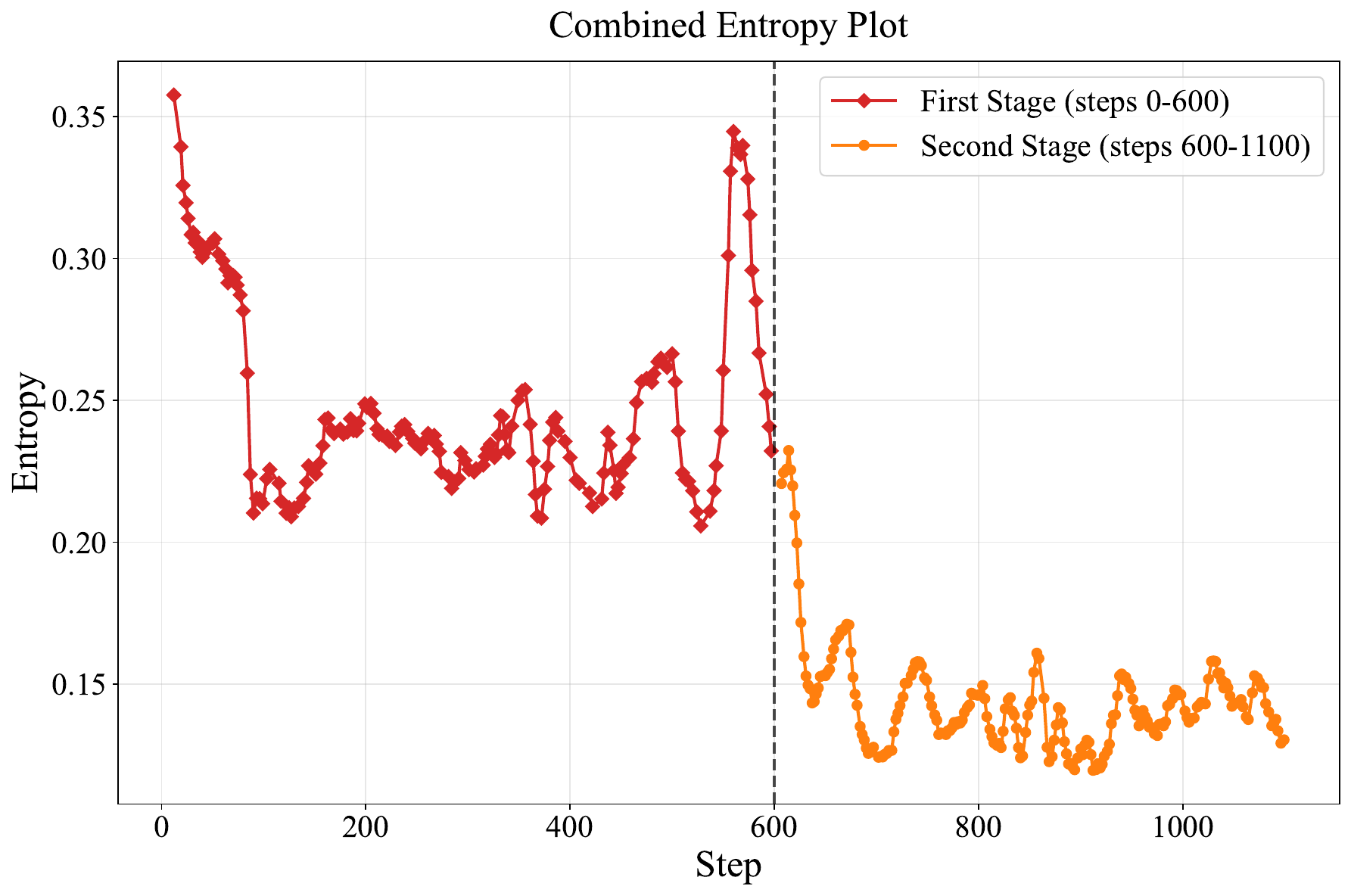}}
\caption{\textbf{Entropy curve during two-stage training.} After decreasing $\omega_{\text{low}}$, the entropy rapidly drops and stabilizes at the second-level entropy lower bound, showing that ERA enforces a non-trivial entropy floor.}
\label{fig:llm_entropy_combined}
\end{center}
\end{figure}

We further analyze the entropy distribution across tokens by plotting the average entropy of the top $20\%$ tokens ($H_{\text{resp}}$) and the bottom $80\%$ tokens in Figure~\ref{fig:llm_entropy_analysis}. This experiment is carried out with $\omega_{\text{low}}=0.45, \omega_{\text{high}}=3.0, k=2$ without \texttt{topk}. Following~\citet{wang2025beyond}, we observe that the bottom $80\%$ tokens exhibit nearly zero entropy, consistent with our theoretical prediction. Additionally, we plot the proportion of responses with $H_{\text{resp}} < \omega_{\text{low}}$, $H_{\text{resp}} > \omega_{\text{high}}$ in Figure~\ref{fig:llm_entropy_analysis}. The fraction of responses with $H_{\text{resp}} > \omega_{\text{high}}$ quickly drops to zero, while the fraction with $H_{\text{resp}} < \omega_{\text{low}}$ remains stable at the interval $[0, 0.06]$. This demonstrates that whenever overly low-entropy responses appear, ERA adaptively raises their entropy to a moderate level.

\begin{figure}[h!]
%\vskip -0.2in
\begin{center}
\centerline{\includegraphics[width=1\columnwidth]{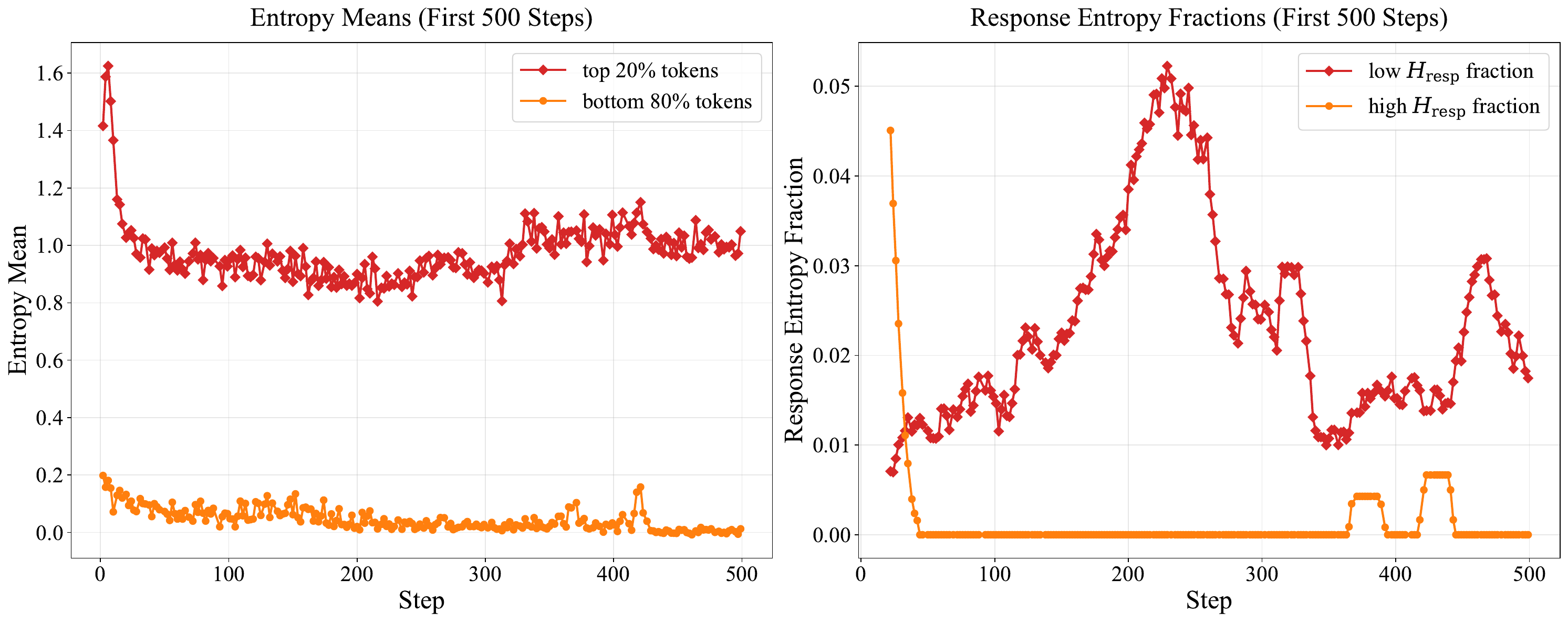}}
\caption{\textbf{Detailed entropy analysis.} Left: average entropy of the top $20\%$ tokens ($H_{\text{resp}}$) and the bottom $80\%$ tokens. Right: proportion of responses (running average with window size $20$) with $H_{\text{resp}} < \omega_{\text{low}}$ or $H_{\text{resp}} > \omega_{\text{high}}$, demonstrating ERA’s ability to prevent both entropy collapse and overly high entropy.}
\label{fig:llm_entropy_analysis}
\end{center}
\end{figure}

\subsubsection{Ablation Study on Entropy Bound}

Since the purpose of \(\omega_{\text{low}}\) is to set a lower bound on entropy, we explore the role of \(\omega_{\text{high}}\) in the ERA. As can be seen in Figure~\ref{effect of upper}, without the constraint of \(\omega_{\text{high}}\), the model's entropy explodes in a very short time. This indicates that adding an upper bound constraint during training is essential for controlling the entropy of the training process.

\begin{figure}[h!]
%\vskip -0.2in
\begin{center}
\centerline{\includegraphics[width=0.4\columnwidth]{}}
% \vspace{-10pt}
\caption{Comparison of ERA with and without \(\omega_{\text{high}}\). The entropy of ERA without \(\omega_{\text{high}}\) tends to explode within a very short number of steps, leading to the collapse of model training.}
\label{effect of upper}
\end{center}
\end{figure}

\subsubsection{Time Cost of ERA in LLM}

ERA is applied when computing the \texttt{log\_probs} of tokens in the responses. To evaluate its efficiency, we compare the value of \texttt{timing\_s/old\_log\_prob} at the first step in verl’s implementation. The experiments were conducted on 32 NVIDIA H20 GPUs, consistent with our main results. The outcomes are shown in Figure~\ref{fig:llm_time}. As illustrated, since the sampled response is identical in the first step, ERA introduces only about a $5.6\%$ overhead in time cost. When considering an entire training step, the overhead of ERA is even smaller, since its implementation does not affect other components of training (e.g., generation, model update, or advantage calculation).

\begin{figure}[h!]
%\vskip -0.2in
\begin{center}
\centerline{\includegraphics[width=0.4\columnwidth]{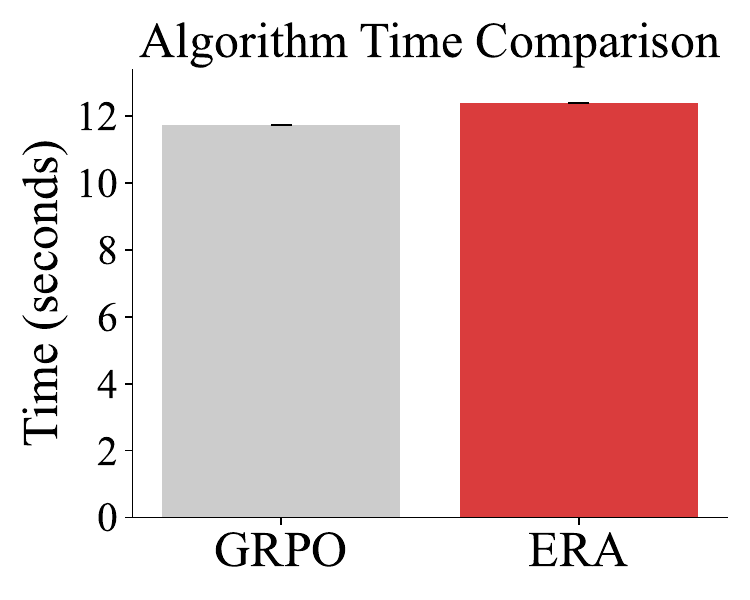}}
\caption{Comparison of computation time between GRPO and ERA, measured by \texttt{timing\_s/old\_log\_prob} at the first step. ERA introduces only about a $5.6\%$ overhead.}
\label{fig:llm_time}
\end{center}
\end{figure}
%\newpage
\subsection{Training Curves of Continuous Control Tasks}
\begin{figure*}[h]
    %\vskip 0.2in
    \begin{center}
    \centerline{\includegraphics[width=0.96\textwidth]{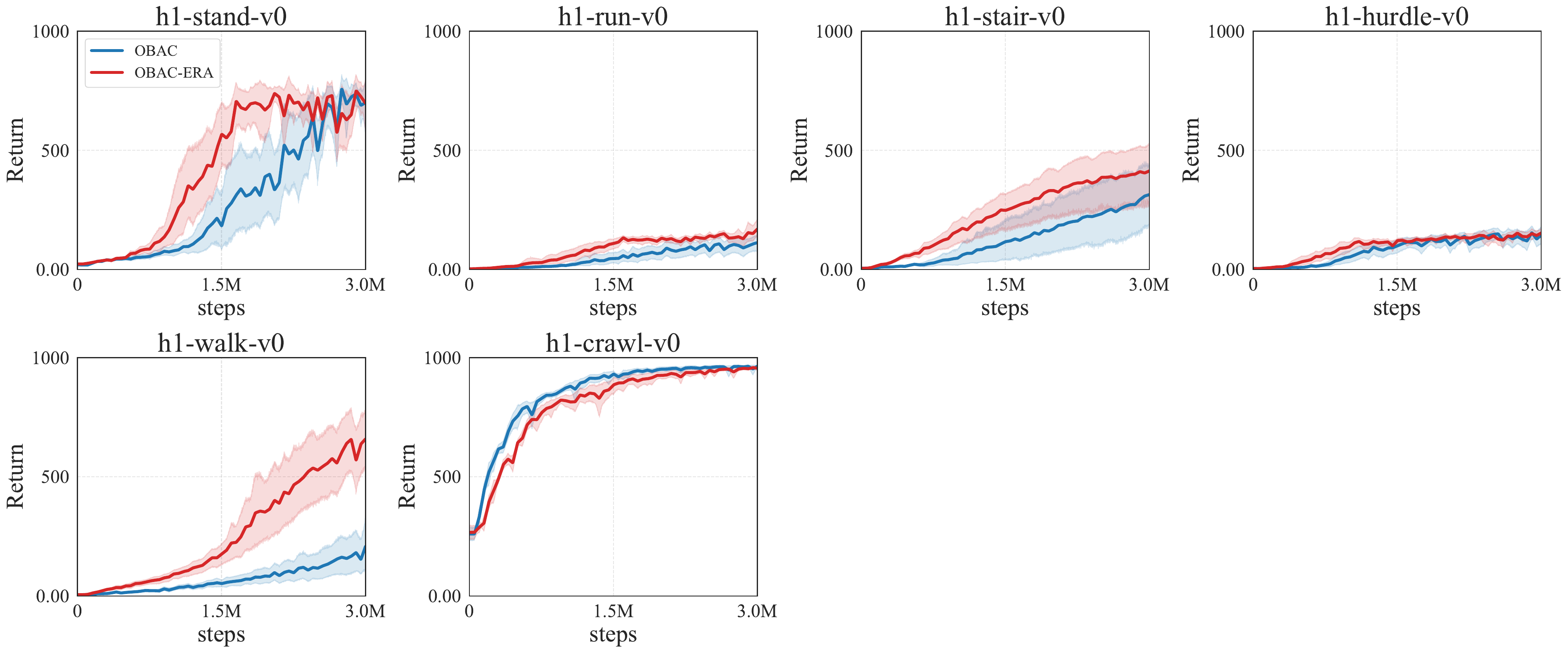}}
    \end{center}
    \vskip -0.25in
    \caption{Training curves of OBAC and OBAC-\short\ on HumanoidBench environments.}
    %\vskip -0.2in
    \label{fig:hb_obac}
\end{figure*}
\begin{figure*}[h]
    %\vskip 0.2in
    \begin{center}
    \centerline{\includegraphics[width=0.96\textwidth]{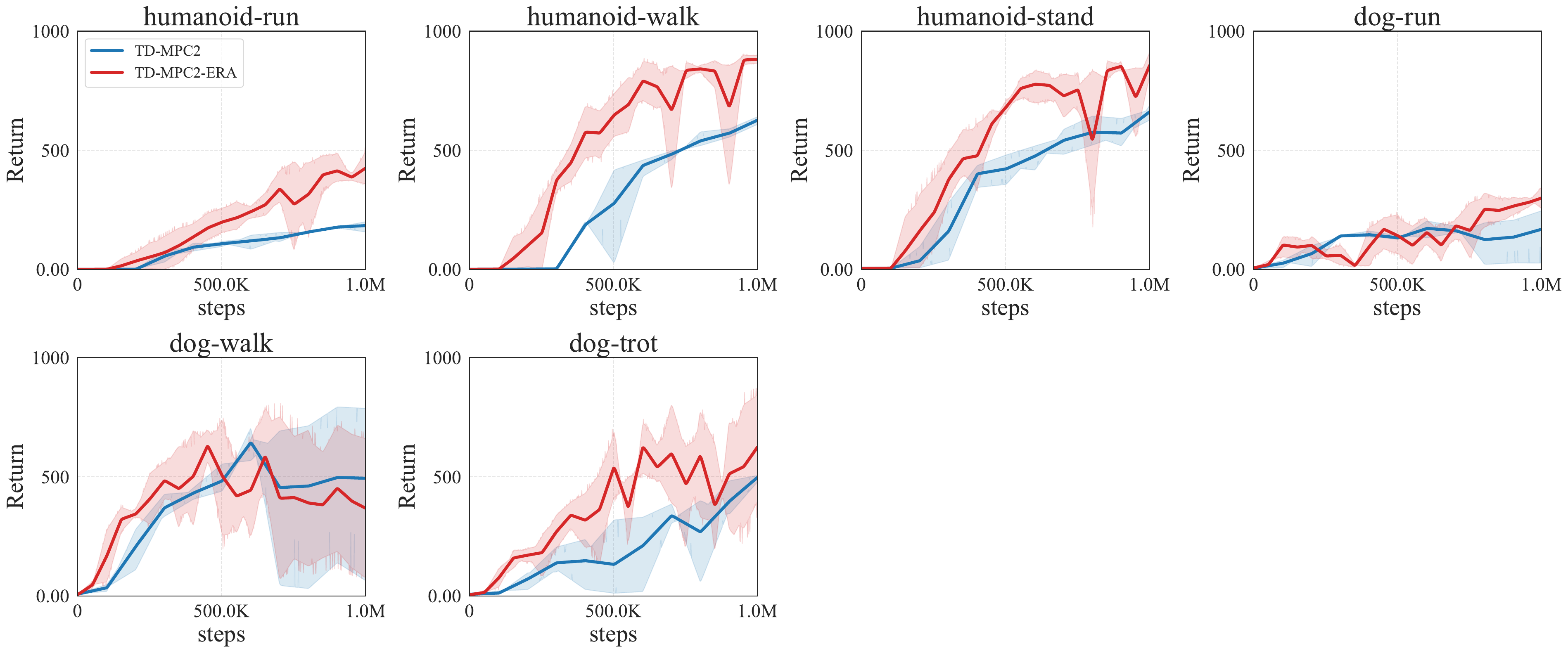}}
    \end{center}
    \vskip -0.25in
    \caption{Training curves of TD-MPC2 and TD-MPC2-\short\ on DMC environments.}
    %\vskip -0.2in
    \label{fig:dmc_tdmpc2}
\end{figure*}
\begin{figure*}[t]
    %\vskip 0.2in
    \begin{center}
    \centerline{\includegraphics[width=0.96\textwidth]{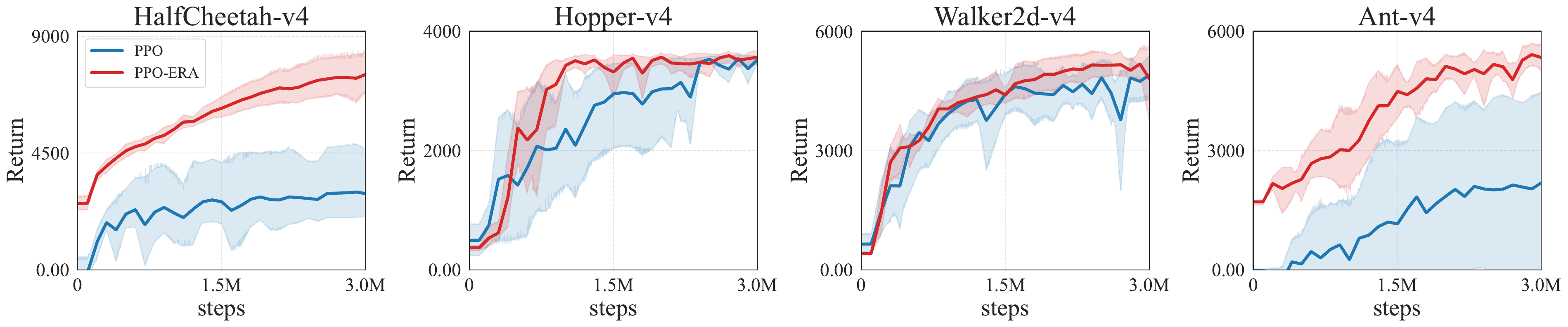}}
    \end{center}
    \vskip -0.25in
    \caption{Training curves of PPO and PPO-\short\ on Mujoco Gym environments.}
    \vskip -0.2in
    \label{fig:gym_ppo}
\end{figure*}
\begin{figure*}[h]
    %\vskip 0.2in
    \begin{center}
    \centerline{\includegraphics[width=0.96\textwidth]{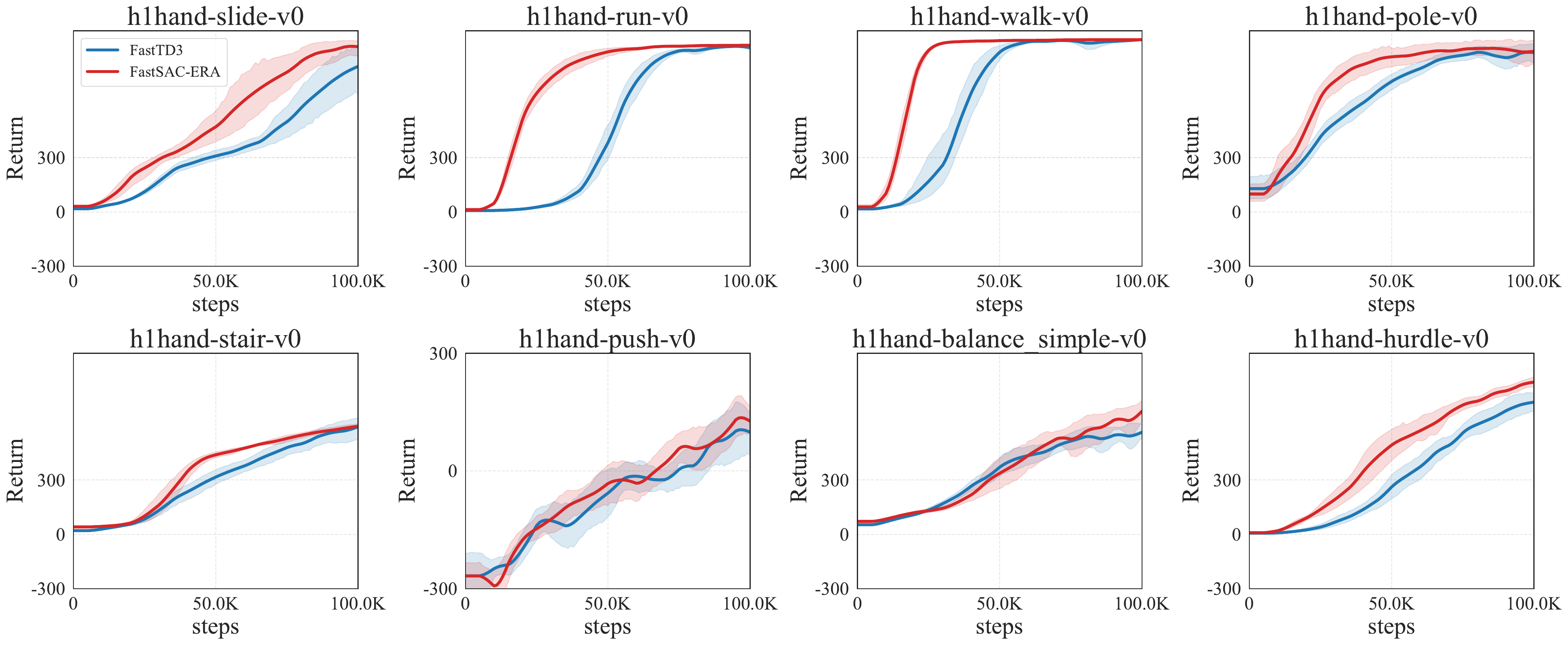}}
    \end{center}
    \vskip -0.25in
    \caption{Training curves of FastTD3 and FastSAC-\short\ on HumanoidBench environments.}
    %\vskip -0.2in
    \label{fig:hb_fastsac}
\end{figure*}
\begin{figure*}[h]
    %\vskip 0.2in
    \begin{center}
    \centerline{\includegraphics[width=0.96\textwidth]{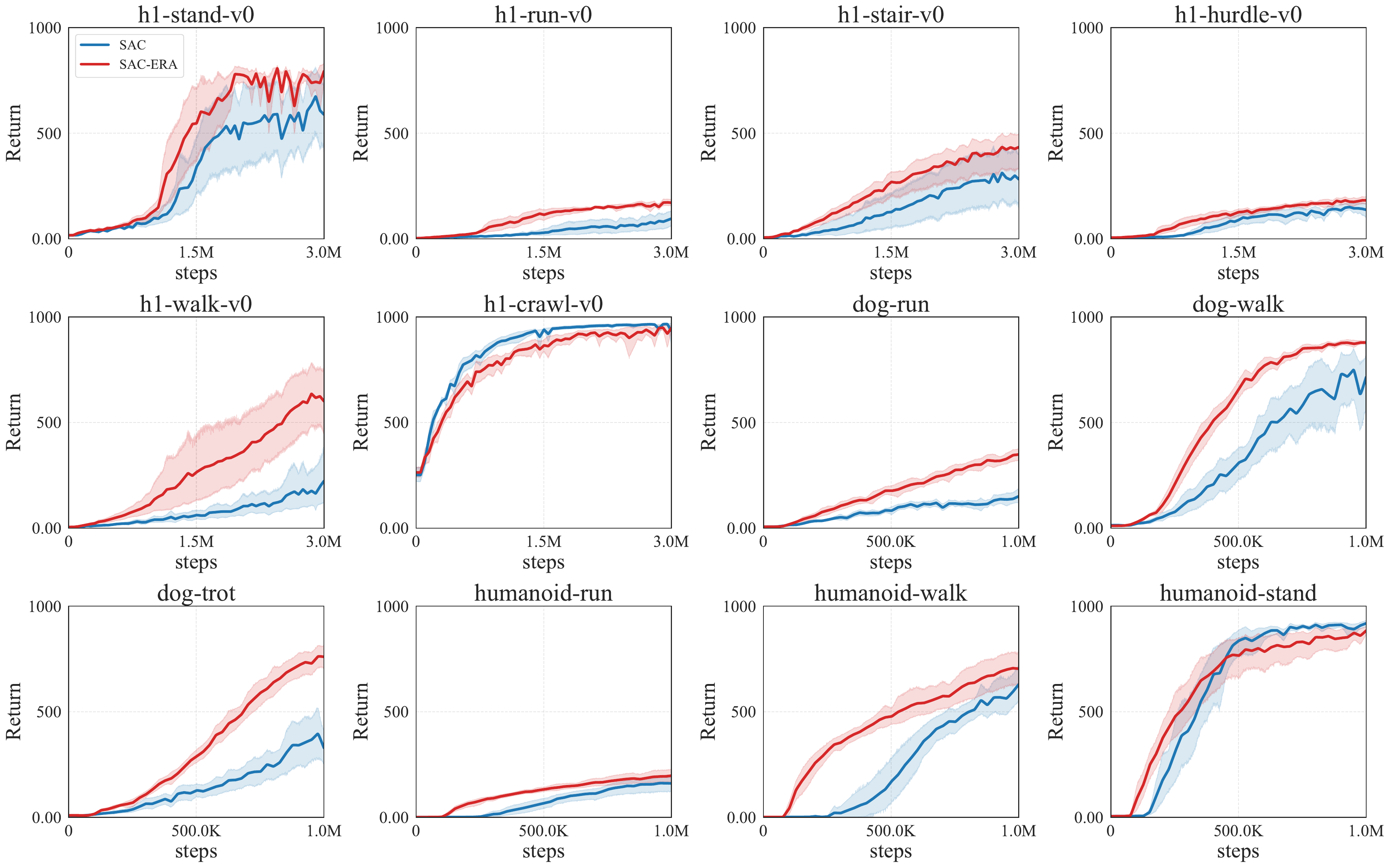}}
    \end{center}
    %\vskip -0.25in
    \caption{Training curves of SAC and SAC-\short\ on HumanoidBench and DMC environments.}
    %\vskip -0.2in
    \label{fig:hb_sac}
\end{figure*}
\section{The Use of Large Language Models in This Paper}
\label{sec:llm_usage}

In the preparation of this paper, we utilized LLMs as a general-purpose writing assistance tool. Specifically, LLMs were employed for proofreading and polishing the language of certain sections to improve clarity and readability. The final title of this paper was also partially inspired by suggestions from an LLM.

However, we clarify that the core contributions of this work were conceived and developed entirely by the human authors. The design of the methodology, the execution of experiments, and the interpretation of the results did not involve the use of LLMs. All content, including text, figures, and tables, was carefully reviewed, edited, and verified by the authors to ensure scientific accuracy and integrity.

Finally, we would like to express our gratitude for the occasional sparks of inspiration and the assistance in debugging code provided by our LLM friends. Their contribution, while not qualifying for co-authorship, was nonetheless appreciated.

\end{document}